\newcount\Comments
\Comments=1  

\documentclass{article}

\usepackage{color}
\definecolor{darkgreen}{rgb}{0,0.5,0}
\definecolor{purple}{rgb}{1,0,1}
\newcommand{\kibitz}[2]{\ifnum\Comments=1\textcolor{#1}{#2}\fi}

\PassOptionsToPackage{numbers, compress}{natbib}

\usepackage[final]{neurips_2022}




\usepackage[utf8]{inputenc} 
\usepackage[T1]{fontenc}    
\usepackage{hyperref}       
\usepackage{url}            
\usepackage{booktabs}       
\usepackage{amsfonts}       
\usepackage{nicefrac}       
\usepackage{microtype}      
\usepackage{xcolor}         
\bibliographystyle{abbrvnat}
\usepackage{dirtytalk}

\usepackage{amsmath}
\usepackage{amsthm}
\usepackage{bbm}
\usepackage{amssymb}
\DeclareMathOperator*{\argmax}{arg\,max}
\DeclareMathOperator*{\argmin}{arg\,min}
\DeclareMathOperator*{\E}{\mathbb{E}}
\usepackage{mathtools}

\usepackage[linesnumbered,ruled,vlined]{algorithm2e}

\newtheorem{theorem}{Theorem}
\newtheorem{proposition}[theorem]{Proposition}
\newtheorem{definition}{Definition}
\newtheorem{lemma}[theorem]{Lemma}
\newtheorem{corollary}[theorem]{Corollary}
\theoremstyle{remark}

\title{Online Agnostic Multiclass Boosting}

%

\author{%
  Vinod Raman\\
  Department of Statistics\\
  University of Michigan\\
  Ann Arbor, MI 48104 \\
  \texttt{vkraman@umich.edu} \\
   \And
   Ambuj Tewari \\
   Department of Statistics \\
   University of Michigan \\
   Ann Arbor, MI 48018\\
   \texttt{tewaria@umich.edu} \\
}

\begin{document}

\maketitle

\begin{abstract}
Boosting is a fundamental approach in machine learning that enjoys both strong theoretical and practical guarantees. At a high-level, boosting algorithms cleverly aggregate weak learners to generate predictions with arbitrarily high accuracy. In this way, boosting algorithms convert weak learners into strong ones. Recently,  \citet{brukhim2020online}  extended boosting to the online agnostic binary classification setting. A key ingredient in their approach is a clean and simple reduction to online convex optimization, one that efficiently converts an arbitrary online convex optimizer to an agnostic online booster. In this work, we extend this reduction to multiclass problems and give the first boosting algorithm for online agnostic mutliclass classification.  Our reduction also enables the construction of algorithms for statistical agnostic, online realizable, and statistical realizable multiclass boosting. 
\end{abstract}

\section{Introduction}

\textit{Boosting} is a fundamental technique in machine learning that cleverly aggregates the predictions of weak learners to produce a strong learner. Originally studied in the batch (realizable) PAC learning setting for binary classification, boosting has now been extended to a wide variety of settings, including multiclass classification, online learning, and agnostic learning. Recently, \citet{brukhim2020online} extended boosting to online agnostic binary classification, marking the completion of boosting algorithms for all four regimes of statistical/online and agnostic/realizable binary classification. However, less can be said about multiclass classification, where boosting algorithms are only well studied under the assumption of realizability.  Designing  online agnostic multiclass boosting algorithms is important for several reasons. First, realizability is a very strong assumption in real-life: it requires the existence of an expert that \textit{perfectly} labels the data. 
Second, the vast majority of classification tasks require more than two labels, a prominent example being image classification. Lastly, modern machine learning tasks often require sequential processing of data, making the design and development of online algorithms increasingly relevant. In this work we fill this gap in literature by studying online agnostic boosting for multiclass problems. 

\subsection{Main Results}
  We give the first weak learning conditions and algorithms for online agnostic multiclass boosting. A key idea of our algorithms is a reduction from boosting to online convex optimization (OCO), an idea borrowed from \citet{brukhim2020online}. As a consequence of this reduction, we are also able to give algorithms for the three other settings of statistical agnostic,  online realizable, and statistical realizable multiclass boosting. 
  Finally, we give empirical results showcasing that our OCO-based boosting algorithms are fast and competitive with existing state-of-the-art multiclass boosting algorithms. 
\subsection{Related Works}

 Boosting was first studied for binary classification under the realizable PAC learning setting \cite{schapire2013boosting,freund1995boosting,freund1997decision,freund1999adaptive,schapire1990strength} and then later extended to the agnostic PAC learning setting  \cite{feldman2009distribution,gavinsky2003optimally,kanade2009potential,kalai2008agnostic,kalai2005boosting, mansour2002boosting,ben2001agnostic}. The success of boosting for binary classification led to significant interest in designing boosting algorithms for multiclass problems. As a result, several multiclass boosting algorithms were proposed for the realizable batch setting \cite{freund1996experiments,eibl2005multiclass,hastie2009multi,freund1999short}, culminating in the work by \citet{mukherjee2013theory}, who unified the previous approaches under a cost matrix framework.  Beyond the batch setting, online boosting algorithms for binary \cite{oza2001online, chen2012online, beygelzimer2015optimal} and multiclass classification \cite{chen2014boosting, jung2017online} have been designed assuming realizability (mistake-bound). More recently, \citet{brukhim2020online} give the first online \textit{agnostic} (regret-bound) boosting algorithm for binary classification, marking complete all four regimes of statistical/online and agnostic/realizable boosting for {\em binary classification}.
 
Moving to agnostic {\em multiclass} boosting,  \citet{brukhim2021multiclass} study the resources required for boosting in the statistical/batch setting as the number of labels $k$ grows. However, they consider an alternative model of boosting where the weak learner is a strong agnostic PAC learner for a simple "easy-to-learn" base hypothesis class, and the goal is to learn target concepts outside the base class. Specifically, they assume the target concept can be represented by weighted plurality votes over the base class. In this way, the weakness of a weak learner is manifested in the base hypothesis class. Instead, in our work, we consider the standard boosting model of fixing the base hypothesis class, and ask whether a weak learner's  performance can be improved relative to the best fixed hypothesis in that class. 

Beyond classification, several other works have studied online agnostic boosting for real-valued loss functions under both full-information and bandit feedback settings \cite{beygelzimer2015online, agarwal2020boosting, brukhim2021online}. In particular, the work by \citet{brukhim2021online} reduce online boosting for regression tasks under bandit feedback to online linear optimization. A key difference between these works and ours is in the weak learning assumption: these works consider a weak learner that is a strong learner for a small base class of regression functions. The goal of boosting then is to produce a strong online learner for a larger class which contains linear spans of the base class. This is in contrast to this work, where again, we fix the base hypothesis class, and boost the regret bound.


\section{Preliminaries and Notation}
\label{sec:prelim}
We first describe the basic setup for online agnostic multiclass boosting. There are $k$ possible labels $\mathcal{B}_k := \{1,..., k\}$ and $k$ is known to the booster and the weak learners.  The booster maintains $N$ copies of a weak learner, $\mathcal{W}$, which themselves are (randomized) online learning algorithms that sequentially process examples from instance space $\mathcal{X}$ and output predictions in $\mathcal{B}_k$, which we denote as the set of basis vectors of length $k$. At each iteration $t = 1,..., T$, an adversary picks a labeled example $(x_t, y_t) \in \mathcal{X} \times \mathcal{B}_k$ and reveals $x_t$ to the booster. Once the booster observes the unlabeled data $x_t$, it gathers the weak learners’ predictions and makes a final (possibly randomized) prediction $\hat{y_t} \in \mathcal{B}_k$. After observing the booster’s final decision, the adversary reveals the true label $y_t$, and the booster suffers the loss $\mathbbm{1}\{\hat{y}_t \neq y_t\}$ or equivalently, \textit{gains} the reward $2\mathbbm{1}\{\hat{y}_t = y_t\} - 1$. Finally, the booster, after observing the true label $y_t$, updates each weak learner. 
Note that the loss/gain of the booster in round $t$ can be written as $1 - \hat{y}_t \cdot y_t$ and $2\hat{y}_t \cdot y_t - 1$ respectively. We also let $\Delta_k$ represent the $(k-1)$-dimensional probability simplex and $\Delta_{\frac{k}{\gamma}}$ represent the $\frac{1}{\gamma}$-scaled $(k-1)$-dimensional probability simplex for $\gamma \in (0, 1)$. Finally, we let $\mathbbm{1}_k$ denote the $k$-dimensional ones vector.

\textbf{Evaluation}. Unlike the realizable setting, in the agnostic setting, we place no restrictions on how the stream of examples $x_1, ..., x_T$ are labelled. Thus, for a fixed hypothesis class $\mathcal{H} \subseteq \mathcal{B}_k^{\mathcal{X}}$, the goal of the booster is to output predictions $\hat{y_t}$ such that the expected \textit{regret},
$$\E\left[\max_{h \in \mathcal{H}}\sum_{t=1}^T (2h(x_t)\cdot y_t - 1) - \sum_{t=1}^T (2\hat{y}_t\cdot y_t - 1)\right],$$
is minimized, where the expectation is over the randomness of the booster and that of the possibly adaptive adversary. Note, this is in contrast to the realizable setting where the stream is labelled by a $h \in \mathcal{H}$ and we wish the booster to minimize the (expected) number of \textit{mistakes} (mistake-bound). 

\textbf{Agnostic Boosting.} A key technique in agnostic boosting, first appearing in the work by \citet{kanade2009potential}, is to update weak learners by feeding randomly \textit{relabelled} examples. This is in contrast to the realizable setting where we typically update weak learners by passing \textit{reweighted} examples. Accordingly, in order to design a good boosting algorithm, we need to design the booster’s strategy for random relabelling while also quantifying the weak learner’s ability to maximize cumulative gain, even under relabelled data. The first task will be resolved by allowing the booster to use an Online Convex Optimization (OCO) oracle. In this way, we reduce boosting to OCO, an idea borrowed from \citet{brukhim2020online}. For the second task, we give different possible weak learning conditions for the \textit{same} algorithm, all of which capture the ability of a weak learner to maximize cumulative gain with respect to the best fixed competitor in hindsight.

\textbf{Online Convex Optimization.} Our booster will use an OCO oracle to update its weak learners. The OCO setting is a sequential game between an online player and adversary over $N$ rounds (see \cite{hazan2016introduction} for an in-depth introduction). In each round, the player plays a point $x_i$ in a compact convex set $\mathcal{K} \subset \mathbbm{R}^d$, the adversary reveals a loss function $f_i$ chosen from a family of bounded convex functions over $\mathcal{K}$, and the player suffers the loss $f_i(x_i)$. The goal of player is to also minimize \textit{regret}, defined as
$$R(N) = \sum_{i=1}^{N} f_i(x_i) - \min_{x \in \mathcal{K}}\sum_{i=1}^{N}f_i(x).$$
 We will denote an algorithm in this setting as $\text{OCO}(\mathcal{K}, N)$. If $\mathcal{A}$ is a $\text{OCO}(\mathcal{K}, N)$, then we will denote its regret by $R_{\mathcal{A}}(N)$. For many OCO algorithms, like Online Gradient Descent (OGD), the regret $R_{\mathcal{A}}(N)$ is a sub-linear function of the time-horizon $N$.

\section{Online Agnostic Boosting}
\label{sec:oag}
In this section, we present an online agnostic boosting algorithm  and analyze its performance. We begin in Subsection \ref{sec:oag-alg} by formally describing our algorithm which reduces boosting to OCO. Then, in Subsection \ref{sec:oag-reg}, we state a weak learning condition and subsequently prove a regret bound for our proposed algorithm. 

\subsection{Algorithm}
\label{sec:oag-alg}
 Pseudocode for our online agnostic boosting algorithm is provided in Algorithm \ref{OnlineAgnosticBoostFull}. The booster maintains oracle access to $N$ copies of a weak learner $\mathcal{W}_1, ..., \mathcal{W}_N$  as well as a $\text{OCO}(\Delta_k, N)$ algorithm $\mathcal{A}$. Each weak learner is characterized by some advantage parameter $\gamma \in (0, 1]$ (directly proportional to its strength), and we will be precise about what exactly $\gamma$ quantifies in Subsection \ref{sec:oag-reg}. In round $t$, the booster uses the weak learners to make a prediction $\hat{y}_t$, observes the true label $y_t$, and finally simulates a game with the OCO algorithm $\mathcal{A}$ to update each weak learner $\mathcal{W}_i$. Specifically, the booster uses the outputs of $\mathcal{A}$ to feed relabelled examples $(x_t, y_t^i)$ to $\mathcal{W}_i$. A critical component of our algorithm is the $L_2$ projection operator onto the probability simplex $\Delta_k$, which we denote by $\prod$. This is used by the booster in line 4 to make randomized predictions $\hat{y}_t$. 
 
 We now highlight some desirable properties of Algorithm \ref{OnlineAgnosticBoostFull}. First, it is easy to implement and efficient assuming access to an efficient weak learner. In particular, for each round $t$, if the running time of the weak learner is $Q$, then the running time of our booster is $O(NQ + Nk\log(k))$. This is in contrast to the OnlineMBBM algorithm proposed by \citet{jung2017online} for the realizable setting which is not efficient even assuming access to an efficient weak learner. Second, when $k = 2$, our algorithm reduces down to the online agnostic boosting algorithm proposed by \citet{brukhim2020online} in the binary case. Indeed, one can verify that when $k=2$, the $L_2$ projection operator reduces to the same projection used by \citet{brukhim2020online} and the distribution over labels $p_t^i$ induced by the outputs of OCO algorithm are equal for each weak learner in every round $t$. 
\begin{algorithm}
\label{OnlineAgnosticBoostFull}
\KwIn{Weak Learners $\mathcal{W}_1...\mathcal{W}_N$, OCO($\Delta_k$, $N$) algorithm $\mathcal{A}$, Advantage parameter $\gamma$}

\For{$t = 1,...,T$} {

    Receive example $x_t$ \\
    Accumulate weak predictions $h_t = \sum_{i=1}^N\mathcal{W}_i(x_t)$\\
    Set $\mathcal{D}_t = \prod(\frac{h_t}{\gamma N})$\\
    Predict $\hat{y_t} \sim \mathcal{D}_t$ \\
    Receive true label $y_t$ \\
    \For{$i = 1,...,N$} {
        If $i > 1$, obtain $p_t^i = \mathcal{A}(l_t^1,...,l_t^{i-1})$. Else, initialize $p^1_t = \frac{\mathbbm{1}_k}{k}$. \\
        Reveal loss function: $l_t^i(p) = p\cdot \left(\frac{2\mathcal{W}_i(x_t) - \mathbbm{1}_k}{\gamma} - (2y_t - \mathbbm{1}_k)\right)$ \\
        Sample random label $y_t^i \sim p^i_t$, and pass $(x_t, y_t^i)$ to $\mathcal{W}_i$\\
    }
    
    Reset $\mathcal{A}$
}

\caption{Online Agnostic Multiclass Boosting via OCO}
\end{algorithm}

 While the framework in Algorithm \ref{OnlineAgnosticBoostFull} is inspired from \citet{brukhim2020online} in the binary setting, several new complications arise in the multiclass setting.  Most notably, when $k > 2$, we must figure out how the booster should make predictions, what loss functions the booster should construct and pass to $\mathcal{A}$, and lastly, how the booster should use the output of $\mathcal{A}$ to update each weak learner. The interplay behind these three algorithmic pieces is delicate and they have been carefully designed in Algorithm $\ref{OnlineAgnosticBoostFull}$ to enable the analysis in Subsection \ref{sec:oag-reg}. Below, we provide some intuition behind these algorithmic decisions.
 
 \textbf{Randomized Prediction.} At the start of each round, the booster averages the weak learners votes, scales the average by the parameter $\gamma$, projects the scaled vector back into the simplex, and finally samples a random label. When the $L_2$ projection operator is selected, this approach for randomized prediction achieves a \textit{polarization} effect: as $\gamma$ gets smaller, the projection concentrates mass on a fewer number of labels, specifically those labels that have achieved the majority of the votes from the weak learners. When $\gamma$ nears $0$, the $L_2$ projection eventually places all mass on the label with the most votes.

One might think that the $L_2$ projection is not a natural projection operator for $\Delta_k$ (as KL or $L_1$ projections might seem better suited to the geometry of the probability simplex). However, in the case where we project from $\Delta_{\frac{k}{\gamma}}$ to $\Delta_k$, we find that it is a natural choice from a geometric perspective. Figure \ref{fig1} provides a visualization of the $L_2$ projection operator from $\Delta_{\frac{k}{\gamma}}$ to $\Delta_k$ for $k = 2$. Because the spaces $\Delta_{\frac{k}{\gamma}}$ and $\Delta_{k}$ are parallel, given any point $p \in \Delta_{\frac{k}{\gamma}}$, one can think of its $L_2$ projection in the following procedural manner: 
\begin{enumerate}
    \itemsep0em 
    \item Compute the \textit{orthogonal} projection, $\tilde{p}$, onto the plane containing $\Delta_k$.
    \item If $\tilde{p} \in \Delta_k$, output $p^* = \tilde{p}$.
    \item Else, output $p^* = \argmin_{p \in \Delta_k}||p - \tilde{p}||_2$.
\end{enumerate}
\begin{figure}
  \centering
  \includegraphics[width=9cm]{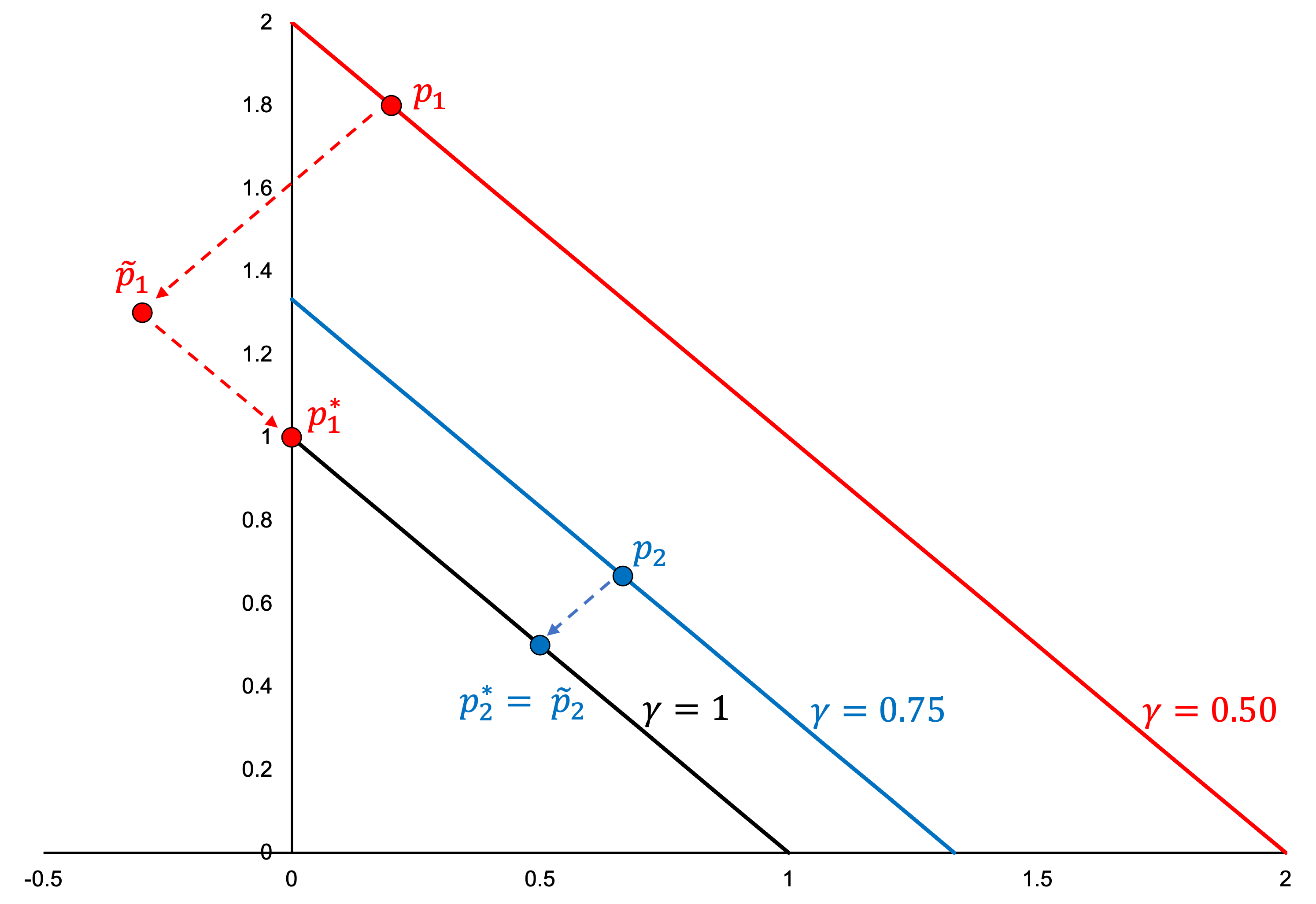}
  \caption{The red, blue, and black lines correspond to $\frac{\Delta_2}{\gamma}$ for $\gamma = 0.50, 0.75, \text{ and, } 1.0$ respectively. $p_1$ and $p_2$ are $\gamma$-scaled votes and $p^*_1$ and $p^*_2$ are their corresponding $L_2$ projection onto $\Delta_2$. $\tilde{p}_1$ and $\tilde{p}_2$ denote the orthogonal projections onto the plane containing $\Delta_2$.}
  \label{fig1}
\end{figure}
From this procedural perspective, Figure \ref{fig1} lends a geometric intuition behind the polarization effect of the $L_2$ projection. As $\gamma$ shrinks, the number of points in  $\Delta_{\frac{k}{\gamma}}$ that lie orthogonally above $\Delta_k$ shrinks. Thus, the orthogonal projection of the vast majority of points in  $\Delta_{\frac{k}{\gamma}}$ will lie outside $\Delta_k$, leading to a greater number of sparse projections that lie on the boundary of $\Delta_k$. From the booster's viewpoint, this property of the $L_2$ projection is desirable. If the weak learners are very weak (small $\gamma$) but somehow concentrate votes on a few labels, then it may be likely that the true response is amongst these few labels. In this sense, $\gamma$ controls how many weak learners need to agree on a particular label to convince the booster to deterministically predict that label. 

\textbf{Updating Weak Learners}. Once the true label $y_t$ is revealed in each round, the booster must update each weak learner. As mentioned in Section \ref{sec:prelim}, one strategy for updating weak learners in the agnostic setting is via \textit{random relabelling}. Indeed, in line 10, the booster passes back to the weak learner the example $x_t$ with a random label $y_t^i \sim p_t^i$. Together, the specified loss function in line 9 and random relabelling strategy in line 10 achieve the following effect: if more weak learners make mistakes, the distribution over labels output by the OCO algorithm $\mathcal{A}$ in line 8 concentrates on the true label $y_t$, increasing the likelihood that $y_t$ is passed to subsequent weak learners. This is desirable as the outputs of the OCO guide each weak learner to correct for the mistakes of preceding learners. 

\subsection{Regret Analysis}
\label{sec:oag-reg}
Before we give the regret bound of Algorithm \ref{OnlineAgnosticBoostFull}, we need to specify the capacity to which the weak learners can make predictions, even under potentially relabelled data. Unfortunately, in the agnostic setting, there is no canonical weak learning condition for multiclass problems. In this paper, we give \textit{several} possible definitions of an Agnostic Weak Online Learner (AWOL) for the multiclass setting, all of which enable a regret analysis of Algorithm \ref{OnlineAgnosticBoostFull}. We emphasize that we can derive different regret bounds for Algorithm \ref{OnlineAgnosticBoostFull} based on what condition we assume our weak learners to satisfy. For this section, we  present a weak learning condition based loosely on a \textit{one-vs-one} approach to multiclass classification. In Appendix \ref{AppendixAltWL}, we provide alternative weak learning conditions.

Define the gain function for input $z \in \mathcal{B}_k$,
 $$\sigma_{y, \ell}(z) = \mathbbm{1}\{z = y\} - \mathbbm{1}\{z = \ell\} = z \cdot (y - \ell).$$
 For some $y \in \mathcal{B}_k$ and $\ell \in \mathcal{B}_k \setminus \{y\}$, $\sigma_{y, \ell}(\cdot)$ can be thought of as the binary classification task between labels $y$ and $\ell$.  Definition \ref{WLC} requires that for any such sequence of binary classification tasks, an online agnostic weak learner must be able to eventually distinguish between every pair of labels to some non-trivial, but far from optimal, degree.

 \begin{definition}[Agnostic Weak Online Learning]
 \label{WLC}
 Let $\mathcal{H} \subseteq \mathcal{B}_k^{\mathcal{X}}$ be a class of experts and let $0 < \gamma \leq 1$ denote the \say{advantage}.  An online learning algorithm $\mathcal{W}$ is a $(\gamma, T)$-agnostic weak online learner (\textbf{AWOL}) for $\mathcal{H}$ if for any adaptively chosen sequence of tuples $(x_t, y_t, \ell_t) \in \mathcal{X} \times \mathcal{B}_k \times\mathcal{B}_k $ where $\ell_t \neq y_t$,  the algorithm outputs $\mathcal{W}(x_t) \in \mathcal{B}_k$ at every iteration $t \in [T]$ such that,
  \[\gamma \max_{h \in \mathcal{H}} \E\left[ \sum_{t = 1}^T \sigma_{y_t, \ell_t}(h(x_t))\right]   -  \E\left[\sum_{t=1}^T \sigma_{y_t, \ell_t}(\mathcal{W}(x_t))\right] \leq R_\mathcal{W}(T,k),\]
   where the expectation is taken w.r.t. the randomness of the weak learner $\mathcal{W}$ and that of the possibly adaptive adversary, and  $R_\mathcal{W} : \mathbbm{N} \times \mathbbm{N}  \rightarrow \mathbbm{R}_+$ is the additive regret: a non-decreasing, sub-linear function of $T$.
  \end{definition}
  
   The precise dependence of $R_{\mathcal{W}}(T,k)$ on $k$ is explored in more detail in Appendix \ref{deponk} where we explicitly construct learners satisfying AWOL. We make few remarks about Definition \ref{WLC}. First, the strength of the weak learner varies directly with $\gamma$. Second, for $k = 2$, Definition \ref{WLC} reduces to the weak learning condition by \citet{brukhim2020online} in the binary setting. Finally, we emphasize that Definition \ref{WLC} holds under an \textit{adaptive} adversary, one that can choose $(y_t, \ell_t)$ based on $\{\mathcal{W}(x_i)\}_{i=1}^{t-1}$ and its own internal random bits. Importantly, we also allow the adversary to pick $\ell_t$ even after it has observed $\mathcal{W}(x_t)$ as this only controls how much loss the weak learner suffers when it is incorrect. 
 
  Under the assumption that weak learners satisfy Definition \ref{WLC}, Theorem \ref{thm1} bounds the regret of Algorithm \ref{OnlineAgnosticBoostFull} under an \textit{oblivious} adversary. Using a standard reduction, our results can then be generalized to an \textit{adaptive} adversary (see Chapter 4 in \cite{10.5555/1137817}). In particular, a key requirement allowing an oblivious regret bound to generalize to an adaptive regret bound is that the learner’s predictions on round $t$ should not depend on any of its past predictions from previous rounds. This is indeed true for our Booster. 
  
    \begin{theorem}[Regret Bound]
    \label{thm1}
    Assuming weak learners satisfy Definition \ref{WLC}, the expected regret bound of Algorithm \ref{OnlineAgnosticBoostFull} is,
    \[\frac{1}{T}\E\left[\max_{h \in \mathcal{H}}\sum_{t=1}^{T} \left(2h(x_t) \cdot y_t - 1\right)  - \sum_{t=1}^{T}(2\hat{y}_t \cdot y_t - 1)\right] \leq \frac{R_{\mathcal{W}}(T, k)}{\gamma T} + \frac{R_{\mathcal{A}}(N)}{N},\]
    where the expectation is over the randomness of the algorithm and weak learners, and $R_{\mathcal{W}}(T, k)$, $R_{\mathcal{A}}(N)$ are the regret terms of the AWOL, OCO algorithms respectively. 
    \end{theorem}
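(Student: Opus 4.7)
The plan is to combine three ingredients: the regret guarantee of $\mathcal{A}$, the AWOL condition satisfied by each $\mathcal{W}_i$, and the variational characterization of the $L_2$-projection $\mathcal{D}_t = \prod(v_t)$, where $v_t := h_t/(\gamma N)$. Since the adversary is oblivious I may fix the sequence $(x_t, y_t)_{t \le T}$ up front and take $h^\star \in \argmax_{h \in \mathcal{H}} \sum_t h(x_t) \cdot y_t$ to be deterministic. Conditioning on the internal randomness of $\hat{y}_t \sim \mathcal{D}_t$ gives $\E[\hat{y}_t \mid \text{history}_t] = \mathcal{D}_t$, so the expected regret equals $2\,\E\!\left[\sum_t (h^\star(x_t) - \mathcal{D}_t) \cdot y_t\right]$; the goal is therefore to bound this quantity by $R_\mathcal{W}(T,k)/(2\gamma) + T R_\mathcal{A}(N)/(2N)$.

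First I would invoke $\mathcal{A}$'s regret in round $t$ with competitor $p = h^\star(x_t) \in \Delta_k$. Expanding $l_t^i(p) = (2/\gamma)\, p \cdot \mathcal{W}_i(x_t) - 2\,p \cdot y_t + (1 - 1/\gamma)$ and cancelling the constant $(1-1/\gamma)$ between $p = p_t^i$ and $p = h^\star(x_t)$ yields, per round,
\[
N\, h^\star(x_t) \cdot (y_t - v_t) + \sum_i p_t^i \cdot \bigl(\mathcal{W}_i(x_t)/\gamma - y_t\bigr) \leq \tfrac{1}{2} R_\mathcal{A}(N).
\]
Second, for each weak learner $\mathcal{W}_i$ the algorithm supplies the relabelled stream $(x_t, y_t^i)$ with $y_t^i \sim p_t^i$. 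I would apply Definition~\ref{WLC} with the reference $\ell_t^i$ chosen in the analysis as $\ell_t^i = y_t$ whenever $y_t^i \ne y_t$, and an arbitrary fallback $\ne y_t$ otherwise. Since $y_t^i$ is independent of $\mathcal{W}_i(x_t)$ and $h^\star(x_t)$ given the history, a direct computation gives $\E_{y_t^i \sim p_t^i}[\sigma_{y_t^i,\ell_t^i}(z)] = (p_t^i - y_t) \cdot z$ up to an $O(p_t^i(y_t))$ correction arising from the edge case, so the AWOL inequality becomes, for every $i$,
\[
\gamma\, \E\!\left[\sum_t (p_t^i - y_t) \cdot h^\star(x_t)\right] - \E\!\left[\sum_t (p_t^i - y_t) \cdot \mathcal{W}_i(x_t)\right] \leq R_\mathcal{W}(T,k).
\]
Third, the variational projection inequality $\langle v_t - \mathcal{D}_t, q - \mathcal{D}_t \rangle \leq 0$ for every $q \in \Delta_k$, applied at $q = y_t$, gives the pointwise bound $(h^\star(x_t) - \mathcal{D}_t) \cdot y_t \leq (h^\star(x_t) - v_t) \cdot y_t + (v_t - \mathcal{D}_t) \cdot \mathcal{D}_t$; applied a second time at $q = h^\star(x_t)$ it provides the compensating control on the residual $(v_t - \mathcal{D}_t) \cdot \mathcal{D}_t$.

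Combining the pieces, I would sum the OCO inequality over $t$, sum the AWOL inequality over $i$, take expectations, and plug the result into the projection-based decomposition of $(h^\star(x_t) - \mathcal{D}_t) \cdot y_t$. The terms $\sum_i p_t^i \cdot \mathcal{W}_i(x_t)/\gamma$ appearing on both sides telescope, and the two projection inequalities conspire so that the residual is absorbed, leaving the desired bound $\E[\sum_t(h^\star(x_t) - \mathcal{D}_t) \cdot y_t] \leq R_\mathcal{W}(T,k)/(2\gamma) + T R_\mathcal{A}(N)/(2N)$. The main obstacle will be the projection residual $(v_t - \mathcal{D}_t) \cdot \mathcal{D}_t$: because $v_t \in \Delta_{k/\gamma}$ sits outside $\Delta_k$, this residual can a priori be as large as $(1-\gamma)/\gamma$ per round, and its cancellation only becomes visible after carefully aligning the OCO competitor term with the second projection inequality at $q = h^\star(x_t)$. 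The $y_t^i = y_t$ fallback in the AWOL step contributes only a second-order term bounded by $\E[\sum_t p_t^i(y_t)]$ that is dominated by the main bound. The extension to adaptive adversaries then follows from the standard reduction noted just above the theorem statement.
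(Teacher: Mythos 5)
Your high-level architecture (lower/upper bound the cumulative OCO losses via the AWOL condition and the OCO regret, respectively) matches the paper's, but two of your key technical choices create genuine gaps. First, your choice of reference label $\ell_t^i = y_t$ in the AWOL step does not linearize cleanly. With that choice, $\E_{y_t^i \sim p_t^i}[\sigma_{y_t^i,\ell_t^i}(z)] = (p_t^i - y_t)\cdot z + p_t^i(y_t)\bigl(\mathbbm{1}\{z = y_t\} - \mathbbm{1}\{z = \text{fallback}\}\bigr)$, and the correction term appears with different coefficients on the competitor side (scaled by $\gamma$) and the weak-learner side; it does not cancel and is bounded only by $\E[\sum_t p_t^i(y_t)]$, which can be $\Theta(T)$ — precisely because the algorithm is designed to concentrate $p_t^i$ on $y_t$ when earlier weak learners err. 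Calling this "second-order" is not justified. The paper instead sets $\ell_t^i = \mathcal{W}_i(x_t)$ whenever $\mathcal{W}_i(x_t) \neq y_t^i$, which makes $\sigma_{y_t^i,\ell_t^i}(\mathcal{W}_i(x_t)) = 2\,\mathcal{W}_i(x_t)\cdot y_t^i - 1$ an exact identity, so that (after the conditional-independence step $\E[y_t^i\cdot\mathcal{W}_i(x_t)] = \E[p_t^i\cdot\mathcal{W}_i(x_t)]$) the weak learner's gain coincides exactly with the first term of the OCO loss; the competitor side then only needs the one-sided bound $\sigma_{y_t^i,\ell_t^i}(h^*(x_t)) \geq 2h^*(x_t)\cdot y_t^i - 1$. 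Relatedly, your linearized AWOL inequality carries the term $-y_t\cdot\mathcal{W}_i(x_t) = -\mathbbm{1}\{\mathcal{W}_i(x_t)=y_t\}$ while the OCO loss carries $-p_t^i\cdot y_t$; these do not telescope, and if you carry the algebra through, your combination leaves the uncontrolled quantity $\sum_{t,i} p_t^i\cdot(h^\star(x_t)-y_t)$ on the bounded side.

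Second, the projection step cannot be handled by the first-order variational inequality alone, and your use of it points the wrong way: $\langle v_t - \mathcal{D}_t, q - \mathcal{D}_t\rangle \le 0$ says $\mathcal{D}_t$ \emph{maximizes} $q \mapsto (v_t-\mathcal{D}_t)\cdot q$ over $\Delta_k$, so applying it at $q = h^\star(x_t)$ gives a \emph{lower} bound on the residual $(v_t-\mathcal{D}_t)\cdot\mathcal{D}_t$, whereas your decomposition needs an upper bound. The asserted "conspiracy" that absorbs a residual of size up to $(1-\gamma)/\gamma$ per round is exactly the hard part, and the paper resolves it differently: it never compares the OCO benchmark to $h^\star(x_t)$ at all, but instead proves (Lemma \ref{upperbound}, via the explicit threshold form of the simplex projection and a case analysis on the number of zeroed coordinates) that there \emph{exists} some $p^* \in \Delta_k$ — usually $p^*=y_t$, but in one case a different basis vector — with $\frac{1}{N}\sum_i l_t^i(p^*) \le 2(\Pi(v_t)\cdot y_t - 1)$. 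That existence statement is what ties the OCO comparator directly to the booster's projected prediction, and it is not recoverable from the variational inequality. As written, your proposal would not close; adopting the paper's choices of $\ell_t^i$ and of the OCO comparator $p^*$ is not a cosmetic change but the substance of the argument.
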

 
If one picks  $\mathcal{A}$ to be \textit{Online Gradient Descent} (OGD), then $R_{\mathcal{A}}(N) = O(GD \sqrt{N})$, where $D$ is the diameter of $\Delta_k$ and $G$ is the upper-bound on $||\nabla_pl_t^i(p)||$. In our setting, $D = \sqrt{2}$ and $G = O(\frac{1}{\gamma})$ (using Lemma \ref{equivalence}) and  hence the average regret further simplifies to: 
\[\frac{1}{T}\E\left[\max_{h \in \mathcal{H}}\sum_{t=1}^{T} \left(2h(x_t) \cdot y_t - 1\right)  - \sum_{t=1}^{T}(2\hat{y}_t \cdot y_t - 1)\right] \leq \frac{R_{\mathcal{W}}(T, k)}{\gamma T} + O\left(\frac{1}{\gamma\sqrt{N}}\right).\]
To exemplify the role of $N$, consider a scenario where  $R_{\mathcal{W}}(T, k) = O(k\sqrt{T})$ (see Appendix \ref{deponk} for examples). By setting $N = \frac{T}{\gamma^2}$  the overall regret of the booster becomes $O(\frac{k\sqrt{T}}{\gamma})$. In the next subsection, we give the proof of Theorem \ref{thm1}. 
  
\subsubsection{Proof of Theorem \ref{thm1}}
\label{proofthm1}
 We follow a similar procedure to \citet{brukhim2020online} by lower and upper bounding the expected sum of losses passed to $\mathcal{A}$ in terms of the regret of the weak learner and the regret of $\mathcal{A}$ respectively. These bounds rely on several lemmas that have been abstracted out and provided in Appendix \ref{AppendixLemmas}.  As mentioned previously, we also assume an oblivious adversary.

Let $(x_1, y_1), ..., (x_T, y_T)$ be any sequence of example-label pairs. We start by giving a lower bound on the expected sum of losses passed to $\mathcal{A}$ using Definition $\ref{WLC}$. Define  $h^* = \argmax_{h \in \mathcal{H}}\sum_{t=1}^T  (2h(x_t)\cdot y_t - 1)$ as the optimal competitor in hindsight and let 

$$\ell_t^i = \begin{cases}
      \mathcal{W}_i(x_t), & \text{if}\ \mathcal{W}_i(x_t) \neq y_t^i \\
      \ell \in \mathcal{B}_k \setminus \{y_t^i\}, & \text{otherwise}
    \end{cases}.$$
Note that the precise choice of $\ell_t^i$ in the second case is not important. The proof below only uses the fact that $\ell_t^i \in \mathcal{B}_k \setminus \{y_t^i\}$ in that case. We then have, 
    \begin{align*}
        \E\left[\sum_{i=1}^{N} \sum_{t=1}^{T} l_t^i(p_t^i)\right] &= \E\left[\sum_{i=1}^{N} \sum_{t=1}^{T} p_t^i \cdot \left(\frac{2\mathcal{W}_i(x_t) - \mathbbm{1}_k}{\gamma} - (2y_t - \mathbbm{1}_k) \right)\right] \\
        &= \frac{1}{\gamma}\sum_{i=1}^{N} \sum_{t=1}^{T}\E\left[p_t^i \cdot (2\mathcal{W}_i(x_t) - \mathbbm{1}_k)\right]   - \sum_{i=1}^{N} \sum_{t=1}^{T} \E\left[p_t^i \cdot (2y_t - \mathbbm{1}_k)\right] \\
        &= \frac{1}{\gamma}\sum_{i=1}^{N} \sum_{t=1}^{T}\E\left[ 2\mathcal{W}_i(x_t) \cdot y_t^i - 1\right]   - \sum_{i=1}^{N} \sum_{t=1}^{T} \E\left[p_t^i \cdot (2y_t - \mathbbm{1}_k)\right] && \text{(Lemma \ref{expectation})}\\
        &= \frac{1}{\gamma}\sum_{i=1}^{N} \sum_{t=1}^{T}\E\left[ \sigma_{y_t^i, \ell_t^i}(\mathcal{W}_i(x_t))\right]   - \sum_{i=1}^{N} \sum_{t=1}^{T} \E\left[p_t^i \cdot (2y_t - \mathbbm{1}_k)\right].
    \end{align*}
    
Using the weak learning condition in Definition \ref{WLC},

    \begin{align*}
        \frac{1}{\gamma}\sum_{i=1}^{N} \sum_{t=1}^{T}\E\left[ \sigma_{y_t^i, \ell_t^i}(\mathcal{W}_i(x_t))\right] &\geq \sum_{i=1}^{N} \max_{h \in \mathcal{H}}\sum_{t=1}^{T}\E\left[ \sigma_{y_t^i, \ell_t^i}(h(x_t))\right] - \frac{NR_{\mathcal{W}}(T, k)}{\gamma} && \text{(Definition \ref{WLC})}\\
        &\geq \sum_{i=1}^{N} \sum_{t=1}^{T}\E\left[ \sigma_{y_t^i, \ell_t^i}(h^*(x_t))\right] -  \frac{NR_{\mathcal{W}}(T, k)}{\gamma}\\
        &\geq \sum_{i=1}^{N} \sum_{t = 1}^T\E\left[ 2h^*(x_t) \cdot y_t^i - 1)\right]  - \frac{NR_{\mathcal{W}}(T, k)}{\gamma}\\
        &= \sum_{i=1}^{N}\sum_{t=1}^{T}\E\left[ p_t^i \cdot (2h^*(x_t) - \mathbbm{1}_k)\right] - \frac{NR_{\mathcal{W}}(T, k)}{\gamma}. && \text{(Lemma \ref{expectation})}
    \end{align*}
    
Putting things together, we find, 
    \begin{align*}
        \E\left[\sum_{i=1}^{N} \sum_{t=1}^{T} l_t^i(p_t^i)\right] &\geq \sum_{i=1}^{N}\sum_{t=1}^{T}\E\left[ p_t^i \cdot (2h^*(x_t) - 2y_t) \right] - \frac{NR_{\mathcal{W}}(T, k)}{\gamma}\\
        &\geq \sum_{i=1}^{N}\sum_{t=1}^{T}\E\left[ 2(h^*(x_t) \cdot y_t - 1) \right] - \frac{NR_{\mathcal{W}}(T, k)}{\gamma} && \text{(Lemma \ref{lowerbound})}\\
        &= N\sum_{t=1}^{T} 2(h^*(x_t) \cdot y_t - 1)  - \frac{NR_{\mathcal{W}}(T, k)}{\gamma}.\\
    \end{align*}

%
 Now, we compute an upper bound. For any $t \in \left[T \right]$ and arbitrary $p^* \in \Delta_k$:
    \begin{align*}
        \E\left[\frac{1}{N}\sum_{i=1}^{N} l_t^i(p_t^i)\right] &\leq \frac{1}{N}\E\left[\min_{p \in \mathcal{K}}\sum_{i=1}^{N}l^i_t(p)\right] + \frac{R_{\mathcal{A}}(N)}{N} && \text{(OCO Regret)}\\
        &\leq \frac{1}{N}\E\left[\sum_{i=1}^{N}l^i_t(p^*)\right] + \frac{R_{\mathcal{A}}(N)}{N} \\
        &= \frac{1}{N}\sum_{i=1}^{N}\E\left[p^* \cdot \left(\frac{2\mathcal{W}_i(x_t) - \mathbbm{1}_k}{\gamma} - (2y_t - \mathbbm{1}_k)\right)\right] + \frac{R_{\mathcal{A}}(N)}{N} \\
        &= \frac{1}{\gamma N}\sum_{i=1}^{N} \left(p^* \cdot \E\left[2\mathcal{W}_i(x_t) - \mathbbm{1}_k \right]\right) - p^* \cdot (2y_t - \mathbbm{1}_k) + \frac{R_{\mathcal{A}}(N)}{N} \\
        &= p^* \cdot \left(\frac{1}{\gamma N}\sum_{i=1}^{N} \E\left[2\mathcal{W}_i(x_t) - \mathbbm{1}_k \right] \right) - p^* \cdot (2y_t - \mathbbm{1}_k) + \frac{R_{\mathcal{A}}(N)}{N} \\
        &= \E\left[p^* \cdot \left( \frac{\frac{2}{N}\sum_{i=1}^N\mathcal{W}_i(x_t) - \mathbbm{1}_k}{\gamma} - (2y_t - \mathbbm{1}_k) \right) \right] + \frac{R_{\mathcal{A}}(N)}{N} \\
        &\leq \E\left[2\left(\prod\left(\frac{1}{\gamma N}\sum_{i=1}^N \mathcal{W}_i(x_t)\right) \cdot y_t - 1\right)\right]  + \frac{R_{\mathcal{A}}(N)}{N} && \text{(Lemma \ref{upperbound})}\\
        &= 2\left(\E\left[\hat{y}_t\right] \cdot y_t - 1 \right) + \frac{R_{\mathcal{A}}(N)}{N}. && \text{(Law of total expectation)}\\
    \end{align*}
Summing over $T$,
\[\E\left[\frac{1}{N}\sum_{t=1}^{T}\sum_{i=1}^{N} l_t^i(p_t^i)\right] \leq \sum_{t=1}^{T}2 \left( \E\left[\hat{y}_t \right] \cdot y_t - 1 \right) + \frac{TR_{\mathcal{A}}(N)}{N}.\]

Combining lower and upper bounds for $\E\left[\frac{1}{NT}\sum_{i=1}^{N} \ell_t^i(p_t^i)\right]$, we get
\[\frac{1}{T}\E\left[\max_{h \in \mathcal{H}}\sum_{t=1}^{T} \left(2h(x_t) \cdot y_t - 1\right)  - \sum_{t=1}^{T}(2\hat{y}_t \cdot y_t - 1)\right] \leq \frac{R_{\mathcal{W}}(T, k)}{\gamma T} + \frac{R_{\mathcal{A}}(N)}{N},\]
which completes the proof. Since $2h(x_t) \cdot y_t - 1 = 1 - 2\mathbbm{1}_k\{h(x_t) \neq y_t\}$, we can also write the average regret in terms of the number of \textit{mistakes},
\[\frac{1}{T}\E\left[\sum_{t=1}^{T}\mathbbm{1}\{\hat{y}_t \neq y_t\} - \min_{h \in \mathcal{H}}\sum_{t=1}^{T} \mathbbm{1}\{h(x_t) \neq y_t\} \right]  \leq \frac{R_{\mathcal{W}}(T, k)}{2\gamma T} + \frac{R_{\mathcal{A}}(N)}{2N}.\]
%


\section{Beyond Online Agnostic Boosting}
\label{sec:beyond}
In this section, we present results of extending our reduction to the three other boosting settings, namely statistical agnostic, online realizable, and statistical realizable learning. The purpose of this section is to showcase the generality of the OCO-based boosting framework and not to achieve state-of-the-art bounds for these settings. Boosting algorithms and all associated proofs can be found in Appendix\ref{AppendixStatAgnBoost} and  \ref{AppendixRealBoost}. Throughout this section we will let $\mathcal{W}_{S}$ denote the hypothesis output by a weak learner trained on a sample $S$.

\subsection{Statistical Agnostic Boosting}
In the statistical setting, our objective of interest is the \textit{correlation}, which we define below. Let $\mathcal{D}$ be a distribution over $\mathcal{X} \times \mathcal{B}_k$ and let $h: \mathcal{X} \rightarrow \mathcal{B}_k$ be an hypothesis. Define the \textit{multiclass} correlation of $h$ with respect to $\mathcal{D}$ as
$$cor_D(h) = \E_{(x, y) \sim D} \left[h(x)\cdot (2y - \mathbbm{1}_k)\right].$$
Like the online agnostic setting, the boosting algorithm for this setting (provided in Appendix  \ref{AppendixStatAgnBoost}) can be analyzed under several candidate weak learning conditions. To showcase the dependence on $k$, we provide a weak learning condition based loosely on a \textit{one-vs-all} approach to multiclass classification. 

\begin{definition}[Empirical Agnostic Weak Learning]
\label{WLCS}
 Let $\mathcal{H} \subseteq \mathcal{B}_k^{\mathcal{X}}$ be a hypothesis class and let $0 < \gamma \leq 1$ denote the \say{advantage}. Let $\textbf{x} = (x_1, ..., x_m) \in \mathcal{X}$ denote an unlabeled sample. A learning algorithm $\mathcal{W}$ is a $(\gamma, \epsilon_0, m_0)$-agnostic weak learner (\textbf{AWL}) for $\mathcal{H}$ with respect to \textbf{x} if for any labels $\textbf{y} = (y_1, ..., y_m) \in \mathcal{B}_k$, and every reference label $\ell \in \mathcal{B}_k$, 
 $$ \E_{S'} \left[ \sum_{i:y_i = \ell} \mathcal{W}_{S'}(x_i) \cdot (2y_i - \mathbbm{1}_k) \right] \geq \gamma \max_{h \in \mathcal{H}} \sum_{i:y_i = \ell} h(x_i) \cdot (2y_i - \mathbbm{1}_k) - m\epsilon_0,$$
 where $S'$ is an independent sample of size $m_0$ drawn from the distribution which uniformly assigns to each example $(x_i, y_i)$ probability $1/m$.
\end{definition}

 Under the assumption that our weak learners satisfy Definition \ref{WLCS}, Corollary \ref{corAgnStat} bounds the expected correlation of a statistical agnostic boosting algorithm.

\begin{corollary}[Empirical Agnostic Correlation Bound]
\label{corAgnStat}
There exists a boosting algorithm whose output after $T$ rounds, denoted by $\bar{h}$, satisfies: 
$$\E\left[cor_S(\bar{h})\right] \geq \max_{h \in \mathcal{H}} \E\left[cor_S(h)\right] - \frac{R_{\mathcal{A}}(T)}{T} - \frac{k\epsilon_0}{\gamma},$$
where $S$ is the distribution which uniformly assigns to each example $(x_i, y_i)$ probability $1/m$.
\end{corollary}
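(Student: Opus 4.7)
The plan is to adapt the OCO-based reduction of Section \ref{sec:oag} to the statistical setting, running the OCO oracle $\mathcal{A}$ for $T$ boosting rounds rather than processing examples sequentially. The algorithm (deferred to Appendix \ref{AppendixStatAgnBoost}) will, at each round $t$, use the current OCO iterate $p^t \in \Delta_k$ to form a random relabelling of the empirical sample $S$, invoke the weak learner on the relabelled sample to obtain a hypothesis $h_t$, and then feed a convex loss back to $\mathcal{A}$ analogous to line 9 of Algorithm \ref{OnlineAgnosticBoostFull}. The output $\bar{h}$ will be the uniform random mixture over $\{h_1, \ldots, h_T\}$, so that $\E[cor_S(\bar{h})] = \frac{1}{T}\sum_{t=1}^T \E[cor_S(h_t)]$.

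The proof will sandwich the expected cumulative OCO loss between two bounds, mirroring the structure in Section \ref{proofthm1}. For the upper bound, I would apply the OCO regret guarantee against a comparator $p^* \in \Delta_k$ corresponding to the true-label distribution on the sample; after the manipulations used in the upper-bound computation of Section \ref{proofthm1}, the comparator's loss can be identified with a scaled version of the expected correlation of $\bar{h}$, producing the $R_{\mathcal{A}}(T)/T$ term after dividing by $T$. For the lower bound, I would invoke Definition \ref{WLCS} at each round $t$. Because the one-vs-all condition is stated per reference label $\ell \in [k]$, each round's loss must first be decomposed into $k$ pieces indexed by $\ell$, with the weak learning guarantee applied to each piece separately. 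Summing these $k$ per-label bounds and using $\sum_{\ell}\max_h \geq \max_h \sum_{\ell}$ recovers a bound in terms of $\max_{h \in \mathcal{H}} cor_S(h)$ but accumulates $k$ copies of the $m\epsilon_0$ slack; after dividing by $m$ and then by $\gamma$, this produces exactly the $k\epsilon_0/\gamma$ error term. Combining the two bounds yields the claimed inequality.

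The main technical obstacle will be setting up the per-round OCO loss so that it couples cleanly with the one-vs-all structure of Definition \ref{WLCS}. Whereas Definition \ref{WLC} provided a pairwise $(y,\ell)$ guarantee that slotted into Theorem \ref{thm1} in essentially one step, Definition \ref{WLCS} only bounds the weak learner on the restricted sub-sample $\{i : y_i = \ell\}$, so the relabelling distribution and OCO loss must be designed so that, in expectation, the round-$t$ loss decomposes additively over reference labels. Once this decomposition is in place, the remaining algebra is a routine statistical analogue of the telescoping argument in Section \ref{proofthm1}, and combining the lower and upper bounds gives the corollary.
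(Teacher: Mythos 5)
Your lower-bound step is essentially the paper's: Definition \ref{WLCS} is applied once per reference label $\ell$, the $k$ per-label guarantees are summed, and the $k$ copies of $m\epsilon_0$ slack become the $k\epsilon_0/\gamma$ term after rescaling. The overall sandwich structure (the paper packages it as improper zero-sum game playing via Proposition \ref{gametheory}, but it is the same lower/upper bound on the cumulative OCO loss) is also the right skeleton. However, two pieces of your construction would not go through.

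First, a single OCO iterate $p^t \in \Delta_k$ shared by the whole sample cannot do the job. The relabelling distribution must be \emph{example-specific}: the point of the reduction is that the distribution over labels for example $i$ concentrates on $y_i$ exactly when the weak hypotheses so far are failing on $x_i$, and different examples fail at different rounds. Moreover, in the upper-bound step the OCO comparator must also be example-specific (roughly $P^*[i]=y_i$, or the per-example point supplied by Lemma \ref{upperbound}); a single $p^*\in\Delta_k$ cannot equal $y_i$ for all $i$ when the labels differ, so the regret guarantee against a single comparator does not produce the game value $m\cdot\max_h cor_S(h)-m$. The paper's Algorithm \ref{StatAgnosticBoost} therefore runs $m$ independent $\text{OCO}(\Delta_k,T)$ instances $\mathcal{A}^1,\dots,\mathcal{A}^m$, one per example, and explicitly flags this as the key departure from the binary-case construction. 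Second, taking $\bar{h}$ to be the uniform random mixture of $h_1,\dots,h_T$ gives only $\E[cor_S(\bar{h})]=\frac1T\sum_t\E[cor_S(h_t)]$, and each $h_t$ is merely weak, so this aggregation cannot reach $\max_h cor_S(h)$; there is no amplification. The paper outputs $\bar{h}(x)=\prod\bigl(\frac{1}{\gamma T}\sum_t h_t(x)\bigr)$, and it is precisely the $1/\gamma$ scaling followed by the $L_2$ projection, via Lemma \ref{upperbound}, that converts the comparator's OCO loss into $2(\bar{h}(x_i)\cdot y_i-1)$ and closes the upper bound. Without that projection step your two bounds do not meet.
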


Letting $\mathcal{A}$ be OGD, and setting $T = O(\frac{1}{\gamma^2 \epsilon^2})$ for any $\epsilon > 0$ gives an error rate of $\frac{k\epsilon_0}{\gamma} + \epsilon$. Although, to our best knowledge, there are no existing multiclass boosting algorithms in the agnostic setting, we point out that there is evidence to suggest that a sub-optimal dependence of the error rate on $k$ might be unavoidable (see \cite{brukhim2021multiclass}).
\subsection{Online Realizable Boosting}
In the online realizable setting, we are guaranteed that the stream of examples $(x_1, y_1),...,(x_T, y_T)$ is perfectly labelled by some hypothesis $h \in \mathcal{H}$. Definition \ref{WLCOR} then gives an appropriate online realizable weak learning condition.

\begin{definition}[Realizable Weak Online Learning]
\label{WLCOR}
Let $\mathcal{H} \subseteq \mathcal{B}_k^{\mathcal{X}}$ be a class of experts and let $0 < \gamma \leq 1$ denote the \say{advantage}. An online learning algorithm $\mathcal{W}$ is a $(\gamma, T)$-realizable weak online learner (\textbf{RWOL}) for $\mathcal{H}$ if for any sequence $(x_1, y_1), ..., (x_T, y_T) \in \mathcal{X} \times \mathcal{B}_k$ that is realizable by $\mathcal{H}$, at every iteration $t \in [T]$, the algorithm outputs $\mathcal{W}(x_t) \in \mathcal{B}_k$ such that, 
$$\E\left[ \sum_{t=1}^{T} (2\mathcal{W}(x_t) \cdot y_t - 1)\right] \geq \gamma T - R_\mathcal{W}(T, k),$$
 where the expectation is taken w.r.t. the randomness of the weak learner $\mathcal{W}$ and $R_\mathcal{W} : \mathbbm{N} \times \mathbbm{N} \rightarrow \mathbbm{R}_+$ is the additive regret: a non-decreasing, sub-linear function of $T$.
\end{definition}

Under Definition $\ref{WLCOR}$,  Corollary \ref{corOnlReal} bounds the expected gain of an online boosting algorithm.

\begin{corollary}[Mistake Bound]
\label{corOnlReal}
There exists a boosting algorithm whose outputs $\hat{y}_1, ..., \hat{y}_T$ satisfy:
$$\frac{1}{T}\sum_{t=1}^T (2\E\left[\hat{y}_t\right] \cdot y_t - 1)  \geq 1 - \frac{R_{\mathcal{A}}(N)}{N} -\frac{\tilde{R}_{\mathcal{W}}(T, k)}{\gamma T},$$
where $\tilde{R}_{\mathcal{W}}(T, k) = 2R_{\mathcal{W}}(T, k) + \tilde{O}(\sqrt{T})$.
\end{corollary}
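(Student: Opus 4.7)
The plan is to reuse the architecture of Algorithm \ref{OnlineAgnosticBoostFull} and Theorem \ref{thm1} with the AWOL oracle replaced by the realizable weak learner of Definition \ref{WLCOR}. The key difficulty is that Algorithm \ref{OnlineAgnosticBoostFull} feeds each $\mathcal{W}_i$ a randomly relabelled pair $(x_t, y_t^i)$, which destroys realizability. I would address this by modifying the update rule so that $\mathcal{W}_i$ is updated only on rounds where the OCO sample agrees with the truth: pass $(x_t, y_t)$ to $\mathcal{W}_i$ exactly when $y_t^i = y_t$, an event of probability $p_t^i(y_t)$. Since only true-label examples are ever shown to the weak learner, the subsequence it sees is realizable by the hypothesis $h^\star$ that labels the stream, and Definition \ref{WLCOR} applies on this subsequence.

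The regret analysis would then mirror the two-sided sandwich in the proof of Theorem \ref{thm1}. For the lower bound, let $S_i \subseteq [T]$ denote the random set of rounds on which $\mathcal{W}_i$ is updated. Conditioned on $S_i$, Definition \ref{WLCOR} gives $\E[\sum_{t \in S_i}(2\mathcal{W}_i(x_t) \cdot y_t - 1) \mid S_i] \geq \gamma |S_i| - R_{\mathcal{W}}(|S_i|, k)$. I would then transfer this from a sum over $S_i$ to a weighted sum $\sum_t p_t^i(y_t)(2\mathcal{W}_i(x_t) \cdot y_t - 1)$ over all rounds, by applying Azuma-Hoeffding to the Bernoulli indicators $Z_t^i = \mathbbm{1}\{y_t^i = y_t\}$ (whose conditional means equal $p_t^i(y_t)$). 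This picks up an $\tilde{O}(\sqrt{T})$ concentration error and, after handling both sides of the deviation together with the bound $R_{\mathcal{W}}(|S_i|, k) \leq R_{\mathcal{W}}(T, k)$ from monotonicity, a factor-of-two blowup on $R_{\mathcal{W}}$. A union bound over the $N$ weak learners preserves the $\tilde{O}(\sqrt{T})$ order. The upper-bound calculation on $\E[\frac{1}{N}\sum_{i} l_t^i(p_t^i)]$ follows essentially verbatim from the OCO-regret argument in Theorem \ref{thm1}, contributing the $R_{\mathcal{A}}(N)/N$ term.

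Combining the two sides and using realizability, which forces $h^\star(x_t) = y_t$ and therefore $\max_h \sum_t (2h(x_t) \cdot y_t - 1) = T$, yields the claimed inequality with $\tilde{R}_{\mathcal{W}}(T, k) = 2 R_{\mathcal{W}}(T, k) + \tilde{O}(\sqrt{T})$. The main obstacle is making the RWOL-to-weighted-sequence reduction fully rigorous: because the acceptance probabilities $p_t^i(y_t)$ are chosen adaptively by $\mathcal{A}$ based on previous weak-learner outputs and true labels, plain Hoeffding does not apply; one must work on the filtration generated by the OCO plays and the sampled labels, invoking a martingale Azuma (or Bernstein) inequality instead. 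A secondary subtlety is that $R_{\mathcal{W}}$ is evaluated at the random length $|S_i|$ rather than $T$; monotonicity handles one direction cleanly, but the other direction — small $|S_i|$ paired with a weak-learner gain bound stated in terms of $T$ — is precisely what forces the second copy of $R_{\mathcal{W}}$ in the doubling.
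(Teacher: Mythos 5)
Your proposal is essentially the paper's argument: the paper's Algorithm \ref{ORB} likewise replaces random relabelling with probabilistic \emph{reweighting} (passing $(x_t,y_t)$ to $\mathcal{W}_i$ with probability $p_t^i$), and its Lemma \ref{notmylemma} — adapted from \citet{beygelzimer2015optimal} — is exactly your martingale/Azuma reduction from the RWOL guarantee on the random accepted subsequence to a weighted sum over all rounds, which is where both the factor-of-two on $R_{\mathcal{W}}$ and the $\tilde{O}(\sqrt{T})$ term come from; the upper bound is the same OCO-regret sandwich. The one detail you should not gloss over is that the loss fed to $\mathcal{A}$ cannot be reused ``verbatim'' from Algorithm \ref{OnlineAgnosticBoostFull}: the paper switches the OCO domain to $[0,1]$ and uses the scalar loss $l_t^i(p)=p\bigl(\frac{2\mathcal{W}_i(x_t)\cdot y_t-1}{\gamma}-1\bigr)$ (with Lemma \ref{upperbound_real} in place of Lemma \ref{upperbound}), since with the vector relabelling loss over $\Delta_k$ the expectation $\E[p_t^i\cdot\mathcal{W}_i(x_t)]$ involves the mass $p_t^i$ places on wrong labels and would not connect to the weighted realizable gain $\sum_t p_t^i(y_t)(2\mathcal{W}_i(x_t)\cdot y_t-1)$ that your lower bound requires.
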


If we consider a scenario where $\tilde{R}_{\mathcal{W}}(T, k) = O(k\sqrt{T})$, taking $\mathcal{A}$ to be OGD, $N$ to be $O(\frac{1}{\gamma^2 \epsilon^2})$ and $T$ to be $O(\frac{1}{\gamma^2 \epsilon^2})$, gives a mistake-bound at most $\epsilon k T$. We point out that we cannot readily compare this bound with the existing bounds for online realizable boosting by \citet{jung2017online} because the dependence of $R_{\mathcal{W}(T, k)}$ on $k$ depends on the weak learner of choice. 

\subsection{Statistical Realizable Boosting}
In the statistical realizable setting, our metric of interest is again the multiclass correlation. However, under the realizability assumption,  $\max_{h^* \in \mathcal{H}} cor_D(h^*) = 1$. Accordingly, Definition \ref{WLCSR} gives an appropriate realizable weak learning condition.

\begin{definition}[Empirical Realizable Weak Learning]
 \label{WLCSR}
 Let $\mathcal{H} \subseteq \mathcal{B}_k^{\mathcal{X}}$ be a hypothesis class and let $0 < \gamma \leq 1$ denote the \say{advantage}.  Let $S = \{(x_1, y_1), ..., (x_m, y_m) \} \in \mathcal{X} \times \mathcal{B}_k$ be a sample. A learning algorithm $\mathcal{W}$ is a $(\gamma, m_0)$ - realizable weak learner (\textbf{RWL}) for $\mathcal{H}$ with respect to $S$ if for any distribution $\textbf{p} = (p_1, ..., p_m)$ over the examples, 
  $$ \E_{S'} \left[cor_{\textbf{p}}(\mathcal{W}_{S'}) \right] \geq \gamma ,$$
where $S'$ is an independent sample of size $m_0$ drawn from $\textbf{p}$.
\end{definition}

Under Definition \ref{WLCSR}, Corollary \ref{corRealStat} bounds the expected correlation of a statistical realizable boosting algorithm. 

\begin{corollary}[Empirical Correlation Bound]
\label{corRealStat}
There exists a boosting algorithm whose output after $T$ rounds, denoted $\bar{h}$, satisfies,
$$\E\left[cor_S(\bar{h})\right] \geq 1 - \frac{R_{\mathcal{A}}(T)}{T},$$
where $S$ is the distribution which uniformly assigns to each example $(x_i, y_i)$ probability $1/m$.
\end{corollary}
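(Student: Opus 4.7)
The plan is to build a statistical realizable boosting algorithm by specializing the OCO reduction of Algorithm \ref{OnlineAgnosticBoostFull} to the batch setting. We maintain a single $\text{OCO}(\Delta_m, T)$ oracle $\mathcal{A}$ on the simplex of distributions over the training sample $S = \{(x_1, y_1), \dots, (x_m, y_m)\}$ and run it for $T$ rounds. In round $t$: (i) $\mathcal{A}$ outputs $p_t \in \Delta_m$; (ii) we draw an independent size-$m_0$ subsample $S_t' \sim p_t$ and train the weak learner to obtain $\mathcal{W}_t := \mathcal{W}_{S_t'}$; (iii) we feed $\mathcal{A}$ the linear loss $f_t(p) = \tfrac{1}{\gamma}\, p \cdot g^t$ with $g^t_i = \mathcal{W}_t(x_i) \cdot (2y_i - \mathbbm{1}_k)$, so that $f_t(p) = cor_p(\mathcal{W}_t)/\gamma$. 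After $T$ rounds, the algorithm outputs the polarized prediction $\bar{h}(x) := \Pi_{\Delta_k}\!\bigl(\tfrac{1}{\gamma T}\sum_{t=1}^T \mathcal{W}_t(x)\bigr)$, mirroring the prediction rule used in Algorithm \ref{OnlineAgnosticBoostFull}.

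The analysis couples the OCO regret with the weak-learning guarantee along the same lines as the proof of Theorem \ref{thm1}, collapsed into a single layer. First, applying Definition \ref{WLCSR} conditionally on $p_t$ gives the lower bound $\E[f_t(p_t) \mid p_t] = \E_{S_t'}[cor_{p_t}(\mathcal{W}_{S_t'}) \mid p_t]/\gamma \geq 1$, so $\E[\sum_t f_t(p_t)] \geq T$. Second, the OCO regret $\sum_t f_t(p_t) - \min_{p \in \Delta_m}\sum_t f_t(p) \leq R_{\mathcal{A}}(T)$, specialized to the uniform comparator $p = u$ on $S$ and taken in expectation, yields $\E[\sum_t f_t(u)] \geq T - R_{\mathcal{A}}(T)$. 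Finally, a batch counterpart of Lemma \ref{upperbound} relates $f_t(u) = u \cdot g^t/\gamma$ to the per-example polarized correlation $\Pi_{\Delta_k}(\bar{\mathcal{W}}(x)/\gamma) \cdot y$ with $\bar{\mathcal{W}} = \tfrac{1}{T}\sum_t \mathcal{W}_t$; the polarization effect of $\Pi_{\Delta_k}$ absorbs the $1/\gamma$ factor in $f_t$ and produces the advertised bound $\E[cor_S(\bar{h})] \geq 1 - R_{\mathcal{A}}(T)/T$.

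The main subtlety is that $f_t$ is randomized through the subsample $S_t'$, which itself depends on $\mathcal{A}$'s iterate $p_t$. This is handled because $\mathcal{A}$ commits to $p_t$ strictly before $f_t$ is revealed, so the OCO regret bound (which permits adaptive loss sequences) holds path-wise and is preserved under expectation. The second and more delicate technical step is the application of the projection lemma: specializing the $\min_p$ directly to the uniform comparator $u$ avoids a Jensen-style swap of minimum and expectation, and the polarization afforded by $\Pi_{\Delta_k}$ is what converts what would otherwise be a $\gamma(1 - R_{\mathcal{A}}(T)/T)$ bound on a naive uniform mixture into the sharper $1 - R_{\mathcal{A}}(T)/T$ bound on $\bar{h}$, exactly as in the analysis of Algorithm \ref{OnlineAgnosticBoostFull}.
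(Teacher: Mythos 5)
Your construction and the first two steps are sound: Definition \ref{WLCSR} gives $\E[f_t(p_t)\mid p_t]\geq 1$, and the OCO regret against any fixed comparator yields $\E[\sum_t f_t(p^*)]\geq T-R_{\mathcal{A}}(T)$. The gap is in the final step, where you pass from the scaled raw correlation $\frac{1}{\gamma}\bar{\mathcal{W}}(x_i)\cdot(2y_i-\mathbbm{1}_k)$ to the projected correlation $2\bar{h}(x_i)\cdot y_i-1$. With the uniform comparator, the per-example inequality you need is $\frac{2\bar{\mathcal{W}}(x_i)\cdot y_i-1}{\gamma}-1\leq 2\left(\prod\left(\frac{\bar{\mathcal{W}}(x_i)}{\gamma}\right)\cdot y_i-1\right)$, which is the $p=1$ instance of Lemma \ref{upperbound_real}. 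But that lemma only asserts the existence of \emph{some} $p\in[0,1]$, and its proof shows $p=1$ fails precisely when the projection saturates on $y_i$ (all mass lands on $y_i$, so the right-hand side is $0$) while $\bar{\mathcal{W}}(x_i)\cdot y_i>\frac{1+\gamma}{2}$; e.g., if the weak learners unanimously vote $y_i$ the left-hand side is $\frac{1}{\gamma}-1>0$. The failure goes in the harmful direction: the quantity you lower-bounded strictly overestimates the projected correlation on saturated examples, so a lower bound on the average of the former does not transfer to the average of the latter. Polarization does not ``absorb the $1/\gamma$''; it caps the per-example gain at $1$, and a uniform comparator cannot see which examples are capped.

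This is exactly why the paper's Algorithm \ref{alg:statrealizable} runs $m$ separate $\text{OCO}([0,1],T)$ instances (one weight per example, over the hypercube rather than the simplex) and invokes Proposition \ref{gametheory} with a coordinatewise comparator $P^*\in[0,1]^m$: Lemma \ref{upperbound_real} certifies $P^*[i]=0$ on the saturated examples and $P^*[i]=1$ on the rest, and only then does the sum telescope into $\sum_i 2(\bar{h}(x_i)\cdot y_i-1)\geq -\frac{mR_{\mathcal{A}}(T)}{T}$. Your single-oracle variant could likely be repaired by comparing against the normalized indicator of the non-saturated examples instead of the uniform distribution (the saturated examples contribute $0$ to $\sum_i(2\bar{h}(x_i)\cdot y_i-2)$ anyway), but as written, with the uniform comparator, the argument does not close. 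A related cosmetic difference: the paper's loss $l_t^i(p)=p\bigl(\frac{2h_t(x_i)\cdot y_i-1}{\gamma}-1\bigr)$ carries the $-p$ term that makes comparison against $P^*[i]=0$ meaningful over the hypercube; your $f_t(p)=p\cdot g^t/\gamma$ drops it, which is harmless on $\Delta_m$ but is a symptom of the same structural restriction.
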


Note that the cost of weak learning manifests in the regret term of the OCO algorithm. Precisely, if one picks OGD to be the OCO algorithm, then the bound on correlation can be expressed as 
$\E\left[cor_S(\bar{h}) \right] \geq 1 - O\left(\frac{1}{\gamma \sqrt{T}}\right)$,
which exhibits a decreasing lower bound as $\gamma$ shrinks. Setting $T = O(\frac{1}{\gamma^2 \epsilon^2})$ for any $\epsilon > 0$ ensures that at most $\epsilon$ error is obtained. It is difficult to compare our error rates to that achieved by existing multiclass boosting algorithm in the realizable setting  because our weak learning condition is different and potentially stronger. Nevertheless, our bounds are  sub-optimal to those in \citet{mukherjee2013theory} which only require setting $T = O(\frac{1}{\gamma^2}\log(\frac{k}{\epsilon}))$ to achieve $\epsilon$ error. 

\section{Experiments}
\label{sec:exp}
We performed experiments with Algorithm \ref{OnlineAgnosticBoostFull} on seven UCI datasets \cite{DuBois:2008}. Since, to our knowledge,  there are no known online {\em agnostic} boosting algorithms for multiclass problems, we benchmark performance against three online \textit{realizable} multiclass boosting algorithms: the state-of-the-art optimal (OnlineMBBM) and adaptive (Adaboost.OLM) online boosting algorithms by \citet{jung2017online}, and the OCO-based online boosting algorithm alluded to in Section \ref{sec:beyond} (see Appendix \ref{AppendixRealBoost}). For weak learners, we used the implementation of the VeryFastDecisionTree from the River package \cite{2020river} and restricted the maximum depth of the tree to 1. We used Projected OGD \cite{zinkevich2003online} for the OCO algorithm and set the number of weak learners, $N$, to $100$ for each boosting algorithm.  

Our experiments using Algorithm \ref{OnlineAgnosticBoostFull} are performed using \textit{fractional relabeling}, a technique borrowed from \citet{kanade2009potential}. That is, instead of passing just a single example-label pair $(x_t, y_t^i)$ where $y_t^i \sim p_t^i$, we pass all $k$ possible \textit{weighted} example-label tuples $\{(x_t, \ell, p_t^i[\ell])\}_{\ell=1}^{k}$  to weak learner $i$ in round $t$. Experiments with random relabeling showed that random relabeling runs faster but performs worse than fractional relabeling.

Table \ref{tab:data} summarizes the average accuracy and runtime over five independent shuffles of each dataset for each boosting algorithm.  "Agn" refers to Algorithm \ref{OnlineAgnosticBoostFull}, "Opt" to OnlineMBBM, "Ada" to Adaboost.OLM, and "OCOR" to the OCO-based online realizable algorithm.  Standard errors for accuracies are provided in Appendix \ref{ExpDet}. All algorithms \textbf{except} AdaBoost.OLM are parameterized by an advantage parameter $\gamma \in (0, 1)$.  Thus, $\gamma$ was tuned separately for each respective cell of Table \ref{tab:data}.  See Appendix \ref{ExpDet} for more experimental details. All code is available at \url{https://github.com/vinodkraman/OnlineAgnosticMulticlassBoosting}.

%
\begin{table}
\caption{Average accuracy and runtime of algorithms on 7 UCI datasets.}
\centering
\begin{tabular}{| c | c | c | c | c| c| c | c| c|c|c|}
\hline
\textbf{Dataset} & \textbf{$k$} & \multicolumn{4}{ c |}{\textbf{Accuracy(\%)}}&
\multicolumn{4}{ c |}{\textbf{Runtime(s)}}\\ 
\cline{3-10}
 && Agn & Opt & Ada & OCOR & Agn & Opt & Ada & OCOR \\
\hline 
Balance  & 3 & $\textbf{84.4}$ & $79.5$ & $75.2$ & $78.8$ & 11.5 & 57.6 & 7.1 &  4.5\\ 
Cars  & 4 & $\textbf{80.6}$ & $69.9$ & $75.8$ & $69.6$ & 42.5 & 104.1 & 20.4 & 12.1\\ 
Landsat  & 6 & $67.0$  & $\textbf{80.8}$ & $56.9$ & $79.6$ & 1255.6 & 1814.3 & 81.8 & 440.4\\ 
Segment & 7 & $75.0$ & $78.9$ & $68.6$ & $\textbf{79.3}$ & 647.2 & 2448.3 & 51.8 &  154.7\\ 
Mice & 8 & $\textbf{86.0}$ & $77.6$ & $71.3$ & $79.6$ & 1025.8 & 1811.6 & 90.1 &  258.2\\
Yeast & 10 & $42.3$ & $39.8$ & $41.7$ & $\textbf{47.6}$ & 348.3 & 1468.8 & 25.2 & 56.6\\
Abalone & 28 & $22.1$ & $\textbf{24.9}$ & $19.2$ & $22.0$ & 4036.7 & 9775.3 & 108.0 & 248.2\\ 
\hline
\end{tabular}
\label{tab:data}
\end{table}

Despite having the fastest overall runtime, Adaboost.OLM had the poorest performance. Compared to OnlineMBBM, our OCO-based boosting algorithms achieve comparable performance at a fraction of the runtime. Specifically, for three of our datasets, our online agnostic boosting algorithm achieves the highest accuracy. For the remaining datasets, our OCO-based realizable boosting algorithm achieves comparable accuracy to OnlineMBBM with shorter runtimes. 

\section{Discussion}
\label{sec:discussion}
 We give the first weak learning conditions and algorithm for online agnostic multiclass boosting. Our algorithm relies on a clean and simple reduction from boosting to online convex optimization. This fruitful connection allows us to go beyond the online agnostic setting and design multiclass boosting algorithms for all four regimes of statistical/online and agnostic/realizable learning.  As future work, we leave it open to identify the correct weak learning condition, construct adaptive versions of our boosting algorithms, improve regret upper bounds/prove lower bounds, design agnostic boosting algorithms under bandit feedback, and study the impact of the choice of weak learner and OCO algorithm on the empirical performance of our OCO-based boosting algorithms. 

\paragraph{Acknowledgements.} AT acknowledges the support of NSF via grant IIS-2007055.

\newpage
\bibliography{neurips_2022}

\begin{thebibliography}{35}
\providecommand{\natexlab}[1]{#1}
\providecommand{\url}[1]{\texttt{#1}}
\expandafter\ifx\csname urlstyle\endcsname\relax
  \providecommand{\doi}[1]{doi: #1}\else
  \providecommand{\doi}{doi: \begingroup \urlstyle{rm}\Url}\fi

\bibitem[Agarwal et~al.(2020)Agarwal, Brukhim, Hazan, and
  Lu]{agarwal2020boosting}
N.~Agarwal, N.~Brukhim, E.~Hazan, and Z.~Lu.
\newblock Boosting for control of dynamical systems.
\newblock In \emph{International Conference on Machine Learning}, pages
  96--103. PMLR, 2020.

\bibitem[Ben-David et~al.(2001)Ben-David, Long, and Mansour]{ben2001agnostic}
S.~Ben-David, P.~M. Long, and Y.~Mansour.
\newblock Agnostic boosting.
\newblock In \emph{International Conference on Computational Learning Theory},
  pages 507--516. Springer, 2001.

\bibitem[Beygelzimer et~al.(2015{\natexlab{a}})Beygelzimer, Hazan, Kale, and
  Luo]{beygelzimer2015online}
A.~Beygelzimer, E.~Hazan, S.~Kale, and H.~Luo.
\newblock Online gradient boosting.
\newblock \emph{Advances in neural information processing systems}, 28,
  2015{\natexlab{a}}.

\bibitem[Beygelzimer et~al.(2015{\natexlab{b}})Beygelzimer, Kale, and
  Luo]{beygelzimer2015optimal}
A.~Beygelzimer, S.~Kale, and H.~Luo.
\newblock Optimal and adaptive algorithms for online boosting.
\newblock In \emph{International Conference on Machine Learning}, pages
  2323--2331. PMLR, 2015{\natexlab{b}}.

\bibitem[Brukhim and Hazan(2021)]{brukhim2021online}
N.~Brukhim and E.~Hazan.
\newblock Online boosting with bandit feedback.
\newblock In \emph{Algorithmic Learning Theory}, pages 397--420. PMLR, 2021.

\bibitem[Brukhim et~al.(2020)Brukhim, Chen, Hazan, and
  Moran]{brukhim2020online}
N.~Brukhim, X.~Chen, E.~Hazan, and S.~Moran.
\newblock Online agnostic boosting via regret minimization.
\newblock \emph{Advances in Neural Information Processing Systems},
  33:\penalty0 644--654, 2020.

\bibitem[Brukhim et~al.(2021)Brukhim, Hazan, Moran, Mukherjee, and
  Schapire]{brukhim2021multiclass}
N.~Brukhim, E.~Hazan, S.~Moran, I.~Mukherjee, and R.~E. Schapire.
\newblock Multiclass boosting and the cost of weak learning.
\newblock \emph{Advances in Neural Information Processing Systems}, 34, 2021.

\bibitem[Cesa-Bianchi and Lugosi(2006)]{10.5555/1137817}
N.~Cesa-Bianchi and G.~Lugosi.
\newblock \emph{Prediction, Learning, and Games}.
\newblock Cambridge University Press, USA, 2006.
\newblock ISBN 0521841089.

\bibitem[Chen et~al.(2012)Chen, Lin, and Lu]{chen2012online}
S.-T. Chen, H.-T. Lin, and C.-J. Lu.
\newblock An online boosting algorithm with theoretical justifications.
\newblock \emph{arXiv preprint arXiv:1206.6422}, 2012.

\bibitem[Chen et~al.(2014)Chen, Lin, and Lu]{chen2014boosting}
S.-T. Chen, H.-T. Lin, and C.-J. Lu.
\newblock Boosting with online binary learners for the multiclass bandit
  problem.
\newblock In \emph{International Conference on Machine Learning}, pages
  342--350. PMLR, 2014.

\bibitem[Chen and Ye(2011)]{chen2011projection}
Y.~Chen and X.~Ye.
\newblock Projection onto a simplex.
\newblock \emph{arXiv preprint arXiv:1101.6081}, 2011.

\bibitem[Daniely et~al.(2015)Daniely, Sabato, Ben-David, and
  Shalev-Shwartz]{daniely2015multiclass}
A.~Daniely, S.~Sabato, S.~Ben-David, and S.~Shalev-Shwartz.
\newblock Multiclass learnability and the erm principle.
\newblock \emph{J. Mach. Learn. Res.}, 16\penalty0 (1):\penalty0 2377--2404,
  2015.

\bibitem[DuBois(2008)]{DuBois:2008}
C.~L. DuBois.
\newblock {UCI} network data repository, 2008.
\newblock URL \url{http://networkdata.ics.uci.edu}.

\bibitem[Eibl and Pfeiffer(2005)]{eibl2005multiclass}
G.~Eibl and K.-P. Pfeiffer.
\newblock Multiclass boosting for weak classifiers.
\newblock \emph{Journal of Machine Learning Research}, 6\penalty0 (2), 2005.

\bibitem[Feldman(2009)]{feldman2009distribution}
V.~Feldman.
\newblock Distribution-specific agnostic boosting.
\newblock \emph{arXiv preprint arXiv:0909.2927}, 2009.

\bibitem[Freund(1995)]{freund1995boosting}
Y.~Freund.
\newblock Boosting a weak learning algorithm by majority.
\newblock \emph{Information and computation}, 121\penalty0 (2):\penalty0
  256--285, 1995.

\bibitem[Freund and Schapire(1997)]{freund1997decision}
Y.~Freund and R.~E. Schapire.
\newblock A decision-theoretic generalization of on-line learning and an
  application to boosting.
\newblock \emph{Journal of computer and system sciences}, 55\penalty0
  (1):\penalty0 119--139, 1997.

\bibitem[Freund and Schapire(1999)]{freund1999adaptive}
Y.~Freund and R.~E. Schapire.
\newblock Adaptive game playing using multiplicative weights.
\newblock \emph{Games and Economic Behavior}, 29\penalty0 (1-2):\penalty0
  79--103, 1999.

\bibitem[Freund et~al.(1996)Freund, Schapire, et~al.]{freund1996experiments}
Y.~Freund, R.~E. Schapire, et~al.
\newblock Experiments with a new boosting algorithm.
\newblock In \emph{icml}, volume~96, pages 148--156. Citeseer, 1996.

\bibitem[Freund et~al.(1999)Freund, Schapire, and Abe]{freund1999short}
Y.~Freund, R.~Schapire, and N.~Abe.
\newblock A short introduction to boosting.
\newblock \emph{Journal-Japanese Society For Artificial Intelligence},
  14\penalty0 (771-780):\penalty0 1612, 1999.

\bibitem[Gavinsky(2003)]{gavinsky2003optimally}
D.~Gavinsky.
\newblock Optimally-smooth adaptive boosting and application to agnostic
  learning.
\newblock \emph{Journal of Machine Learning Research}, 4\penalty0
  (May):\penalty0 101--117, 2003.

\bibitem[Hastie et~al.(2009)Hastie, Rosset, Zhu, and Zou]{hastie2009multi}
T.~Hastie, S.~Rosset, J.~Zhu, and H.~Zou.
\newblock Multi-class adaboost.
\newblock \emph{Statistics and its Interface}, 2\penalty0 (3):\penalty0
  349--360, 2009.

\bibitem[Hazan et~al.(2016)]{hazan2016introduction}
E.~Hazan et~al.
\newblock Introduction to online convex optimization.
\newblock \emph{Foundations and Trends{\textregistered} in Optimization},
  2\penalty0 (3-4):\penalty0 157--325, 2016.

\bibitem[Jung et~al.(2017)Jung, Goetz, and Tewari]{jung2017online}
Y.~H. Jung, J.~Goetz, and A.~Tewari.
\newblock Online multiclass boosting.
\newblock \emph{Advances in neural information processing systems}, 30, 2017.

\bibitem[Kalai and Servedio(2005)]{kalai2005boosting}
A.~T. Kalai and R.~A. Servedio.
\newblock Boosting in the presence of noise.
\newblock \emph{Journal of Computer and System Sciences}, 71\penalty0
  (3):\penalty0 266--290, 2005.

\bibitem[Kalai et~al.(2008)Kalai, Mansour, and Verbin]{kalai2008agnostic}
A.~T. Kalai, Y.~Mansour, and E.~Verbin.
\newblock On agnostic boosting and parity learning.
\newblock In \emph{Proceedings of the fortieth annual ACM symposium on Theory
  of computing}, pages 629--638, 2008.

\bibitem[Kanade and Kalai(2009)]{kanade2009potential}
V.~Kanade and A.~Kalai.
\newblock Potential-based agnostic boosting.
\newblock \emph{Advances in neural information processing systems}, 22, 2009.

\bibitem[Mansour and McAllester(2002)]{mansour2002boosting}
Y.~Mansour and D.~McAllester.
\newblock Boosting using branching programs.
\newblock \emph{Journal of Computer and System Sciences}, 64\penalty0
  (1):\penalty0 103--112, 2002.

\bibitem[Montiel et~al.(2020)Montiel, Halford, Mastelini, Bolmier, Sourty,
  Vaysse, Zouitine, Gomes, Read, Abdessalem, and Bifet]{2020river}
J.~Montiel, M.~Halford, S.~M. Mastelini, G.~Bolmier, R.~Sourty, R.~Vaysse,
  A.~Zouitine, H.~M. Gomes, J.~Read, T.~Abdessalem, and A.~Bifet.
\newblock River: machine learning for streaming data in python, 2020.

\bibitem[Mukherjee and Schapire(2013)]{mukherjee2013theory}
I.~Mukherjee and R.~E. Schapire.
\newblock A theory of multiclass boosting.
\newblock \emph{Journal of Machine Learning Research}, 2013.

\bibitem[Oza and Russell(2001)]{oza2001online}
N.~C. Oza and S.~J. Russell.
\newblock Online bagging and boosting.
\newblock In \emph{International Workshop on Artificial Intelligence and
  Statistics}, pages 229--236. PMLR, 2001.

\bibitem[Schapire(1990)]{schapire1990strength}
R.~E. Schapire.
\newblock The strength of weak learnability.
\newblock \emph{Machine learning}, 5\penalty0 (2):\penalty0 197--227, 1990.

\bibitem[Schapire and Freund(2013)]{schapire2013boosting}
R.~E. Schapire and Y.~Freund.
\newblock Boosting: Foundations and algorithms.
\newblock \emph{Kybernetes}, 2013.

\bibitem[Sion(1958)]{sion1958general}
M.~Sion.
\newblock On general minimax theorems.
\newblock \emph{Pacific Journal of mathematics}, 8\penalty0 (1):\penalty0
  171--176, 1958.

\bibitem[Zinkevich(2003)]{zinkevich2003online}
M.~Zinkevich.
\newblock Online convex programming and generalized infinitesimal gradient
  ascent.
\newblock In \emph{Proceedings of the 20th international conference on machine
  learning (icml-03)}, pages 928--936, 2003.

\end{thebibliography}

\section*{Checklist}

\begin{enumerate}

\item For all authors...
\begin{enumerate}
  \item Do the main claims made in the abstract and introduction accurately reflect the paper's contributions and scope?
    \answerYes{}
  \item Did you describe the limitations of your work?
    \answerYes{See Section \ref{sec:discussion} where we identify future work.}
  \item Did you discuss any potential negative societal impacts of your work?
    \answerNo{Due to the theoretical/general nature of the work, we do not foresee any major/direct ethical or societal consequences of the research we presented.}
  \item Have you read the ethics review guidelines and ensured that your paper conforms to them?
    \answerYes{}
\end{enumerate}

\item If you are including theoretical results...
\begin{enumerate}
  \item Did you state the full set of assumptions of all theoretical results?
    \answerYes{See Section \ref{sec:oag-reg}, Section \ref{sec:beyond}, Appendix \ref{AppendixStatAgnBoost},  
    Appendix \ref{AppendixRealBoost}}, and Appendix \ref{AppendixAltWL}
        \item Did you include complete proofs of all theoretical results?
    \answerYes{See Section \ref{proofthm1}, Appendix \ref{AppendixStatAgnBoost}, Appendix \ref{AppendixRealBoost}, Appendix \ref{deponk},
    and Appendix \ref{AppendixLemmas} }
\end{enumerate}

\item If you ran experiments...
\begin{enumerate}
  \item Did you include the code, data, and instructions needed to reproduce the main experimental results (either in the supplemental material or as a URL)?
    \answerYes{In order to preserve anonymity, we will include a URL to a github repository if accepted}
  \item Did you specify all the training details (e.g., data splits, hyperparameters, how they were chosen)?
    \answerYes{See Section \ref{sec:exp} and Appendix \ref{ExpDet}}
        \item Did you report error bars (e.g., with respect to the random seed after running experiments multiple times)?
    \answerYes{See Appendix \ref{ExpDet}}
        \item Did you include the total amount of compute and the type of resources used (e.g., type of GPUs, internal cluster, or cloud provider)?
    \answerYes{See Appendix \ref{ExpDet}}
\end{enumerate}

\item If you are using existing assets (e.g., code, data, models) or curating/releasing new assets...
\begin{enumerate}
  \item If your work uses existing assets, did you cite the creators?
    \answerYes{See Section \ref{sec:exp}}
  \item Did you mention the license of the assets?
    \answerYes{See Appendix \ref{ExpDet}}
  \item Did you include any new assets either in the supplemental material or as a URL?
    \answerNA{}
  \item Did you discuss whether and how consent was obtained from people whose data you're using/curating?
    \answerNA{}
  \item Did you discuss whether the data you are using/curating contains personally identifiable information or offensive content?
    \answerNA{None of the datasets we are using contain information about people.}
\end{enumerate}

\item If you used crowdsourcing or conducted research with human subjects...
\begin{enumerate}
  \item Did you include the full text of instructions given to participants and screenshots, if applicable?
    \answerNA{}
  \item Did you describe any potential participant risks, with links to Institutional Review Board (IRB) approvals, if applicable?
    \answerNA{}
  \item Did you include the estimated hourly wage paid to participants and the total amount spent on participant compensation?
    \answerNA{}
\end{enumerate}

\end{enumerate}

\newpage
\appendix

\section{Statistical Agnostic Boosting}
\label{AppendixStatAgnBoost}

 We fill in the gaps of Section \ref{sec:beyond} for the statistical agnostic setting by providing a boosting algorithm and its theoretical analysis. 
\subsection{Algorithm}
 We now describe a statistical agnostic boosting algorithm whose pseudocode is provided below. The booster is given as input a sample $S = (x_1, y_1), ..., (x_m, y_m) \in \mathcal{X} \times \mathcal{B}_k$. The booster has black-box oracle access to $T$ copies of a $(\gamma, \epsilon_0, m_0)$-AWL algorithm,
 $\mathcal{W}_1, ..., \mathcal{W}_T$, each satisfying Definition \ref{WLCS}, and $m$ copies of an $\text{OCO}(\Delta_k, T)$ algorithm, $\mathcal{A}^1, ..., \mathcal{A}^m$.  Importantly, note that in line 8 , we denote by $\prod$ as a randomized \textit{prediction} operator that given a $\gamma$-scaled distribution over $k$ labels, first computes the $L_2$ projection onto $\Delta_k$, and then randomly samples a label from the resulting distribution.
 
\begin{algorithm}
\label{StatAgnosticBoost}
\KwIn{$(\gamma, \epsilon_0, m_0)$-AWLs $\mathcal{W}_1...\mathcal{W}_T$, OCO($\Delta_k$, $T$) algorithms $\mathcal{A}^1...\mathcal{A}^m$, and Sample $S = (x_1, y_1)...(x_m, y_m) \in \mathcal{X} \times \mathcal{B}_k$}

\For{$t = 1,...,T$} {
    If $t = 1$, set $P_1[i] = \frac{\mathbbm{1}_k}{k}$ $\forall i\in [m]$\\
    Pass $m_0$ examples to $\mathcal{W}_t$ drawn from the following distribution: \\
    Draw $x_i$ w.p. $\frac{1}{m}$ and relabel by drawing $\tilde{y}_i \sim P_t[i]$ \\
    Let $h_t$ be the weak hypothesis returned by $\mathcal{W}_t$ \\
    Reveal loss function: $l^i_t(p) = (2p - \mathbbm{1}_k)\cdot \left(\frac{h_t(x_i)}{\gamma} - y_i \right)$ $\forall i\in[m]$ \\
    Set $P_{t+1}[i] =\mathcal{A}^i(l_1^i, ..., l_t^i)$ $\forall i \in [m]$
}

return $\bar{h}(x) = \prod\left(\frac{1}{\gamma T} \sum_{t = 1}^{T} h_t(x)\right)$

\caption{Statistical Agnostic Multiclass Boosting via OCO}
\end{algorithm}

Once again, the high-level framework is inspired by the work of \citet{brukhim2020online}, but like in the online setting, several pieces need to be redesigned when $k > 2$. One key difference between our algorithm and the corresponding algorithm by \citet{brukhim2020online} is the use of multiple OCO algorithms.  The algorithmic choices in Algorithm \ref{StatAgnosticBoost} are similar to those made in the online setting. Thus, the intuition provided in Section \ref{sec:oag-alg} also follows here. 

 Under the assumption that the weak learners satisfy Definition \ref{WLCS}, Theorem \ref{stataagthm}  bounds the \textit{correlation} of Algorithm \ref{StatAgnosticBoost}. 

\begin{theorem}
\label{stataagthm}
The output of Algorithm \ref{StatAgnosticBoost}, which is denoted $\bar{h}$, satisfies,

$$\E\left[cor_S(\bar{h})\right] \geq \max_{h \in \mathcal{H}} cor_S(h) - \frac{R_{\mathcal{A}}(T)}{T} - \frac{k\epsilon_0}{\gamma},$$

where $S$ is the distribution which uniformly assigns to each example $(x_i, y_i)$ probability $1/m$.

\end{theorem}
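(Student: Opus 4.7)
The plan is to follow the proof structure of Theorem \ref{thm1}: sandwich $\E[\sum_{t=1}^T\sum_{i=1}^m l_t^i(P_t[i])]$ between a lower bound coming from Definition \ref{WLCS} and an upper bound coming from the per-example OCO regret, then rearrange to extract the correlation inequality.

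For the lower bound, fix round $t$ and condition on the past so that $P_t[i]$ is determined. Conditioning further on the realized relabeling $\tilde{\mathbf{y}}^t = (\tilde{y}_1^t,\dots,\tilde{y}_m^t)$ with each $\tilde{y}_i^t \sim P_t[i]$, apply Definition \ref{WLCS} to $\mathcal{W}_t$ once per reference label $\ell \in [k]$, sum the $k$ inequalities, and collapse $\sum_\ell \max_h \geq \max_{h^*}\sum_\ell$ to a single competitor $h^* := \argmax_{h\in\mathcal{H}} cor_S(h)$. Taking expectation over $\tilde{\mathbf{y}}^t$ uses $\E[\tilde{y}_i^t \mid P_t] = P_t[i]$ on the right-hand side, while a statistical analog of Lemma \ref{expectation} rewrites the left-hand side in the form $\E[\sum_i (2P_t[i] - \mathbbm{1}_k)\cdot h_t(x_i)]$ matching the loss. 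Substituting into $l_t^i(P_t[i]) = \frac{1}{\gamma}(2P_t[i] - \mathbbm{1}_k)\cdot h_t(x_i) - (2P_t[i] - \mathbbm{1}_k)\cdot y_i$, applying the elementary inequality $(2P-\mathbbm{1}_k)\cdot(h^*(x_i) - y_i) \geq 2(h^*(x_i)\cdot y_i - 1)$ (for any $P\in\Delta_k$ and basis vectors $h^*(x_i), y_i$, verifiable by a one-line case check on whether $h^*(x_i) = y_i$), summing over $t$, and using $\sum_i h^*(x_i)\cdot y_i = \frac{m}{2}(cor_S(h^*)+1)$ yields
\[\E\Bigl[\sum_{t=1}^T\sum_{i=1}^m l_t^i(P_t[i])\Bigr] \geq mT\cdot cor_S(h^*) - mT - \frac{kmT\epsilon_0}{\gamma}.\]

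For the upper bound, fix $i$ and apply the regret guarantee of $\mathcal{A}^i$ against the competitor $p_i^\star$ chosen as the $L_2$ projection of $u_i := \frac{1}{\gamma T}\sum_t h_t(x_i) \in \Delta_{k/\gamma}$ onto $\Delta_k$, giving $\sum_t l_t^i(P_t[i]) \leq T(2p_i^\star - \mathbbm{1}_k)\cdot(u_i - y_i) + R_{\mathcal{A}}(T)$. A statistical analog of Lemma \ref{upperbound}, which exploits the first-order optimality of the $L_2$ projection, bounds the first summand by $2T(p_i^\star \cdot y_i - 1)$. Since $\bar{h}(x_i)$ is sampled from $p_i^\star$, we have $\E[p_i^\star \cdot y_i] = \E[\bar{h}(x_i)\cdot y_i]$; combining with $\sum_i h(x_i)\cdot y_i = \frac{m}{2}(cor_S(h) + 1)$ and summing over $i$ gives
\[\E\Bigl[\sum_{t=1}^T\sum_{i=1}^m l_t^i(P_t[i])\Bigr] \leq mT \cdot \E[cor_S(\bar{h})] - mT + m R_{\mathcal{A}}(T).\]
Chaining the two bounds and dividing through by $mT$ produces $\E[cor_S(\bar{h})] \geq \max_{h\in\mathcal{H}} cor_S(h) - k\epsilon_0/\gamma - R_{\mathcal{A}}(T)/T$, as claimed.

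The main obstacle I anticipate is the expectation-lemma step in the lower bound. Because $h_t$ is trained on a sample that depends on the relabeling $\tilde{\mathbf{y}}^t$, the identity $\E[\tilde{y}_i^t \cdot h_t(x_i)] = P_t[i]\cdot\E[h_t(x_i)]$ is not immediate---there is a residual covariance capturing how much $h_t$ adapts to the random labels. Handling this cleanly will likely require a mild reinterpretation of the training-sample distribution (e.g., drawing $\tilde{\mathbf{y}}^t$ once per round and then drawing $S'$ from the resulting fixed labeling, so that Definition \ref{WLCS} applies literally), or else absorbing the discrepancy into the $\epsilon_0$ slack of Definition \ref{WLCS}. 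The upper-bound side, being essentially a geometric argument about the $L_2$ projection, should transfer from the online setting with only cosmetic algebraic changes.
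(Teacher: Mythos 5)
Your overall architecture is sound and, despite appearances, is essentially the paper's argument with the packaging removed: the paper routes the proof through an improper zero-sum game (Proposition \ref{gametheory}), computing the game value $\lambda^* = m\cdot cor_S(h^*) - m$ via a dominant-strategy argument for Player A and showing that Definition \ref{WLCS} makes the weak learner an $\frac{mk\epsilon_0}{\gamma}$-approximate best-response oracle; but the proof of Proposition \ref{gametheory} is itself exactly your lower/upper sandwich on $\sum_t\sum_i l_t^i(P_t[i])$. Your lower bound (sum Definition \ref{WLCS} over the $k$ reference labels, collapse to a single competitor $h^*$, take expectation over the relabeling, then apply $(2P-\mathbbm{1})\cdot(h^*(x_i)-y_i)\geq 2(h^*(x_i)\cdot y_i-1)$) matches the paper's oracle lemma step for step, and the conditioning subtlety you flag --- that $h_t$ is trained on a sample correlated with $\tilde{\mathbf{y}}^t$ --- is real; the paper handles it by reversing the order of conditioning ($\E_{\tilde y}\E_{S'|\tilde y} = \E_{S'}\E_{\tilde y|S'}$), which is itself the most delicate point of their argument.

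The genuine gap is in your upper bound. You take the OCO competitor to be $p_i^\star = \prod(u_i)$ and assert $(2p_i^\star - \mathbbm{1}_k)\cdot(u_i - y_i) \leq 2(p_i^\star\cdot y_i - 1)$ via ``first-order optimality of the $L_2$ projection.'' This inequality is false. Take $k=2$, $\gamma=1$, $u_i = h = (0.6,0.4)$, $y_i = e_1$: then $\prod(u_i) = u_i$, the left side is $(0.2,-0.2)\cdot(-0.4,0.4) = -0.16$, and the right side is $-0.8$. The correct statement is the paper's Lemma \ref{upperbound} (combined with Lemma \ref{equivalence}): there \emph{exists} a competitor $p\in\Delta_k$ --- equal to $y_i$ in all but one boundary case, and never equal to $\prod(u_i)$ in general --- for which the bound holds, and its proof is a case analysis on the threshold structure of the simplex projection, not a variational-inequality argument. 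The object $\prod(u_i)$ belongs only on the right-hand side of that lemma, as the distribution from which $\bar h(x_i)$ is sampled; you have conflated it with the comparator. Once the competitor is corrected to the one supplied by Lemma \ref{upperbound}, the rest of your chain (taking expectations, $\E[\bar h(x_i)\cdot y_i] = \E[\prod(u_i)\cdot y_i]$, summing over $i$, dividing by $mT$) goes through and recovers the stated bound.
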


The proof of Theorem \ref{stataagthm} is split over the next two subsections.

\subsection{Improper game playing}
An important step towards proving Theorem \ref{stataagthm} is framing statistical agnostic multiclass boosting as an improper zero-sum game. A similar idea was used by \citet{brukhim2020online} for binary classification. We take a detour and elaborate on this connection here.

Consider an improper zero-sum game. In this game, there are two players A and B, and a payoff function $g$ that decomposes into the sum of $m$ smaller independent convex-concave payoff functions $f^1, ..., f^m$ each  of which depends on the players' strategies. The goal for Player A is to minimize $g$, while the goal for Player B is to maximize $g$. Let $\mathcal{K}_A$ and $\mathcal{K}_B$ be the convex, compact decisions sets of players A and B respectively. In addition, let $\mathcal{K}_C$ be a convex, compact set, and let $\mathcal{K}_{A}$ be a matrix consisting of $m$ row vectors from $\mathcal{K}_C$. Because convexity and concavtity are preserved under (non-negative weighted) summation, $g$ is also convex-concave. By Sion's minimax theorem \cite{sion1958general}, the value of the game is well-defined, which we denote as

$$\lambda^* = \min_{P \in \mathcal{K}_A} \max_{q \in \mathcal{K}_B} g(P, q) = \max_{q \in \mathcal{K}_B} \min_{P \in \mathcal{K}_a} g(P, q).$$

Let $\mathcal{K}'_B$ be a convex compact set such that $\mathcal{K}_B \subseteq \mathcal{K}'_B$. Strategies in $\mathcal{K}_B$ are proper, while those in $\mathcal{K}_B'$ are improper. We allow $g$ to be defined over $\mathcal{K}_B'$. In addition, we make the following three assumptions:

\begin{enumerate}
    \item Player B has access to a randomized approximate optimization oracle $\mathcal{W}$. Given any $P \in \mathcal{K_A}$, $\mathcal{W}$ outputs a response $q \in \mathcal{K}_B'$ such that $\E\left[g(P, q)\right] \geq \max_{q^* \in \mathcal{K}_B}g(P, q^*) - \epsilon$, where the expectation is over randomness of $\mathcal{W}$
    \item Player B is allowed to play strategies in $\mathcal{K}_B'$
    \item Player A has access to $m$ copies of possibly (independently) randomized OCO($\mathcal{K}_C$, $T$) algorithms $\mathcal{A}_1,...,\mathcal{A}_m$, all with regret $R_{\mathcal{A}}(T)$
\end{enumerate}

\smallskip
\begin{algorithm}[H]
\label{alg:gametheory}


\For{$t = 1,...,T$} {
    
    Player A plays $P_t$ \\
    Player B plays $q_t \in \mathcal{K}'_B$, where $q_t = \mathcal{W}(P_t)$ \\
    Player A and B lose/gain payoff $g(P_t, q_t) = \sum_{i=1}^{m}f^i(P_t[i], q_t)$ \\
     Define loss: $\ell^i_t(p) = f^i(p, q_t)$ $\forall i\in [m]$ \\
    Player A updates $P_{t+1}[i] = \mathcal{A}^i(\ell_1^i, ..., \ell_t^i)$ $\forall i\in [m]$ \\
    
}

\caption{Improper Game Playing}
\end{algorithm}

\begin{proposition}
\label{gametheory}
If players A and B play according to Algorithm \ref{alg:gametheory}, then player B's average strategy $\bar{q} = \frac{1}{T}\sum_{t=1}^{T}q_t, \bar{q} \in \mathcal{K}_B'$, satisfies for any $P^* \in \mathcal{K}_A$,

$$\lambda^* \leq \E\left[g(P^*, \bar{q}) \right] + \frac{mR_{\mathcal{A}}(T)}{T} + \epsilon,$$

where the expectation is over the randomness of $\mathcal{W}$.

\end{proposition}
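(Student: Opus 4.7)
\textbf{Proof proposal for Proposition \ref{gametheory}.}

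The plan is to sandwich the average payoff $\frac{1}{T}\sum_{t=1}^{T} \mathbb{E}[g(P_t, q_t)]$ between two quantities: a lower bound coming from the approximate-optimization guarantee on $\mathcal{W}$ combined with the minimax definition of $\lambda^*$, and an upper bound coming from the regret of the $m$ OCO instances run by Player A. Combining these with a Jensen-type step that exploits concavity of $g$ in its second argument should deliver the claim.

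First, I would establish the lower bound. For each round $t$, $q_t = \mathcal{W}(P_t)$, and so by the oracle assumption, $\mathbb{E}[g(P_t, q_t) \mid P_t] \geq \max_{q \in \mathcal{K}_B} g(P_t, q) - \epsilon$. Since $\max_{q \in \mathcal{K}_B} g(P_t, q) \geq \min_{P \in \mathcal{K}_A} \max_{q \in \mathcal{K}_B} g(P, q) = \lambda^*$, taking a tower expectation over $P_t$ and averaging over $t$ yields $\frac{1}{T}\sum_{t=1}^{T} \mathbb{E}[g(P_t, q_t)] \geq \lambda^* - \epsilon$.

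Next, I would establish the upper bound using the decomposability $g(P, q) = \sum_{i=1}^{m} f^i(P[i], q)$ together with the fact that each row $P_t[i]$ is chosen by its own OCO algorithm $\mathcal{A}^i$ against the sequence of convex losses $\ell_t^i(p) = f^i(p, q_t)$. For any fixed $P^* \in \mathcal{K}_A$ and each coordinate $i$, the OCO regret bound gives $\sum_{t=1}^{T} \ell_t^i(P_t[i]) \leq \sum_{t=1}^{T} \ell_t^i(P^*[i]) + R_{\mathcal{A}}(T)$. Summing over $i \in [m]$ and reassembling $g$ produces $\sum_{t=1}^{T} g(P_t, q_t) \leq \sum_{t=1}^{T} g(P^*, q_t) + m R_{\mathcal{A}}(T)$. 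Taking expectation, dividing by $T$, and chaining with the lower bound yields $\lambda^* - \epsilon \leq \frac{1}{T}\sum_{t=1}^{T} \mathbb{E}[g(P^*, q_t)] + \frac{m R_{\mathcal{A}}(T)}{T}$.

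Finally, I would push the average inside $g$ using concavity in the second argument: since $g(P^*, \cdot)$ is concave and $\bar q = \frac{1}{T}\sum_{t=1}^T q_t$, Jensen's inequality gives $\frac{1}{T}\sum_{t=1}^{T} g(P^*, q_t) \leq g(P^*, \bar q)$. Taking expectations and rearranging produces the desired bound $\lambda^* \leq \mathbb{E}[g(P^*, \bar q)] + \frac{m R_{\mathcal{A}}(T)}{T} + \epsilon$. The main subtleties to watch are (i) ensuring $\bar q \in \mathcal{K}_B'$, which follows from convexity of $\mathcal{K}_B'$ together with $q_t \in \mathcal{K}_B'$, and (ii) handling the randomness: $P_t$ depends on the random outputs of $\mathcal{W}$ from earlier rounds, so the oracle inequality and the OCO regret inequality must be invoked conditionally and then averaged, which is routine provided the OCO regret holds pointwise (or, if it only holds in expectation, an extra tower-expectation step is inserted). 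The one place to be careful is the decomposition of the losses across the $m$ OCO instances, which is exactly where the assumption that $g$ splits coordinatewise in $P$ is essential; everything else is bookkeeping.
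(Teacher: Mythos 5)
Your proposal is correct and follows essentially the same route as the paper: lower-bound the average payoff via the oracle guarantee and the minimax value, upper-bound it via the per-coordinate OCO regret using the decomposition $g(P,q)=\sum_i f^i(P[i],q)$, and finish with Jensen's inequality using concavity of $g(P^*,\cdot)$. The only cosmetic difference is that the paper routes the lower bound through an explicit max-min strategy $q^*$ and the average $\bar{P}$, whereas you use $\max_q g(P_t,q)\ge\lambda^*$ directly; both are valid.
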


\begin{proof}
Since the game is well-defined over $\mathcal{K}_A$ and $\mathcal{K_B}$, there exists a max-min strategy $q^* \in \mathcal{K}_B$ for player B such that for all $P \in \mathcal{K}_A$, $g(P, q^*) \geq \lambda^*$. Let $\bar{P} = \frac{1}{T}\sum_{t = 1}^{T}P_t$. Then, 

\begin{align*}
    \E \left[\frac{1}{T}\sum_{t=1}^{T}g(P_t, q_t) \right] &\geq \E\left[\frac{1}{T}\sum_{t=1}^{T}\max_{q \in \mathcal{K}_B}g(P_t, q)\right] - \epsilon \\
    &\geq \E \left[\frac{1}{T}\sum_{t=1}^{T}g(P_t, q^*) \right] - \epsilon \\
    &\geq \E\left[d(\bar{P}, q^*) \right] - \epsilon \\
    &\geq \lambda^* - \epsilon
    .
\end{align*}


Now let $\bar{q} = \frac{1}{T}\sum_{t=1}^{T}q_t$. Note that $\bar{q} \in \mathcal{K}_B'$. Then we write:

\begin{align*}
\E \left[\frac{1}{T}\sum_{t=1}^{T}g(P_t, q_t) \right] &= \E \left[\frac{1}{T}\sum_{t=1}^{T}\sum_{i = 1}^{m} f^i(P_t[i], q_t) \right] \\
&= \sum_{i=1}^{m}\E \left[\frac{1}{T}\sum_{t = 1}^{T} f^i(P_t[i], q_t) \right]\\
&\leq \sum_{i=1}^{m}\left(\E \left[\frac{1}{T}\sum_{t = 1}^{T} f^i(P^*[i], q_t)\right] + \frac{R_{\mathcal{A}}(T)}{T}\right) \\
&= \E\left[\frac{1}{T}\sum_{t=1}^{T}\sum_{i = 1}^{m} f^i(P^*[i], q_t) \right] + \frac{mR_{\mathcal{A}}(T)}{T} \\
&= \E \left[\frac{1}{T}\sum_{t=1}^{T}g(P^*, q_t) \right] + \frac{mR_{\mathcal{A}}(T)}{T} \\
&\leq \E\left[g(P^*, \bar{q}) \right] + \frac{mR_{\mathcal{A}}(T)}{T}.
\end{align*}

Here, $P^*$ is any arbitrary matrix in $\mathcal{K}_A$. Combining the lower and upper bounds, we get:

$$\lambda^* \leq \E\left[g(P^*, \bar{q}) \right] + \frac{mR_{\mathcal{A}}(T)}{T} + \epsilon,$$

which completes the proof. 
\end{proof}

\subsection{Proof of Theorem \ref{stataagthm}}

Now we are ready to prove Theorem \ref{stataagthm}. The proof strategy will be as follows. We will first show how the proposed agnostic boosting algorithm is an instance of the improper game playing setup described above. Then, we will show that a weak learner satisfying Definition \ref{WLCS} corresponds to the randomized approximate optimization oracle. Finally, we will explicitly compute the value of the game, and derive the lower bound on correlation. 

Under Definition \ref{WLCS}, we can carefully argue that the agnostic weak learner $\mathcal{W}$ induces an approximate optimization oracle for Player B. Specifically, we can show the following lemma. 

\begin{lemma}
For any $P \in \mathcal{K}_A$, the output $q' = \frac{\mathcal{W}(P)}{\gamma}\in \mathcal{K}'_B$ satisfies, 

$$\E\left[g(P, q')\right] \geq \max_{q \in \mathcal{K}_B}g(P, q) - \frac{mk\epsilon_0}{\gamma}.$$ 
\end{lemma}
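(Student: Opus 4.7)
The plan is to translate the per-reference-label weak learning guarantee of Definition \ref{WLCS} into a global lower bound on the payoff $g(P, q')$. First I would expand $g(P, q') - g(P, q_h)$ for any $q_h$ corresponding to $h \in \mathcal{H}$; the common $-\sum_i (2P[i] - \mathbbm{1}_k) \cdot y_i$ terms cancel, so the lemma reduces to showing
\[\sum_{i=1}^m \E\!\left[(2P[i] - \mathbbm{1}_k) \cdot \mathcal{W}(P)(x_i)\right] \geq \gamma \max_{h \in \mathcal{H}} \sum_{i=1}^m (2P[i] - \mathbbm{1}_k) \cdot h(x_i) - mk\epsilon_0.\]

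Next, I would introduce a random relabeling $\tilde{y}_i \sim P[i]$ drawn independently across $i$, so that $\E_{\tilde{y}_i}[2\tilde{y}_i - \mathbbm{1}_k] = 2P[i] - \mathbbm{1}_k$. This is exactly the setting in which Definition \ref{WLCS} can be invoked. Conditional on a realized $\tilde{y}$, I would apply the definition separately for each reference label $\ell \in [k]$:
\[\E_{S'}\!\left[\sum_{i:\tilde{y}_i = \ell} \mathcal{W}_{S'}(x_i) \cdot (2\tilde{y}_i - \mathbbm{1}_k)\right] \geq \gamma \max_{h \in \mathcal{H}} \sum_{i:\tilde{y}_i = \ell} h(x_i) \cdot (2\tilde{y}_i - \mathbbm{1}_k) - m\epsilon_0.\]
Summing these $k$ inequalities collapses the left-hand side into a full sum over $i$ (since the sets $\{i : \tilde{y}_i = \ell\}$ partition $[m]$) and accumulates additive error $mk\epsilon_0$. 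On the right-hand side, the elementary bound $\sum_\ell \max_h (\cdot) \geq \max_h \sum_\ell (\cdot)$ combines the per-label maxima into a single $\max_{h \in \mathcal{H}}$ over the full sum. Finally, taking expectation over $\tilde{y}$ and applying Jensen's inequality (the max of linear functionals is convex) to move the expectation inside the max on the right produces the target RHS $\gamma \max_h \sum_i (2P[i] - \mathbbm{1}_k) \cdot h(x_i) - mk\epsilon_0$.

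The main obstacle I anticipate is matching the left-hand side: the chain above yields a lower bound on $\E_{\tilde{y}, S'}[\sum_i \mathcal{W}_{S'}(x_i) \cdot (2\tilde{y}_i - \mathbbm{1}_k)]$, whereas the target has $\sum_i (2P[i] - \mathbbm{1}_k) \cdot \E[\mathcal{W}(P)(x_i)]$. These are not automatically equal because $\mathcal{W}_{S'}(x_i)$ depends on the relabeling through the training sample $S'$, which can induce positive correlation with $\tilde{y}_i$. I would resolve this via a deferral coupling: first draw only the coordinates $\tilde{y}_j$ for indices $j$ actually sampled into $S'$, then train the weak learner, and only after that draw the remaining $\tilde{y}_i$ at evaluation time. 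These evaluation $\tilde{y}_i$'s are independent of $\mathcal{W}_{S'}$ with marginal $P[i]$, so the identity $\E[\mathcal{W}_{S'}(x_i) \cdot (2\tilde{y}_i - \mathbbm{1}_k)] = \E[\mathcal{W}_{S'}(x_i)] \cdot (2P[i] - \mathbbm{1}_k)$ holds on these coordinates; the contribution from the at most $m_0$ coordinates inside $S'$ is lower order and can be absorbed into the $mk\epsilon_0$ term. Dividing through by $\gamma$ and re-introducing the cancelled $-y_i$ term then gives $\E[g(P,q')] \geq \max_{q \in \mathcal{K}_B} g(P,q) - mk\epsilon_0/\gamma$, completing the proof.
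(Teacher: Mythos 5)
Your proof follows essentially the same route as the paper's: relabel each example by $\tilde{y}_i \sim P[i]$, invoke Definition \ref{WLCS} conditionally on the realized $\tilde{y}$ once per reference label $\ell$, sum the $k$ inequalities (accumulating the $mk\epsilon_0$ slack and using $\sum_\ell \max_h(\cdot) \geq \max_h \sum_\ell(\cdot)$), take expectation over $\tilde{y}$, and use $\E_{\tilde{y}}[2\tilde{y}_i - \mathbbm{1}] = 2P[i]-\mathbbm{1}$ together with $\E[\max_h(\cdot)] \geq \max_h \E[\cdot]$ to reach $\gamma\max_h\sum_i h(x_i)\cdot(2P[i]-\mathbbm{1}) - mk\epsilon_0$; dividing by $\gamma$ and restoring the cancelled $-y_i$ terms gives the claim. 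The one place you genuinely depart from the paper is the left-hand side. The paper simply swaps the iterated expectation to $\E_{S'}[\E_{\tilde{y}\mid S'}[\cdot]]$ and proceeds as if $\E[\tilde{y}_i \mid S'] = P[i]$, i.e., as if the relabels were independent of the training sample; you correctly observe that the up-to-$m_0$ relabels that land inside $S'$ are correlated with $\mathcal{W}_{S'}$, and you patch this with a deferral coupling. Your coupling is sound, but note that the resulting discrepancy is an additive $O(m_0)$ term (each affected coordinate contributes at most $2\Pr[i \in S']$, summing to at most $2m_0$), and nothing in the lemma's hypotheses relates $m_0$ to $mk\epsilon_0$, so you cannot simply "absorb" it: as written, your argument proves the bound with an extra $2m_0/\gamma$ on the right-hand side. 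To recover the stated constant you must either report that weaker bound or make explicit the assumption that the evaluation labels are drawn independently of the weak learner's training sample — which is effectively what the paper assumes, without comment, when it performs the expectation swap.
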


\begin{proof}
    
Note that in line $3$ of Algorithm \ref{StatAgnosticBoost}, we pass re-labelled examples back to the weak learner. An alternative and equivalent approach to what is presented is to first relabel each example using $P$ and then to uniformly sample and pass $m_0$ examples to $\mathcal{W}$. Let $\tilde{y}_i$ correspond to the relabeled class of example $i$. Recall that $\tilde{y}_i \sim P[i]$. Let $R$ be the distribution over $(x_i, \tilde{y}_i)$ after relabelling and define $h^* = \max_{h \in \mathcal{H}} cor_R(h)$. Then, by the weak learning assumption, 

\begin{align*}
    \E_{S'|\tilde{y}}\left[\sum_{i=1}^m  \mathcal{W}_{S'}(x_i) \cdot (2 \tilde{y}_i - \mathbbm{1}_k)\right] &= 
    \E_{S'|\tilde{y}}\left[\sum_{\ell \in \mathcal{B}_k} \sum_{i: \tilde{y}_i = \ell}  \mathcal{W}_{S'}(x_i) \cdot (2 \tilde{y}_i - \mathbbm{1}_k)\right] \\
    & \geq \sum_{\ell \in \mathcal{B}_k}\left(\gamma \max_{h \in \mathcal{H}}\sum_{i:\tilde{y}_i = \ell} h(x_i) \cdot (2\tilde{y}_i - \mathbbm{1}_k)  - m\epsilon_0\right)\\
    & \geq \gamma\sum_{\ell \in \mathcal{B}_k}\sum_{i:\tilde{y}_i = \ell} h^*(x_i) \cdot (2\tilde{y}_i - \mathbbm{1}_k)  - mk\epsilon_0\\
    &= \gamma\sum_{i=1}^m h^*(x_i) \cdot (2\tilde{y}_i - \mathbbm{1}_k)  - mk\epsilon_0.\\
\end{align*}

Taking the expectation of both sides, 

\begin{align*}
    \E_{\tilde{y}} \left[\E_{S'|\tilde{y}}\left[\sum_{i=1}^m  \mathcal{W}_{S'}(x_i) \cdot (2 \tilde{y}_i - \mathbbm{1}_k)\right]\right] &= \E\left[\sum_{i=1}^m  \mathcal{W}_{S'}(x_i) \cdot (2 \tilde{y}_i - \mathbbm{1}_k)\right]\\
    &=  \E_{\tilde{S'}} \left[\E_{\tilde{y}|S'}\left[\sum_{i=1}^m  \mathcal{W}_{S'}(x_i) \cdot (2 \tilde{y}_i - \mathbbm{1}_k)\right]\right] \\
    & \geq  \E_{\tilde{y}}\left[\gamma\sum_{i=1}^m h^*(x_i) \cdot (2\tilde{y}_i - \mathbbm{1}_k) \right]- mk\epsilon_0. \\
\end{align*}

 For any $h \in \mathcal{H}$,

\begin{align*}
    \E_{\tilde{y}}\left[ \frac{1}{m}\sum_{i=1}^m  h(x_i) \cdot (2\tilde{y}_i - \mathbbm{1}_k)\right] &= \E_{(x_i, y_i) \sim R}\left[ h(x_i) \cdot (2y_i - \mathbbm{1}_k)\right]\\
    &= 
    \E_{x_i}\left[\E_{y_i|x_i}\left[h(x_i) \cdot (2y_i-\mathbbm{1}_k)\right] \right]\\
    &= \E_{x_i}\left[h(x_i) \cdot (2 \E\left[y_i|x_i\right]-\mathbbm{1}_k) \right]\\
    &= \E_{x_i}\left[h(x_i) \cdot (2P[i]-\mathbbm{1}_k) \right]\\
    &= \frac{1}{m}\sum_{i=1}^{m}h(x_i) \cdot (2P[i]-\mathbbm{1}_k)\\
    &= \E_{(x_i, y_i) \sim D}\left[ h(x_i) \cdot (2P[i] - \mathbbm{1}_k)\right].
\end{align*}

Note the last equality follows from the fact that both $R$ and $D$ have the same marginal distribution over unlabeled examples $x_i$'s. Combining the results, multiplying by $m$, and dividing by $\gamma$ gives, 

 $$ \E_{S'}\left[\sum_{i=1}^{m}\frac{\mathcal{W_{S'}}(x_i)}{\gamma}\cdot (2P[i]-\mathbbm{1}_k)\right] \geq \sum_{i=1}^{m}h^*(x_i) \cdot (2P[i]-\mathbbm{1}_k) - \frac{mk\epsilon_0}{\gamma}.$$
 
 Finally, note that $q'(x_i) = \frac{\mathcal{W}_{S'}(x_i)}{\gamma} \in \mathcal{K}_B^{'}$. Then, recalling our definition of $g$, 
 
 \begin{align*}
     \E\left[g(P, q')\right] &= \E\left[\sum_{i = 1}^{m}(2P[i] - \mathbbm{1}_k)\cdot(q'(x_i) - y_i)\right]\\
     &= \E\left[\sum_{i = 1}^{m}(2P[i] - \mathbbm{1}_k)\cdot q'(x_i)\right] - \sum_{i = 1}^{m}(2P[i] - \mathbbm{1}_k)\cdot y_i\\
     &\geq \sum_{i=1}^{m}h^*(x_i) \cdot (2P[i]-\mathbbm{1}_k) - \frac{mk\epsilon_0}{\gamma} - \sum_{i = 1}^{m}(2P[i] - \mathbbm{1}_k)\cdot y_i\\
     &= \sum_{i=1}^{m}(h^*(x_i) - y_i) \cdot (2P[i]-\mathbbm{1}_k) - \frac{mk\epsilon_0}{\gamma}\\
     &= \max_{q \in \mathcal{K}_B}g(P, q) - \frac{mk\epsilon_0}{\gamma}.
 \end{align*}
 
 which completes the proof.

\end{proof}

Now, we return to proving Theorem \ref{stataagthm}. We start by explicitly computing the value of the above game. One can show that the dominant strategy for player A is to return a matrix where row $i$ is the vector $y_i$. That is, $P = \left[y_1; y_2; ...; y_m\right]$. Because $g$ decomposes into the sum of smaller independent losses $f$, it is helpful to instead focus our attention on 

$$f^i(P[i], q) = (2P[i] - \mathbbm{1}_k)\cdot(q(x_i) - y_i).$$

Above, Player A has control over the vector $P[i]$ and wants to minimize $f^i$. We can show that regardless of what $q(x_i) \in \mathcal{K}_B$ is, Player A minimizes $f$ by playing $P[i] = y_i$. Note, only the case where $q(x_i) \neq y_i$ is important. Under this scenario, Player A needs to maximize the value at the $y_i$th index and minimize the value at the $q(x_i)$th index. This is precisely accomplished by setting $P[i] = y_i$. Since the smaller loss functions $f^i$ are independent of one another, it follows that Player A minimizes $g$ by playing matrix $P = \left[y_1; y_2; ...; y_m\right]$. Under this fixed dominant strategy for A, the value of the game $\lambda^*$ can be computed as 

\begin{align*}
    \lambda^* &= \max_{q \in \mathcal{K}_B} \min_{P \in \mathcal{K}_a} g(P, q)\\
    &= \max_{q \in \mathcal{K}_B} \sum_{i = 1}^{m}(2y_i - \mathbbm{1}_k)\cdot(q(x_i) - y_i)\\
    &= \max_{q \in \mathcal{K}_B}\sum_{i = 1}^{m} 2(q(x_i) \cdot y_i - 1)\\
    &= m \cdot cor_S(h^*) - m.
\end{align*}

Then, for any $P^*$ using Proposition \ref{gametheory},  

\begin{align*}
m \cdot \max_{h \in \mathcal{H}} cor_S(h) - m &\leq \E\left[g(P^*, \bar{q}) \right] + \frac{mR_{\mathcal{A}}(T)}{T} + \frac{mk\epsilon_0}{\gamma} \\
&= \E\left[ \sum_{i = 1}^{m}(2P^*[i] - \mathbbm{1}_k)\cdot(\bar{q}(x_i) - y_i) \right] + \frac{mR_{\mathcal{A}}(T)}{T} + \frac{mk\epsilon_0}{\gamma} \\
&= \E\left[ \sum_{i = 1}^{m}(2P^*[i] - \mathbbm{1}_k)\cdot(\frac{1}{\gamma T} \sum_{t=1}^{T}h_t(x_i) - y_i) \right] + \frac{mR_{\mathcal{A}}(T)}{T} + \frac{mk\epsilon_0}{\gamma}\\
&= \E\left[ \sum_{i = 1}^{m}P^*[i]\cdot\left(\frac{\frac{2}{T}\sum_{t=1}^Th_t(x_i) - \mathbbm{1}_k}{\gamma} - (2y_i - \mathbbm{1}_k)\right) \right] \\
&= \quad + \frac{mR_{\mathcal{A}}(T)}{T} + \frac{mk\epsilon_0}{\gamma}. \quad\quad \text{(Lemma \ref{equivalence})}
\end{align*} \\

According to Lemma \ref{upperbound}, there exists a $P^*[i]$ and therefore a $P^*$, such that 

\begin{align*}
    m \cdot \max_{h \in \mathcal{H}} cor_S(h) - m &\leq \E\left[ \sum_{i = 1}^{m}2\left( \prod\left(\frac{\sum_{i=1}^Th_t(x_i)}{\gamma T}\right) \cdot y_i - 1\right)\right] + \frac{mR_{\mathcal{A}}(T)}{T} + \frac{mk\epsilon_0}{\gamma}\\
    &= \E\left[ \sum_{i = 1}^{m}2\bar{h}(x_t)\cdot y_i - 1\right] - m + \frac{mR_{\mathcal{A}}(T)}{T} + \frac{mk\epsilon_0}{\gamma}.
\end{align*}

Subtracting and then dividing both sides by $m$,

\begin{align*}
    \max_{h \in \mathcal{H}} cor_S(h) &\leq \E\left[ \frac{1}{m}\sum_{i = 1}^{m}2\bar{h}(x_t)\cdot y_i - 1\right]  + \frac{R_{\mathcal{A}}(T)}{T} + \frac{k\epsilon_0}{\gamma}\\
    &= \E\left[cor_S(\bar{h}) \right]  + \frac{R_{\mathcal{A}}(T)}{T} + \frac{k\epsilon_0}{\gamma}.
\end{align*}

Rearranging, we have shown that

$$\E\left[cor_S(\bar{h}) \right] \geq \max_{h \in \mathcal{H}} cor_S(h) - \frac{R_{\mathcal{A}}(T)}{T} - \frac{k\epsilon_0}{\gamma}$$

which completes the proof. If one further lets $R_{\mathcal{A}}(T)$ to be OGD, as in the online setting, then observe that we get

$$\E\left[cor_S(\bar{h}) \right] \geq \max_{h \in \mathcal{H}} cor_S(h) - \frac{R_{\mathcal{A}}(T)}{T} - \frac{k\epsilon_0}{\gamma}$$

\section{Realizable Multiclass Boosting}
\label{AppendixRealBoost}

In this section, we fill in the gaps of Section \ref{sec:beyond} for the online and statistical realizable settings by providing boosting algorithms and their theoretical analysis. Namely, we show explicitly how the OCO framework can also be used to construct multiclass boosting algorithms in the realizable setting. Unlike in the agnostic setting, our realizable boosting algorithms will update weak learners by \textit{reweighting} examples.
 
 \subsection{Online Setting}
 We now describe a online realizable boosting algorithm whose pseudocode is provided below. The booster has black-box oracle access to $N$ copies of a $(\gamma, T)$-RWOL algorithm
 $\mathcal{W}_1, ..., \mathcal{W}_N$, each satisfying Definition \ref{WLCOR}, and an $\text{OCO}([0, 1], N)$ algorithm $\mathcal{A}$.  Importantly, note that in line 4 , we denote by $\prod$ the $L_2$ projection onto $\Delta_k$.
 
 \smallskip
\begin{algorithm}[H]
\label{ORB}

\KwIn{$(\gamma, T)$-RWOL $\mathcal{W}_1...\mathcal{W}_N$, $\text{OCO}([0, 1], N)$ algorithm $\mathcal{A}$}

\For{$t = 1,...,T$} {

    Receive example $x_t$ \\
    Accumulate weak predictions $h_t = \sum_{i=1}^N\mathcal{W}_i(x_t)$\\
    Set $\mathcal{D}_t = \Pi(\frac{h_t}{\gamma N})$\\
    Predict $\hat{y_t} \sim \mathcal{D}_t$ \\
    Receive true label $y_t$ \\
    \For{$i = 1,...,N$} {
        If $i > 1$, obtain $p_t^i = \mathcal{A}(l_t^1,...,l_t^{i-1})$. Else, initialize $p^1_t = 0.5$. \\
        Reveal loss function: $l_t^i(p) = p(\frac{2\mathcal{W}_i(x_t)\cdot y_t - 1}{\gamma} - 1)$ \\
        Pass $(x_t, y_t)$ to $\mathcal{W}_i$ w.p. $p_t^i$\\
    }
Reset $\mathcal{A}$
}
\caption{Online Realizable Multiclass Boosting via OCO}
\end{algorithm}

\begin{theorem}
The regret bound of Algorithm \ref{ORB} satisfies:
$$\frac{1}{T}\sum_{t=1}^T (2\E\left[\hat{y}_t\right] \cdot y_t - 1)  \geq 1 - \frac{R_{\mathcal{A}}(N)}{N} -\frac{\tilde{R}_{\mathcal{W}}(T, k)}{\gamma T},$$

where $\tilde{R}_{\mathcal{W}}(T, k) = 2R_{\mathcal{W}}(T, k) + \tilde{O}(\sqrt{T})$.

\end{theorem}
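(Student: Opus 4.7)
The plan is to mirror the proof of Theorem~\ref{thm1}: sandwich the quantity $\E[\sum_i \sum_t l_t^i(p_t^i)]$ between a lower bound coming from the weak learning guarantee (now Definition~\ref{WLCOR}) and an upper bound coming from the OCO regret combined with a projection inequality, then rearrange. Two structural differences from the agnostic proof drive all the new work: (i) each weak learner $\mathcal{W}_i$ is now updated only on the random subset of rounds $T_i = \{t : Z_t^i = 1\}$ where $Z_t^i \sim \mathrm{Bern}(p_t^i)$, and (ii) the OCO decision variable $p$ is a scalar in $[0, 1]$, so the aggregate inner loss $\sum_i l_t^i(p) = p N \beta_t$, with $\beta_t = \frac{2\bar{\mathcal{W}}(x_t)\cdot y_t - 1}{\gamma} - 1$ and $\bar{\mathcal{W}}(x_t) = \frac{1}{N}\sum_i \mathcal{W}_i(x_t)$, is linear in $p$ and attains its minimum over $[0,1]$ at an endpoint, equal to $N \min(0, \beta_t)$.

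For the lower bound, I apply Definition~\ref{WLCOR} to the random (realizable) subsequence $(x_t, y_t)_{t \in T_i}$ seen by $\mathcal{W}_i$. Conditioning on this subsequence, taking full expectation, and using monotonicity of $R_\mathcal{W}$ yields $\E[\sum_t Z_t^i (2\mathcal{W}_i(x_t)\cdot y_t - 1)] \geq \gamma \E[|T_i|] - R_\mathcal{W}(T, k)$. Since $Z_t^i$ is independent of $\mathcal{W}_i(x_t)$ given the pre-$Z_t^i$ history (it depends only on the prior learners and the OCO state), a filtration-respecting Azuma--Hoeffding bound applied to the two martingale-difference sequences $(Z_t^i - p_t^i)(2\mathcal{W}_i(x_t)\cdot y_t - 1)$ and $Z_t^i - p_t^i$ controls both the deviation of $\sum_t Z_t^i(2\mathcal{W}_i(x_t)\cdot y_t - 1)$ from $\sum_t p_t^i(2\mathcal{W}_i(x_t)\cdot y_t - 1)$ and the deviation of $|T_i|$ from $\sum_t p_t^i$ to within $\tilde{O}(\sqrt{T})$ with high probability; this is the origin of the $\tilde{O}(\sqrt{T})$ slack in $\tilde{R}_\mathcal{W}$. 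Plugging into the definition of $l_t^i$ and summing over $i$ produces a bound of the form $\E[\sum_i \sum_t l_t^i(p_t^i)] \geq - N \tilde{R}_\mathcal{W}(T, k)/\gamma$.

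For the upper bound, OCO regret applied per round gives $\sum_i l_t^i(p_t^i) \leq N \min(0, \beta_t) + R_\mathcal{A}(N)$. A projection inequality analogous to Lemma~\ref{upperbound} then yields $\min(0, \beta_t) \leq 2(\E[\hat{y}_t]\cdot y_t - 1)$: when $\beta_t \geq 0$ the average margin $\bar{\mathcal{W}}(x_t)\cdot y_t \geq (1+\gamma)/2$ is already large enough to force the $L_2$ projection of $\bar{\mathcal{W}}(x_t)/\gamma$ onto $\Delta_k$ to put all mass on $y_t$, so $\E[\hat{y}_t]\cdot y_t = 1$; when $\beta_t < 0$, an explicit coordinate-wise analysis of the projection formula (of the kind illustrated in Figure~\ref{fig1} for $k = 2$) yields the inequality with slack in general. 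Summing over $t$, combining with the lower bound, dividing by $NT$, and using $2\E[\hat{y}_t]\cdot y_t - 1 = 2(\E[\hat{y}_t]\cdot y_t - 1) + 1$ produces the stated bound. The main obstacle is the lower-bound step's concentration argument, because the Bernoulli sampling, the weak learner's internal randomness, and the OCO's adaptive updates must all be accommodated within a single filtration; the slack accrued in this step is exactly what forces $\tilde{R}_\mathcal{W}$ to carry the factor of two on $R_\mathcal{W}$ together with the additive $\tilde{O}(\sqrt{T})$ term, while the rest of the argument is a direct adaptation of the analysis carried out in Subsection~\ref{proofthm1}.
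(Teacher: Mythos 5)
Your proposal is correct and follows essentially the same route as the paper: the identical sandwich of $\E[\sum_{i,t} l_t^i(p_t^i)]$ between a lower bound from the realizable weak-learning guarantee applied to the Bernoulli-sampled subsequence (this is exactly the content of the paper's Lemma~\ref{notmylemma}, which the paper states and attributes to \citet{beygelzimer2015optimal} and \citet{brukhim2020online} rather than proving, whereas you sketch its Azuma--Hoeffding derivation) and an upper bound from per-round OCO regret plus the scalar projection inequality of Lemma~\ref{upperbound_real} (your endpoint observation $\min_{p\in[0,1]}pN\beta_t = N\min(0,\beta_t)$ and the case split on the sign of $\beta_t$ is an equivalent, slightly cleaner reorganization of that lemma's case analysis, and your claim that $\beta_t\geq 0$ forces the projection to place all mass on $y_t$ does check out). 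No gaps; the two write-ups differ only in which steps are black-boxed versus spelled out.
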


\begin{proof}

As usual, the approach is to compute a lower and upper bound on the sum of the expected losses passed to the OCO oracle. Starting with the upper bound,

    \begin{align*}
        \E\left[\frac{1}{N}\sum_{i=1}^{N} l_t^i(p_t^i)\right] &\leq \frac{1}{N}\E\left[\min_{p \in \mathcal{K}}\sum_{i=1}^{N}l^i_t(p)\right] + \frac{R_{\mathcal{A}}(N)}{N} && \text{(OCO Regret)}\\
        &\leq \frac{1}{N}\E\left[\sum_{i=1}^{N}l^i_t(p^*)\right] + \frac{R_{\mathcal{A}}(N)}{N} \\
        &= \frac{1}{N}\sum_{i=1}^{N}\E\left[p^*\left(\frac{2\mathcal{W}_i(x_t)\cdot y_t - 1}{\gamma} - 1\right)\right] + \frac{R_{\mathcal{A}}(N)}{N} \\
        &= \E\left[p^*\left( \frac{2\left(\frac{1}{N}\sum_{i=1}^N\mathcal{W}_i(x_t)\right)\cdot y_t - 1}{\gamma} - 1) \right) \right] + \frac{R_{\mathcal{A}}(N)}{N} \\
        &\leq \E\left[2\left(\prod\left(\frac{1}{\gamma N}\sum_{i=1}^N \mathcal{W}_i(x_t)\right) \cdot y_t - 1\right)\right]  + \frac{R_{\mathcal{A}}(N)}{N} && \text{(Lemma \ref{upperbound_real})}\\\\
        &= 2\left(\E\left[\hat{y}_t\right] \cdot y_t - 1 \right) + \frac{R_{\mathcal{A}}(N)}{N}.\\
    \end{align*}

Finally, summing over $t \in [T]$, 

$$\E\left[\frac{1}{N}\sum_{t=1}^T\sum_{i=1}^{N} l_t^i(p_t^i)\right] \leq \sum_{t=1}^T2\left(\E\left[\hat{y}_t\right] \cdot y_t - 1 \right) + \frac{TR_{\mathcal{A}}(N)}{N}.$$

For the lower bound, we first need the following important Lemma adapted from \cite{beygelzimer2015optimal} to the multiclass setting:

\begin{lemma}
\label{notmylemma}
For any weak learner $(\gamma, T)$-RWOL $\mathcal{W}$, there exists $\tilde{R}_{\mathcal{W}}(T, k) = \tilde{\mathcal{O}}(\sqrt{\sum_t p_t}) + 2R_{\mathcal{W}}(T, k)$ such that for any sequence $p_1, ... ,p_T \in [0, 1]$,

$$\sum_{t=1}^{T} p_t (2\mathcal{W}(x_t)\cdot y_t - 1) \geq \gamma \sum_{t=1}^T p_t - \tilde{R}_{\mathcal{W}}(T, k).$$
\end{lemma}

The proof of Lemma \ref{notmylemma} follows exactly from Lemma 14 in \citet{brukhim2020online} and Lemma 1 in \citet{beygelzimer2015optimal} and so we omit it here. Using Lemma \ref{notmylemma}, 

\begin{align*}
    \frac{1}{\gamma}\E\left[ \sum_{i=1}^N \sum_{t=1}^T p_t^i(2\mathcal{W}(x_t)\cdot y_t - 1) \right] &\geq \E\left[ \frac{1}{\gamma}\sum_{i=1}^N \left(\gamma\sum_{t=1}^Tp_t^i - \tilde{R}_{\mathcal{W}}(T, k) \right) \right]\\
    &= \sum_{i=1}^N \sum_{t=1}^T\E \left[p_t^i\right] - \frac{N}{\gamma}\tilde{R}_{\mathcal{W}}(T, k).
\end{align*}

Then, 

$$\E\left[\sum_{t=1}^T \sum_{i=1}^N l_t^i(p_t^i) \right] =  \frac{1}{\gamma}\E\left[ \sum_{i=1}^N \sum_{t=1}^T p_t^i(2\mathcal{W}(x_t)\cdot y_t - 1) \right] - \sum_{i=1}^N \sum_{t=1}^T\E \left[p_t^i\right] \geq -\frac{N}{\gamma}\tilde{R}_{\mathcal{W}}(T, k).$$

By combining upper and lower bounds for $\E\left[\frac{1}{NT}\sum_t \sum_i l_t^i(p_t^i) \right]$, we get

$$\frac{1}{T}\sum_{t=1}^T (2\E\left[\hat{y}_t\right] \cdot y_t - 1)  \geq 1 - \frac{R_{\mathcal{A}}(N)}{N} -\frac{\tilde{R}_{\mathcal{W}}(T, k)}{\gamma T}.$$

which completes the proof. 
\end{proof}

 \subsection{Statistical Setting}
 We now describe a statistical realizable boosting algorithm whose pseudocode is provided below. The booster is given as input a sample $S = (x_1, y_1), ..., (x_m, y_m) \in \mathcal{X} \times \mathcal{B}_k$. The booster has black-box oracle access to $T$ copies of a $(\gamma, m_0)$-RWL algorithm
 $\mathcal{W}_1, ..., \mathcal{W}_T$, each satisfying Definition \ref{WLCSR}, and $m$ copies of an $\text{OCO}([0, 1], T)$ algorithm, $\mathcal{A}^1, ..., \mathcal{A}^m$.  Importantly, note that in line 8 , we denote by $\prod$ as a randomized \textit{prediction} operator that given a $\gamma$-scaled distribution over $k$ labels, first computes the $L_2$ projection onto $\Delta_k$, and then randomly samples a label from the resulting distribution.
 
\smallskip
\begin{algorithm}[H]
\label{alg:statrealizable}
\KwIn{$(\gamma, m_0)$-RWLs $\mathcal{W}_1...\mathcal{W}_T$, OCO($[0, 1]$, $T$) algorithms $\mathcal{A}^1...\mathcal{A}^m$, and Sample $S = (x_1, y_1)...(x_m, y_m) \in \mathcal{X} \times \mathcal{B}_k$}

\For{$t = 1,...,T$} {
    If $t = 1$, set $P_1[i] = 1/m$ $\forall i\in [m]$\\
    Pass $m_0$ examples to $\mathcal{W}_t$ drawn from the following distribution: \\
    Draw $(x_i, y_i)$ w.p. $\propto P_{t}[i]$  \\
    Let $h_t$ be the weak hypothesis returned by $\mathcal{W}_t$ \\
    Reveal loss function: $l_t^i(p) = p(\frac{h_t(x_i)}{\gamma}\cdot(2y_i - \mathbbm{1}_k) - 1)$ $\forall i\in[m]$ \\
    Set $P_{t+1}[i] =\mathcal{A}^i(l_1^i, ..., l_t^i)$ $\forall i \in [m]$
}

return $\bar{h}(x) = \prod\left(\frac{1}{\gamma T} \sum_{t = 1}^{T} h_t(x)\right)$
\caption{Statistical Realizable Boosting via OCO}
\end{algorithm}

\smallskip

\begin{theorem}
The output of Algorithm \ref{alg:statrealizable}, which is denoted $\bar{h}$, satisfies, 


$$\E\left[cor_S(\bar{h})\right] \geq 1 - \frac{R_{\mathcal{A}}(T)}{T},$$

where $S$ is the distribution which uniformly assigns to each example $(x_i, y_i)$ probability $1/m$.

\end{theorem}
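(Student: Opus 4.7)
The plan is to mirror both the statistical agnostic analysis of Theorem~\ref{stataagthm} and the online realizable analysis of Algorithm~\ref{ORB}. I will bound $\E\bigl[\sum_{t=1}^T\sum_{i=1}^m l_t^i(P_t[i])\bigr]$ from below via Definition~\ref{WLCSR} and from above via the OCO regret of the per-example algorithms $\mathcal{A}^1,\dots,\mathcal{A}^m$, and then isolate $\E[cor_S(\bar{h})]$ by combining the two bounds.

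For the lower bound, observe that in round $t$ the weak learner $\mathcal{W}_t$ is fed $m_0$ examples drawn i.i.d.\ from the normalized distribution $\mathbf{p}_t[i] = P_t[i]/\|P_t\|_1$ on $S$. Conditioning on $P_t$ and invoking Definition~\ref{WLCSR} with $\mathbf{p}_t$ yields
\[
\E_{h_t\mid P_t}\!\left[\sum_{i=1}^m P_t[i]\,h_t(x_i)\cdot(2y_i-\mathbbm{1}_k)\right]\ \geq\ \gamma\,\|P_t\|_1.
\]
Substituting this into $\sum_i l_t^i(P_t[i]) = \tfrac{1}{\gamma}\sum_i P_t[i]\,h_t(x_i)\cdot(2y_i - \mathbbm{1}_k) - \|P_t\|_1$ gives $\E_{h_t\mid P_t}\bigl[\sum_i l_t^i(P_t[i])\bigr]\geq 0$; summing over $t$ and taking outer expectations produces $\E\bigl[\sum_{t,i} l_t^i(P_t[i])\bigr]\geq 0$.

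For the upper bound, I will apply OCO regret independently to each $\mathcal{A}^i$ against an arbitrary comparator $P^*[i]\in[0,1]$. Because $l_t^i$ is linear in its scalar argument, summing over $t$ and $i$ gives
\[
\sum_{t,i}l_t^i(P_t[i])\ \leq\ T\sum_{i=1}^m P^*[i]\!\left(\frac{q_i\cdot(2y_i-\mathbbm{1}_k)}{\gamma}-1\right)+mR_{\mathcal{A}}(T),
\]
where $q_i := \tfrac{1}{T}\sum_t h_t(x_i)\in\Delta_k$ since each $h_t(x_i)\in\mathcal{B}_k$. The scalar projection lemma used for Algorithm~\ref{ORB} (Lemma~\ref{upperbound_real}) then selects $P^*[i]$ so that the $i$th summand is at most $2\bigl(\prod(q_i/\gamma)\cdot y_i - 1\bigr)$.

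Finally, I will recognize $2\sum_i\bigl(\prod(q_i/\gamma)\cdot y_i - 1\bigr) = m\bigl(\E_{\bar h}[cor_S(\bar h)\mid \{q_i\}] - 1\bigr)$ using $\E_{\bar h}[\bar h(x_i)\cdot(2y_i-\mathbbm{1}_k)\mid q_i] = 2\prod(q_i/\gamma)\cdot y_i - 1$; chaining the two bounds, taking outer expectation, and dividing by $Tm$ yields $\E[cor_S(\bar h)]\geq 1 - R_{\mathcal{A}}(T)/T$. The main subtlety is the scalar projection step, where the comparator $P^*[i]\in[0,1]$ is a scalar rather than a distribution in $\Delta_k$: this works out cleanly because the realizable reweighting scheme keeps every $h_t(x_i)\in\mathcal{B}_k$, so $q_i\in\Delta_k$ and Lemma~\ref{upperbound_real} applies verbatim.
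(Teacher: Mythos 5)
Your proof is correct and takes essentially the same route as the paper's: the paper merely packages the two bounds as an instance of improper game playing (Proposition~\ref{gametheory}) with game value $\lambda^* = 0$, but the underlying steps --- the weak learning condition yielding $\E\bigl[\sum_{t,i} l_t^i(P_t[i])\bigr] \geq 0$, the per-example OCO regret against a data-dependent comparator, and Lemma~\ref{upperbound_real} to convert the comparator term into the correlation of the projected average --- are exactly the ones you use. Your explicit handling of the normalization of $P_t$ and of the scalar comparator matches the paper's argument.
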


\begin{proof}
We will use a very similar proof strategy as in the agnostic setting by reducing to improper game playing. Let $h^*$ be a hypothesis consistent with the input sample (i.e. $h^*(x_i) = y_i$ for
$ i \leq m$) and let $\mathcal{H'} = \mathcal{H} \cup h^*$. We begin by establishing the reduction to improper game playing. Take $\mathcal{K}_A = [0, 1]^m$, $\mathcal{K}_B = \Delta_{\mathcal{H}}$, $\mathcal{K}'_B = \frac{1}{\gamma}\mathcal{K}_B$.  Define payoff functions:

$$f^i(p, q) = p(q(x_i) \cdot (2y_i - \mathbbm{1}_k)  - 1)$$

\begin{align*}
g(P, q) &= \sum_{i=1}^{m}f^i(P[i], q)\\
&= \sum_{i = 1}^{m}P[i](q(x_i) \cdot (2y_i - \mathbbm{1}_k)  - 1).
\end{align*}

We will now show that the agnostic weak learner $\mathcal{W}$ induces an approximate optimization oracle for Player B. Specifically, we will show the following lemma. 

\begin{lemma}
For any $P \in \mathcal{K}_A$, the output $q' = \frac{\mathcal{W}(P)}{\gamma}\in \mathcal{K}'_B$ satisfies 

$$\E\left[g(P, q')\right] \geq 0.$$
\end{lemma}

That is, the weak learner corresponds to an approximate oracle with $0$ error. 

\begin{proof}
Using the definition of correlation and the weak learning condition. 
$$
    \E\left[cor_P(\mathcal{W_{S'}})\right] = \E\left[\sum_{i=1}^{m} \frac{P[i]}{\sum_{i=1}^{m} P[i]}\mathcal{W_{S'}}(x_i)\cdot (2y_i-\mathbbm{1}_k)\right]\\
    \geq \gamma
$$

and therefore, 

$$
   \E\left[\sum_{i=1}^{m} P[i]\frac{\mathcal{W_{S'}}(x_i)}{\gamma}\cdot (2y_i-\mathbbm{1}_k)\right]\\
    \geq \sum_{i=1}^{m} P[i].
$$

Now, take $q'(x_i) = \frac{\mathcal{W}_{S'}(x_i)}{\gamma} \in \mathcal{K}_B^{'}$. Then, recalling our definition of $g$, 
 
 \begin{align*}
     \E\left[g(P, q')\right] &= \E\left[\sum_{i = 1}^{m}P[i](q'(x_i) \cdot (2y_i - \mathbbm{1}_k)  - 1)\right]\\
     &= \E\left[\sum_{i = 1}^{m} P[i]q'(x_i)\cdot(2y_i - \mathbbm{1}_k) \right] - \sum_{i=1}^m P[i]\\
     &\geq 0 = \max_{q \in \mathcal{K}_B}g(P, q).\\
 \end{align*}
\end{proof}

Now, we will complete the last part of the overall proof. We start by explicitly computing the value of the above game. One can show that the dominant strategy for player B is to return $h^*$. Indeed, since $h^*$ is consistent, we can show that $g(P, h^*) = 0$ for any $P$ and thus $\lambda^* = 0$.
Then, for any $P^*$ using Proposition \ref{gametheory},  

\begin{align*}
0 &\leq \E\left[g(P^*, \bar{q}) \right] + \frac{mR_{\mathcal{A}}(T)}{T} \\
&= \E\left[ \sum_{i = 1}^{m}P^*[i](\bar{q} \cdot (2y_i - \mathbbm{1}_k)  - 1)\right] + \frac{mR_{\mathcal{A}}(T)}{T}\\
&= \E\left[ \sum_{i = 1}^{m}P^*[i]\left(\left(\frac{1}{\gamma T} \sum_{t=1}^{T}h_t(x_i)\right) \cdot (2y_i - \mathbbm{1}_k)  - 1\right)\right] + \frac{mR_{\mathcal{A}}(T)}{T}\\
&= \E\left[ \sum_{i = 1}^{m}P^*[i]\left(\frac{\left(\frac{2}{T}\sum_{t=1}^Th_t(x_i)\right)\cdot y_i - 1}{\gamma} - 1\right) \right] + \frac{mR_{\mathcal{A}}(T)}{T}.
\end{align*} \\

According to Lemma \ref{upperbound_real}, there exists a $P^*[i]$ and therefore a $P^*$, such that 

\begin{align*}
    0 &\leq \E\left[ \sum_{i = 1}^{m}2\left( \prod\left(\frac{\sum_{i=1}^Th_t(x_i)}{\gamma T}\right) \cdot y_i - 1\right)\right] + \frac{mR_{\mathcal{A}}(T)}{T}\\
    &= \E\left[ \sum_{i = 1}^{m}2\bar{h}(x_t)\cdot y_i - 1\right] - m + \frac{mR_{\mathcal{A}}(T)}{T}.
\end{align*}

 Dividing both sides by $m$,

\begin{align*}
    0 &\leq \E\left[ \frac{1}{m}\sum_{i = 1}^{m}2\bar{h}(x_t)\cdot y_i - 1\right]  - 1 + \frac{R_{\mathcal{A}}(T)}{T}\\
    &= \E\left[cor_S(\bar{h}) \right] - 1 + \frac{R_{\mathcal{A}}(T)}{T}.
\end{align*}

Thus, rearranging, we have shown that

$$\E\left[cor_S(\bar{h}) \right] \geq 1 - \frac{R_{\mathcal{A}}(T)}{T}.$$
\end{proof}

\section{Alternate Weak Learning Conditions}
\label{AppendixAltWL}

Definition $\ref{WLC}$ roughly requires that the online weak learner be able to distinguish between every pair of classes to some non-trivial, but far from optimal, degree. In this section, we propose other possible online agnostic weak learning conditions \textit{without} changing Algorithm \ref{OnlineAgnosticBoostFull}.

We start by first considering a setting where there always exists a hypothesis $h \in \mathcal{H}$ that, in expectation, makes at most $T/2$ mistakes (i.e. gets at least $1/2$ correct). Here, the weak learning condition below suffices as  $\max_{h \in \mathcal{H}} \E\left[ \sum_{t = 1}^T 2h(x_t)\cdot y_t - 1 \right]$ is guaranteed to be positive. 

\begin{definition}
 \label{WLC2}
 Let $\mathcal{H} \subseteq \mathcal{B}_k^{\mathcal{X}}$ be a class of experts and let $0 < \gamma \leq 1$ denote the \say{advantage}. An online learning algorithm $\mathcal{W}$ is a $(\gamma, T)$-agnostic weak online learner for $\mathcal{H}$ if for any adaptively chosen sequence of tuples $(x_t, y_t) \in \mathcal{X} \times \mathcal{B}_k$, at every iteration $t \in [T]$, the algorithm outputs $\mathcal{W}(x_t) \in \mathcal{B}_k$ such that, 
 
  \[\gamma \max_{h \in \mathcal{H}} \E\left[ \sum_{t = 1}^T 2h(x_t)\cdot y_t - 1 \right] - \E\left[ \sum_{t=1}^T 2\mathcal{W}(x_t) \cdot y_t - 1\right] \leq R_\mathcal{W}(T, k),\]
  
   where the expectation is taken w.r.t. the randomness of the weak learner $\mathcal{W}$ and that of the possibly adaptive
adversary, $R_\mathcal{W} : \mathbbm{N} \times \mathbbm{N} \rightarrow \mathbbm{R}_+$ is the additive regret: a non-decreasing, sub-linear function of $T$.
  \end{definition}
  
  For weak learners satisfying Definition \ref{WLC2}, one can show, using the same proof strategy in Section \ref{sec:oag-reg}, that Algorithm \ref{OnlineAgnosticBoostFull} achieves the average regret bound below.
  
  \begin{proposition}
   Assuming weak learners satisfy Definition \ref{WLC2}, the expected regret bound of Algorithm \ref{OnlineAgnosticBoostFull} is
   
   \[\frac{1}{T}\max_{h \in \mathcal{H}}\E\left[ \sum_{t=1}^{T} 2h(x_t) \cdot y_t - 1 \right] - \frac{1}{T}\E\left[\sum_{t=1}^{T}2\hat{y}_t \cdot y_t - 1\right] \leq \frac{R_{\mathcal{W}}(T, k)}{\gamma T} + \frac{R_{\mathcal{A}}(N)}{N}.\]

  \end{proposition}

  For completeness sake, we include a partial proof below. 
  
  \begin{proof}
      The proof follows the same strategy as that used to prove Theorem \ref{thm1}. That is, we will upper and lower bound the expected sum of loss passed to the OCO algorithm. Fortunately, the proof of the upper bound remains exactly the same as that in the proof of Theorem \ref{thm1}. Thus, we will only derive a lower bound on the sum of losses passed to the OCO algorithm. Define  $h^* = \argmax_{h \in \mathcal{H}}\sum_{t=1}^T  (2h(x_t)\cdot y_t - 1)$ as the optimal competitor in hindsight. Then, 
      
      \begin{align*}
        \E\left[\sum_{i=1}^{N} \sum_{t=1}^{T} l_t^i(p_t^i)\right] &= \E\left[\sum_{i=1}^{N} \sum_{t=1}^{T} p_t^i \cdot \left(\frac{2\mathcal{W}_i(x_t) - \mathbbm{1}_k}{\gamma} - (2y_t - \mathbbm{1}_k) \right)\right] \\
        &= \frac{1}{\gamma}\sum_{i=1}^{N} \sum_{t=1}^{T}\E\left[p_t^i \cdot (2\mathcal{W}_i(x_t) - \mathbbm{1}_k)\right]   - \sum_{i=1}^{N} \sum_{t=1}^{T} \E\left[p_t^i \cdot (2y_t - \mathbbm{1}_k)\right] \\
        &= \frac{1}{\gamma}\sum_{i=1}^{N} \sum_{t=1}^{T}\E\left[ 2\mathcal{W}_i(x_t) \cdot y_t^i - 1\right]   - \sum_{i=1}^{N} \sum_{t=1}^{T} \E\left[p_t^i \cdot (2y_t - \mathbbm{1}_k)\right] && \text{(Lemma \ref{expectation})}\\
    \end{align*}
    
Using the weak learning condition in Definition \ref{WLC2},

    \begin{align*}
       \frac{1}{\gamma}\sum_{i=1}^{N} \sum_{t=1}^{T}\E\left[ 2\mathcal{W}_i(x_t) \cdot y_t^i - 1\right] &\geq \sum_{i=1}^{N} \max_{h \in \mathcal{H}}\sum_{t=1}^{T}\E\left[ 2h(x_t) \cdot y_t^i - 1\right] - \frac{NR_{\mathcal{W}}(T, k)}{\gamma} && \text{(Definition \ref{WLC2})}\\
        &\geq \sum_{i=1}^{N} \sum_{t = 1}^T\E\left[ 2h^*(x_t) \cdot y_t^i - 1)\right]  - \frac{NR_{\mathcal{W}}(T, k)}{\gamma}\\
        &= \sum_{i=1}^{N}\sum_{t=1}^{T}\E\left[ p_t^i \cdot (2h^*(x_t) - \mathbbm{1}_k)\right] - \frac{NR_{\mathcal{W}}(T, k)}{\gamma}. && \text{(Lemma \ref{expectation})}
    \end{align*}
    
Putting things together, we find, 
    \begin{align*}
        \E\left[\sum_{i=1}^{N} \sum_{t=1}^{T} l_t^i(p_t^i)\right] &\geq \sum_{i=1}^{N}\sum_{t=1}^{T}\E\left[ p_t^i \cdot (2h^*(x_t) - 2y_t) \right] - \frac{NR_{\mathcal{W}}(T, k)}{\gamma}\\
        &\geq \sum_{i=1}^{N}\sum_{t=1}^{T}\E\left[ 2(h^*(x_t) \cdot y_t - 1) \right] - \frac{NR_{\mathcal{W}}(T, k)}{\gamma} && \text{(Lemma \ref{lowerbound})}\\
        &= N\sum_{t=1}^{T} 2(h^*(x_t) \cdot y_t - 1)  - \frac{NR_{\mathcal{W}}(T, k)}{\gamma}.\\
    \end{align*}

Note that this is the same lower bound as in the proof of Theorem \ref{thm1} and therefore the same regret bound as in Theorem \ref{thm1} holds. 
  \end{proof}
  
The assumption on the hypothesis class that  there exists a hypothesis $h \in \mathcal{H}$ that, in expectation, makes at most $T/2$ mistakes is quite strong, especially if we allow a random adaptive adversary. Note that Definition \ref{WLC2} cannot be used when this assumption on $\mathcal{H}$ does not hold - $\E\left[ \sum_{t=1}^{T} 2h(x_t) \cdot y_t - 1 \right]$ can be potentially negative for every $h \in \mathcal{H}$,  even for the randomly guessing competitor. One way to relax this assumption on $\mathcal{H}$ is via Definition \ref{WLC} presented in the main text. Another way, is by taking a \textit{one-vs-all} perspective to multiclass classification, which we present below. 

 \begin{definition}
 \label{WLC3}
 Let $\mathcal{H} \subseteq \mathcal{B}_k^{\mathcal{X}}$ be a class of experts and let $0 < \gamma \leq 1$ denote the \say{advantage}. An online learning algorithm $\mathcal{W}$ is a $(\gamma, T)$-agnostic weak online learner for $\mathcal{H}$ if for any adaptively chosen sequence of tuples $(x_t, y_t) \in \mathcal{X} \times \mathcal{B}_k$, at every iteration $t \in [T]$, for every label $\ell \in \mathcal{B}_k$, the algorithm outputs $\mathcal{W}(x_t) \in \mathcal{B}_k$ such that, 
  \[\gamma \max_{h \in \mathcal{H}} \E\left[ \sum_{t:y_t = \ell} 2h(x_t)\cdot y_t - 1 \right] - \E\left[ \sum_{t: y_t = \ell} 2\mathcal{W}(x_t) \cdot y_t - 1\right] \leq R_\mathcal{W}(T, k),\]
 where the expectation is taken w.r.t. the randomness of the weak learner $\mathcal{W}$ and that of the possibly adaptive
adversary, $R_\mathcal{W} : \mathbbm{N} \times \mathbbm{N} \rightarrow \mathbbm{R}_+$ is the additive regret: a non-decreasing, sub-linear function of $T$.
 \end{definition}
 
Intuitively, Definition \ref{WLC3} roughly requires that a weak learner be able to learn each class to a non-trivial (but far from optimal) degree of accuracy as it processes the stream of examples. 
In order to ensure that $\gamma \in (0, 1)$, we also need that our hypothesis class $\mathcal{H}$ contains $k$ hypothesis each of which predicts a reference class $\ell \in \mathcal{B}_k$ with probability a $1/2$. This assumption on the hypothesis class is mild: if our class $\mathcal{H}$ does not contain these randomly guessing binary hypotheses, we can add them without substantially increasing the complexity of the class $\mathcal{H}$. That said, this is still a minor drawback compared to Definition \ref{WLC}, where no additional assumptions on $\mathcal{H}$ are needed. 
 
 Under Definition \ref{WLC3}, we can show that the following  regret bound for Algorithm \ref{OnlineAgnosticBoostFull}. Note that this regret bound is worse compared to the one derived assuming the weak learners satisfy Definition \ref{WLC}. Indeed, compared to Theorem \ref{thm1}, there is an extra factor of $k$ in the term containing $R_{\mathcal{W}(T, k)}$ in the bound below.
 
   \begin{proposition}
   Assuming weak learners satisfy Definition \ref{WLC3}, the expected regret bound of Algorithm \ref{OnlineAgnosticBoostFull} is
   
    \[\frac{1}{T}\max_{h \in \mathcal{H}}\E\left[ \sum_{t=1}^{T} 2h(x_t) \cdot y_t - 1 \right] - \frac{1}{T}\E\left[\sum_{t=1}^{T}2\hat{y}_t \cdot y_t - 1\right] \leq \frac{kR_{\mathcal{W}}(T, k)}{\gamma T} + \frac{R_{\mathcal{A}}(N)}{N}.\]

  \end{proposition}
  
  \begin{proof}
      The proof follows the same strategy as that used to prove Theorem \ref{thm1}. Like above, the proof of the upper bound remains exactly the same as that in the proof of Theorem \ref{thm1}. Thus, we will only derive a lower bound on the sum of losses passed to the OCO algorithm. Let   $h^* = \argmax_{h \in \mathcal{H}}\E\left[\sum_{t=1}^T  2h(x_t)\cdot y_t - 1\right]$ and $h^*_{i\ell} = \argmax_{h \in \mathcal{H}}\E\left[\sum_{t: y_t^i = \ell}^T 2h(x_t) \cdot y_t^i - 1 \right]$. Then, 
    \begin{align*}
        \E\left[\sum_{i=1}^{N} \sum_{t=1}^{T} l_t^i(p_t^i)\right] &= \E\left[\sum_{i=1}^{N} \sum_{t=1}^{T} p_t^i \cdot \left(\frac{2\mathcal{W}_i(x_t) - \mathbbm{1}_k}{\gamma} - (2y_t - \mathbbm{1}_k) \right)\right] \\
        &= \frac{1}{\gamma}\sum_{i=1}^{N} \sum_{t=1}^{T}\E\left[p_t^i \cdot 2(\mathcal{W}_i(x_t) - \mathbbm{1}_k)\right]   - \sum_{i=1}^{N} \sum_{t=1}^{T} \E\left[p_t^i \cdot (2y_t - \mathbbm{1}_k)\right] \\
        &= \frac{1}{\gamma}\sum_{i=1}^{N} \sum_{t=1}^{T}\E\left[ 2\mathcal{W}_i(x_t) \cdot y_t^i - 1)\right]   - \sum_{i=1}^{N} \sum_{t=1}^{T} \E\left[p_t^i \cdot (2y_t - \mathbbm{1}_k)\right] && \text{(Lemma \ref{expectation})}\\
        &= \frac{1}{\gamma}\sum_{i=1}^{N} \sum_{\ell \in \mathcal{B}_k} \sum_{t: y_t = \ell}\E\left[ 2\mathcal{W}_i(x_t) \cdot y_t^i - 1)\right]   - \sum_{i=1}^{N} \sum_{t=1}^{T} \E\left[p_t^i \cdot (2y_t - \mathbbm{1}_k)\right]\\
    \end{align*}
    
    Using the weak learning condition in Definition \ref{WLC3},
    
    \begin{align*}
         \frac{1}{\gamma}\sum_{i=1}^{N} \sum_{\ell \in \mathcal{B}_k} \sum_{t: y_t = \ell}\E\left[ 2\mathcal{W}_i(x_t) \cdot y_t^i - 1)\right] &\geq  \sum_{i=1}^{N} \sum_{\ell \in \mathcal{B}_k} \sum_{t: y_t = \ell}\E\left[ 2h^*_{i\ell}(x_t) \cdot y_t^i - 1)\right] - \frac{kNR_{\mathcal{W}}(T)}{\gamma} && \text{(Definition \ref{WLC2})}\\
         &\geq \sum_{i=1}^{N} \sum_{\ell \in \mathcal{B}_k} \sum_{t: y_t = \ell}\E\left[ 2h^*(x_t) \cdot y_t^i - 1)\right] - \frac{kNR_{\mathcal{W}}(T)}{\gamma}\\
         &= \sum_{i=1}^{N} \sum_{t = 1}^T\E\left[ 2h^*(x_t) \cdot y_t^i - 1)\right] - \frac{kNR_{\mathcal{W}}(T)}{\gamma}\\
         &= \sum_{i=1}^{N}\sum_{t=1}^{T}\E\left[ p_t^i \cdot (2h^*(x_t) - \mathbbm{1}_k)\right] - \frac{kNR_{\mathcal{W}}(T)}{\gamma} && \text{(Lemma \ref{expectation})}\\
    \end{align*}
    
    Putting things together, we find, 
    \begin{align*}
        \E\left[\sum_{i=1}^{N} \sum_{t=1}^{T} l_t^i(p_t^i)\right] &\geq \sum_{i=1}^{N}\sum_{t=1}^{T}\E\left[ p_t^i \cdot (2h^*(x_t) - 2y_t) \right] - \frac{kNR_{\mathcal{W}}(T, k)}{\gamma}\\
        &\geq \sum_{i=1}^{N}\sum_{t=1}^{T}\E\left[ 2(h^*(x_t) \cdot y_t - 1) \right] - \frac{kNR_{\mathcal{W}}(T, k)}{\gamma} && \text{(Lemma \ref{lowerbound})}\\
        &= N\sum_{t=1}^{T} 2(h^*(x_t) \cdot y_t - 1)  - \frac{kNR_{\mathcal{W}}(T, k)}{\gamma}.\\
    \end{align*}
    
    Combining this lower bound with the upper bound on the expected sum of losses in the proof of Theorem \ref{thm1} completes this proof. 
\end{proof}

We end this section by presenting one more online agnostic weak learning condition that places an \textit{asymmetric} gain function on the best competitor and the weak learner. One of the benefits of this condition is that as $k$ increases, it is more explicit how exactly the weak learning assumption becomes stronger.

 \begin{definition}
 \label{WLC4}
  Let $\mathcal{H} \subseteq \mathcal{B}_k^{\mathcal{X}}$ be a class of experts and let $0 < \gamma \leq 1$ denote the \say{advantage}. An online learning algorithm $\mathcal{W}$ is a $(\gamma, T)$-agnostic weak online learner for $\mathcal{H}$ if for any adaptively chosen sequence of tuples $(x_t, y_t) \in \mathcal{X} \times \mathcal{B}_k$, at every iteration $t \in [T]$, the algorithm outputs $\mathcal{W}(x_t) \in \mathcal{B}_k$ such that, 
 
  \[\gamma \max_{h \in \mathcal{H}} \E\left[ \sum_{t = 1}^T \frac{kh(x_t)\cdot y_t - 1}{k-1} \right] - \E\left[ \sum_{t=1}^T 2\mathcal{W}(x_t) \cdot y_t - 1\right] \leq R_\mathcal{W}(T, k),\]
  
 where the expectation is taken w.r.t. the randomness of the weak learner $\mathcal{W}$ and that of the possibly adaptive
adversary, $R_\mathcal{W} : \mathbbm{N} \times \mathbbm{N} \rightarrow \mathbbm{R}_+$ is the additive regret: a non-decreasing, sub-linear function of $T$.
 \end{definition}
  
Along the same lines as above, under Definition \ref{WLC4}, one can show that Algorithm \ref{OnlineAgnosticBoostFull} achieves the following average regret bound. 

  \begin{proposition}
   Assuming weak learners satisfy Definition \ref{WLC4}, the expected regret bound of Algorithm \ref{OnlineAgnosticBoostFull} is
   
    \[\frac{1}{T}\max_{h \in \mathcal{H}}\E\left[ \sum_{t=1}^{T} 2h(x_t) \cdot y_t - 1 \right] - \frac{1}{T}\E\left[\sum_{t=1}^{T}2\hat{y}_t \cdot y_t - 1\right] \leq \frac{R_{\mathcal{W}}(T, k)}{\gamma T} + \frac{R_{\mathcal{A}}(N)}{N}.\]

  \end{proposition}

 \begin{proof}
Again, we only need to show a lower bound on the expected sum of losses passed to the OCO algorithm. Define  $h^* = \argmax_{h \in \mathcal{H}}\sum_{t=1}^T  (2h(x_t)\cdot y_t - 1)$ as the optimal competitor in hindsight. Then, 
      
      \begin{align*}
        \E\left[\sum_{i=1}^{N} \sum_{t=1}^{T} l_t^i(p_t^i)\right] &= \E\left[\sum_{i=1}^{N} \sum_{t=1}^{T} p_t^i \cdot \left(\frac{2\mathcal{W}_i(x_t) - \mathbbm{1}_k}{\gamma} - (2y_t - \mathbbm{1}_k) \right)\right] \\
        &= \frac{1}{\gamma}\sum_{i=1}^{N} \sum_{t=1}^{T}\E\left[p_t^i \cdot (2\mathcal{W}_i(x_t) - \mathbbm{1}_k)\right]   - \sum_{i=1}^{N} \sum_{t=1}^{T} \E\left[p_t^i \cdot (2y_t - \mathbbm{1}_k)\right] \\
        &= \frac{1}{\gamma}\sum_{i=1}^{N} \sum_{t=1}^{T}\E\left[ 2\mathcal{W}_i(x_t) \cdot y_t^i - 1\right]   - \sum_{i=1}^{N} \sum_{t=1}^{T} \E\left[p_t^i \cdot (2y_t - \mathbbm{1}_k)\right] && \text{(Lemma \ref{expectation})}\\
    \end{align*}
    
Using the weak learning condition in Definition \ref{WLC4},

    \begin{align*}
       \frac{1}{\gamma}\sum_{i=1}^{N} \sum_{t=1}^{T}\E\left[ 2\mathcal{W}_i(x_t) \cdot y_t^i - 1\right] &\geq \sum_{i=1}^{N} \max_{h \in \mathcal{H}}\sum_{t=1}^{T}\E\left[ \frac{kh(x_t) \cdot y_t^i - 1}{k-1}\right] - \frac{NR_{\mathcal{W}}(T, k)}{\gamma} && \text{(Definition \ref{WLC4})}\\
        &\geq \sum_{i=1}^{N} \sum_{t = 1}^T\E\left[ 2h^*(x_t) \cdot y_t^i - 1)\right]  - \frac{NR_{\mathcal{W}}(T, k)}{\gamma}\\
        &= \sum_{i=1}^{N}\sum_{t=1}^{T}\E\left[ p_t^i \cdot (2h^*(x_t) - \mathbbm{1}_k)\right] - \frac{NR_{\mathcal{W}}(T, k)}{\gamma}. && \text{(Lemma \ref{expectation})}
    \end{align*}
    
Putting things together, we find, 
    \begin{align*}
        \E\left[\sum_{i=1}^{N} \sum_{t=1}^{T} l_t^i(p_t^i)\right] &\geq \sum_{i=1}^{N}\sum_{t=1}^{T}\E\left[ p_t^i \cdot (2h^*(x_t) - 2y_t) \right] - \frac{NR_{\mathcal{W}}(T, k)}{\gamma}\\
        &\geq \sum_{i=1}^{N}\sum_{t=1}^{T}\E\left[ 2(h^*(x_t) \cdot y_t - 1) \right] - \frac{NR_{\mathcal{W}}(T, k)}{\gamma} && \text{(Lemma \ref{lowerbound})}\\
        &= N\sum_{t=1}^{T} 2(h^*(x_t) \cdot y_t - 1)  - \frac{NR_{\mathcal{W}}(T, k)}{\gamma}.\\
    \end{align*}

Note that this is the same lower bound as in the proof of Theorem \ref{thm1} and therefore the same regret bound as in Theorem \ref{thm1} holds.     
 \end{proof}

\section{Dependence of $R_{\mathcal{W}}(T, k)$ on $k$}
\label{deponk}

We show concretely how the weak learner's regret, $R_\mathcal{W}(T, k)$, can depend on the number of classes $k$. Recall, that the weak learning condition in Definition \ref{WLC} is written in terms of approximately \textit{maximizing} a sequence of \textit{gain} functions:

$$
    \sigma_{y, \ell}(z) = 
    \begin{cases}
      -1, & \text{if}\ z = \ell \\
      1, & \text{if}\ z = y \\
      0, & \text{otherwise}
    \end{cases}
$$

with an advantage parameter $0 < \gamma < 1$. By taking an affine transformation of  $ \sigma_{y, \ell}(z)$, Definition \ref{WLC} can be equivalently expressed as approximately \textit{minimizing} a sequence of bounded, non-negative \textit{loss} functions

$$L_{y, l}(z) =  \frac{1- \sigma_{y, \ell}(z)}{2} = \begin{cases}
      1, & \text{if}\ z = \ell \\
      0, & \text{if}\ z = y \\
      1/2, & \text{otherwise}
    \end{cases}
$$

with an advantage parameter $\gamma > 1$.  By redefining weak learning in terms of bounded, non-negative loss functions,  we can tap into the rich literature of Prediction with Expert Advice to construct online agnostic weak learners. Specifically, we will use the celebrated (Randomized) Exponential Weights Algorithm (EWA). A nice fact about the EWA is that for 0-1 losses $\tilde{L}_{y_t}(z_t) = \mathbbm{1}\{z_t \neq y_t\}$, and any \textit{finite} set of experts $\mathcal{H}$, it enjoys the regret bound

$$\E\left[\sum_{t=1}^T \tilde{L}_{y_t}(\hat{y}_t) \right]  \leq \frac{\eta}{1-e^{-\eta}}\E\left[\min_{h \in \mathcal{H}}\sum_{t=1}^T \tilde{L}_{y_t}(h(x_t)) \right] + \frac{\ln(|\mathcal{H}|)}{1-e^{-\eta}},$$

where $\hat{y}_t$ denotes the prediction of the EWA in the $t$'th iteration and $\eta > 0$ is a tuneable learning rate (see the the exponentially-weighted forecaster from Chapter 2 of \cite{10.5555/1137817}). We will be using this regret bound extensively in the next two subsections.

\subsection{Finite Hypothesis Classes}

In this section, we will consider two types of finite hypothesis classes: the set of discretized weight matrices and the set of discretized multiclass decision trees of depth $1$. 

Beginning with weight matrices, consider the multiclass classification setup with example label pairs $(x, y) \in \mathbbm{R}^d \times \mathcal{B}_k$. Let $\mathcal{H}$ denote a finite hypothesis class parameterized by discretized weight matrices $W_h \in \mathbbm{R}^{k \times d} $, such that $|\mathcal{H}| = \left(\frac{1}{\delta}\right)^{kd}$, where $\delta \in (0, 1)$ is the level of discretization. For an input example $x \in \mathbbm{R}^d$, a hypothesis $h \in \mathcal{H}$ makes its prediction as $\hat{y_t} = \argmax_k W_hx.$ Take $\mathcal{H}$ to be a set of experts. We will now show that an instance of the EWA, using $\mathcal{H}$ and learning rate $\eta$, run over 0-1 losses corresponds to an agnostic weak online learner with advantage parameter $\gamma = \frac{2\eta}{1 - e^{-\eta}}$ and regret $R_\mathcal{W}(T, k) = O(kd\log(\frac{1}{\delta}))$. Taking the discretization parameter $\delta = \frac{1}{T}$ ensures that the regret remains sublinear in $T$.
 
First, recall that for 0-1 losses $\tilde{L}_{y_t}(z_t) = \mathbbm{1}\{z_t \neq y_t\}$, the EWA algorithm $\mathcal{W}$ using hypothesis class $\mathcal{H}$ and learning rate $\eta > 0$ guarantees the regret bound

$$\E\left[\sum_{t=1}^T \tilde{L}_{y_t}(\mathcal{W}(x_t)) \right]  \leq \frac{\eta}{1-e^{-\eta}}\E\left[\min_{h \in \mathcal{H}}\sum_{t=1}^T \tilde{L}_{y_t}(h(x_t)) \right] + O(\log|\mathcal{H}|).$$
Next, for any values of $\ell_1, .., \ell_T$ where $\ell_t \neq y_t$, observe that 

$$\E\left[\sum_{t=1}^T L_{y_t, l_t}(\mathcal{W}(x_t)) \right] \leq \E\left[\sum_{t=1}^T \tilde{L}_{y_t}(\mathcal{W}(x_t)) \right]$$

and 

$$\frac{1}{2}\E\left[\min_{h \in \mathcal{H}}\sum_{t=1}^T \tilde{L}_{y_t}(h(x_t)) \right] \leq \E\left[\min_{h \in \mathcal{H}}\sum_{t=1}^T L_{y_t, \ell_t}(h(x_t)) \right].$$

Putting things together, we get

$$\E\left[\sum_{t=1}^T L_{y_t, l_t}(\mathcal{W}(x_t)) \right]  \leq \frac{2\eta}{1-e^{-\eta}}\E\left[\min_{h \in \mathcal{H}}\sum_{t=1}^T L_{y_t, \ell_t}(h(x_t)) \right] + O(\log |\mathcal{H}| ).$$

Finally, observing that $|\mathcal{H}| = \left(\frac{1}{\delta}\right)^{kd}$ completes the proof. As an example, if one takes $\eta = 1$, then the EWA algorithm corresponds to an agnostic weak online learner with advantage parameter $\gamma = \frac{2}{1 - e^{-1}} \approx 3.16$ and regret $R_\mathcal{W}(T, k) = O(kd\log T)$.

A similar procedure can be performed when considering the same multiclass classification setup when the hypothesis class $\mathcal{H}$ is the set of depth $1$  multiclass decision trees.  If one further restricts to a $k$-wise split on a feature $j \in [d]$, then the class of depth 1 decision trees can be abstractly represented by the function: 

$$f(x) = \begin{cases}
       y_1, & \text{if}\ -\infty < x_j \leq \tau_1 \\
       y_2, & \text{if}\ \tau_1 < x_j \leq \tau_2 \\
       ... \\
       y_k, & \text{if}\ \tau_{k-1} < x_j < \infty \\
    \end{cases}
$$

for input $x \in \mathbbm{R}^d$,  thresholds $\tau_1, ..., \tau_{k-1}$, and labels $y_1, ..., y_k$. If one discretizes the thresholds using the parameter $\delta \in (0, 1)$, then  $|\mathcal{H}| = O(d\cdot k^k \cdot (\frac{1}{\delta})^k)$. Therefore, EWA using this hypothesis class and 0-1 losses corresponds to an agnostic weak online learner with regret $R_\mathcal{W}(T, k) = O(k\log(kT) + \log d)$.

\subsection{Infinite Hypothesis Classes}
Our construction of weak learners for the two learning settings above crucially relied on the fact that the hypothesis class was finite. Below, we discuss how to construct agnostic weak online learners for infinite hypothesis classes. The key insight from the constructions above is that any agnostic weak online learner with advantage $\gamma > 1$ for the standard 0-1 loss can be converted into weak learner that satisfies an equivalent version of Definition \ref{WLC} with the non-negative loss functions $L_{y, l}(z)$ with twice the advantage, namely $2\gamma$. For finite hypothesis classes, we constructed weak learners for the 0-1 loss by running (randomized) EWA over $\mathcal{H}$ with a fixed learning rate $\eta$. Similarly, for infinite hypothesis classes, we can construct a weak learner for 0-1 losses by setting the learning rate $\eta$ to be a fixed positive constant in the optimal multiclass agnostic online learner proposed in \citet{daniely2015multiclass}. Concretely, the multiclass agnostic online learner proposed in Section 5 of \citet{daniely2015multiclass} runs the (randomized) EWA over a carefully constructed finite set of experts of size at most $(Tk)^{LDim(\mathcal{H})}$, where $LDim(\mathcal{H})$ is the Multiclass Littlestone Dimension of $\mathcal{H}$. Following an identical analysis as in \citet{daniely2015multiclass}, one can now show that if $\mathcal{W}$ is a EWA algorithm, then for 0-1 losses over this carefully constructed set of experts, $\mathcal{W}$ achieves regret

$$\E\left[\sum_{t=1}^T \tilde{L}_{y_t}(\mathcal{W}(x_t)) \right]  \leq \frac{\eta}{1-e^{-\eta}}\E\left[\min_{h \in \mathcal{H}}\sum_{t=1}^T \tilde{L}_{y_t}(h(x_t)) \right] + O(LDim(\mathcal{H})\log(Tk)),$$

where $\mathcal{H}$ is the hypothesis class of interest and $\eta > 0$. Picking $\eta = 1$, we get that the EWA $\mathcal{W}$ is an agnostic online weak learner for 0-1 losses with  advantage parameter $ \approx 1.58$. Thus, using the key insight mentioned above, $\mathcal{W}$ is also a agnostic online weak learner for the non-negative loss functions $L_{y, l}(z)$ with advantage parameter $\approx 3.16$. This suffices to show that $\mathcal{W}$ is also an agnostic online weak learner with respect to Definition \ref{WLC}.

\section{Important Lemmas}
\label{AppendixLemmas}
We give the important lemmas (and their proofs) used in the main text and Appendix \ref{AppendixStatAgnBoost} and \ref{AppendixRealBoost}. 

\begin{lemma} 
\label{expectation}

For any $t$ and $i$,
$\E\left[y_t^i\cdot \mathcal{W}_i(x_t)\right] = \E\left[p_t^i\cdot \mathcal{W}_i(x_t)\right]$. 

\begin{proof}
First note that $\E\left[y_t^i|p_t^i, y_t\right] = p_t^i$. Next note that $\mathcal{W}_i(x_t)$ and $y_t^i$ are conditionally independent given $p_t^i$ and $y_t$. Then, 

    \begin{align*}
        \E\left[y_t^i \cdot \mathcal{W}_i(x_t)\right] &= \E_{p^i_t, y_t}\left[\E\left[y_i^t \cdot \mathcal{W}_i(x_t)|p_t^i, y_t\right]\right] && \text{(law of total expectation)} \\
        &= \E_{p^i_t, y_t}\left[\E\left[y_t^i|p_t^i, y_t\right] \cdot \E\left[ \mathcal{W}_i(x_t)|p_t^i, y_t\right] \right] \\
        &= \E_{p^i_t, y_t}\left[p_t^i \cdot \E\left[ \mathcal{W}_i(x_t)|p_t^i, y_t\right] \right] \\
        &= \E_{p^i_t, y_t}\left[\E\left[p_t^i \cdot \mathcal{W}_i(x_t)|p_t^i, y_t\right] \right ]\\
        &= \E\left[p_t^i \cdot \mathcal{W}_i(x_t)\right]. 
    \end{align*}

\end{proof}
\end{lemma}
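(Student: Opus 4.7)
The plan is to verify the identity by a tower-of-expectations argument that exploits the specific order in which the algorithm generates its random variables. The key observation is that, within round $t$ of Algorithm~\ref{OnlineAgnosticBoostFull}, the weak learner's prediction $\mathcal{W}_i(x_t)$ is produced in line 3 (it depends only on $x_t$ and on the \emph{previously} passed relabeled examples $(x_s, y_s^i)$ for $s < t$), whereas the sample $y_t^i$ is drawn freshly in line 10 after $p_t^i$ has already been fixed by the OCO oracle. Hence both $\mathcal{W}_i(x_t)$ and $p_t^i$ are measurable with respect to the $\sigma$-algebra $\mathcal{F}_t^i$ generated by all the randomness available just before $y_t^i$ is sampled, while $y_t^i \mid \mathcal{F}_t^i$ has distribution $p_t^i$.

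First I would condition on $\mathcal{F}_t^i$ (it suffices, as the author does, to condition on the pair $(p_t^i, y_t)$, once one notes the conditional independence of $y_t^i$ and $\mathcal{W}_i(x_t)$ given this pair). Under this conditioning, $\mathcal{W}_i(x_t)$ is a constant vector, and by the sampling rule in line 10 together with the identification of the label with its basis vector $e_{y_t^i}$, we have $\mathbb{E}[y_t^i \mid \mathcal{F}_t^i] = \sum_{\ell \in [k]} p_t^i[\ell]\, e_\ell = p_t^i$. Therefore
\[
\mathbb{E}\bigl[y_t^i \cdot \mathcal{W}_i(x_t) \,\big|\, \mathcal{F}_t^i\bigr] \;=\; \mathbb{E}[y_t^i \mid \mathcal{F}_t^i] \cdot \mathcal{W}_i(x_t) \;=\; p_t^i \cdot \mathcal{W}_i(x_t).
\]
Taking outer expectations and using the tower property yields the desired equality.

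The main obstacle, such as it is, is simply bookkeeping: making sure the conditional independence is justified by the algorithm's structure (in particular, that $\mathcal{W}_i(x_t)$ is not contaminated by the freshly drawn $y_t^i$, and that $p_t^i$ was produced by $\mathcal{A}$ from losses $l_t^1,\ldots,l_t^{i-1}$ which themselves do not involve $y_t^i$). Once this measurability picture is set up, the remainder is a one-line computation and a single application of the tower property.
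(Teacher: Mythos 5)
Your proof is correct and takes essentially the same route as the paper: condition on the information available just before $y_t^i$ is sampled, use $\mathbb{E}[y_t^i\mid\cdot]=p_t^i$, and apply the tower property. Your choice to condition on the full pre-sampling $\sigma$-algebra $\mathcal{F}_t^i$, under which $\mathcal{W}_i(x_t)$ is measurable, is a marginally cleaner justification than the paper's appeal to conditional independence given $(p_t^i,y_t)$, but the argument is the same.
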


\begin{lemma}
\label{lowerbound}
For any pair $h, y \in \mathcal{B}_k$ and $p \in \Delta_k$, 

\[p \cdot (2h - 2y) \geq 2 (h\cdot y - 1)\]

\begin{proof}
Consider the case when $h = y$. Then the left and right side of the above inequality hold with equality at the value of $0$ for any vector $p$ in the simplex. Now, consider the case when $h \neq y$. Then, observe that setting $p = y$ achieves the minimum of $p \cdot (2h - 2y)$ at $-2$ which matches the right-hand side. 
\end{proof}
\end{lemma}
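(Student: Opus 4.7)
The plan is to prove the inequality by a direct two-case split on whether the basis vectors $h$ and $y$ coincide, exploiting the facts that (i) any basis vector in $\mathcal{B}_k$ has exactly one coordinate equal to $1$ and the rest equal to $0$, and (ii) coordinates of $p \in \Delta_k$ are nonnegative and sum to $1$. Both of these reduce the inner products $p \cdot h$, $p \cdot y$, and $h \cdot y$ to simple expressions in individual coordinates of $p$.

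First, I would handle the easy case $h = y$. Here the left-hand side $p \cdot (2h - 2y) = p \cdot 0 = 0$, and since $y$ is a unit basis vector $h \cdot y = y \cdot y = 1$, so the right-hand side also equals $2(1-1) = 0$. The inequality holds with equality regardless of $p$.

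Next I would handle the case $h \neq y$. In this case $h \cdot y = 0$ because distinct basis vectors are orthogonal, so the right-hand side is exactly $-2$. On the left side, writing $h = e_a$ and $y = e_b$ for distinct indices $a,b \in [k]$, we get $p \cdot (2h - 2y) = 2(p_a - p_b)$. Since $p \in \Delta_k$, both $p_a \ge 0$ and $p_b \le 1$, hence $p_a - p_b \ge -1$, which gives $p \cdot (2h - 2y) \ge -2$, matching the right-hand side. The minimum is attained at $p = y = e_b$, which is exactly the extremal case pointed out in the statement's remark.

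There is no substantive obstacle here; the only thing to be careful about is not overlooking the ``equality case'' $h = y$, where $h \cdot y = 1$ rather than $0$. Combining the two cases gives the inequality in full generality, completing the proof.
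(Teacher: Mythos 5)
Your proof is correct and follows essentially the same two-case split ($h=y$ versus $h\neq y$) as the paper's own argument. You make the second case slightly more explicit by writing the left-hand side in coordinates as $2(p_a - p_b)$ and bounding it below by $-2$, whereas the paper simply asserts that $p=y$ attains the minimum; the content is the same.
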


\begin{lemma}
\label{upperbound}
For every $0 < \gamma \leq 1$, $h \in \Delta_k$, and $y \in \mathcal{B}_k$, there exists $p \in \Delta_k$ such that:

\[p\cdot \left(\frac{2h-\mathbbm{1}_k}{\gamma} - (2y - \mathbbm{1}_k)\right) \leq 2\left(\prod\left(\frac{h}{\gamma}\right)\cdot y - 1\right),\]

where $\prod: \mathbbm{R}^k \rightarrow \Delta_k$ is the $L_2$ projection operator onto the simplex. 
\end{lemma}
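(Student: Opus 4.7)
The plan is to reduce the inequality to a one-dimensional question over vertices of $\Delta_k$, then exploit the water-filling form of the $L_2$ projection and finish by a short case split on the support of $p^* := \prod(h/\gamma)$.

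Concretely, set $v := h/\gamma$ (so $v \ge 0$ and $\|v\|_1 = 1/\gamma$) and write $y = e_j$. Using $p \cdot \mathbbm{1}_k = 1$, the desired inequality rearranges to
\[
p \cdot c \;\le\; 2p^*_j + 1/\gamma - 3, \qquad c_i := 2v_i - 2\,\mathbbm{1}\{i=j\}.
\]
Since $p \mapsto p \cdot c$ is linear on $\Delta_k$, its minimum equals $\min_i c_i$, so it suffices to exhibit a single index $i$ meeting the bound. I will use the KKT/water-filling characterization: there exists $\lambda \ge 0$ with $p^*_i = (v_i - \lambda)_+$; in particular $v_i - p^*_i \le \lambda$ for every $i$, and, letting $S := \{i : p^*_i > 0\}$, $\lambda = \bigl(\sum_{i \in S} v_i - 1\bigr)/|S| \le (1-\gamma)/(|S|\gamma)$.

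When $|S| \ge 2$ this gives $\lambda \le (1-\gamma)/(2\gamma)$, so $p = y$ works: $c_j = 2(v_j - p^*_j) + 2p^*_j - 2 \le 2\lambda + 2p^*_j - 2 \le 2p^*_j + 1/\gamma - 3$. The delicate regime is $|S| = 1$, say $S = \{j^*\}$, where $\lambda = v_{j^*} - 1$ can be as large as $(1-\gamma)/\gamma$ and the easy bound fails. If $j \ne j^*$ (so $p^*_j = 0$), combining the budget $v_{j^*} + v_j \le \|v\|_1 = 1/\gamma$ with the support gap $v_{j^*} \ge v_j + 1$ forces $v_j \le (1-\gamma)/(2\gamma)$, so $p = y$ still closes the bound. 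If $j = j^*$ (so $p^*_j = 1$), I instead take $p = e_{i^*}$ for $i^* \in \argmin_{i \ne j} v_i$; writing $v^* := v_{i^*}$, the inequality $(k-1) v^* \le \sum_{i \ne j} v_i = 1/\gamma - v_j$ together with $v_j \ge v^* + 1$ gives $k v^* \le (1-\gamma)/\gamma$, hence $v^* \le (1-\gamma)/(k\gamma) \le (1-\gamma)/(2\gamma)$ for $k \ge 2$, which suffices.

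The main obstacle is exactly this degenerate $|S| = 1$ regime, where $p^*$ is a vertex of $\Delta_k$ and the convenient bound $\lambda \le (1-\gamma)/(2\gamma)$ breaks. What rescues the argument is that $|S| = 1$ forces a large gap $v_{j^*} - v_i \ge 1$ against \emph{every} other coordinate, so the budget constraint $\|v\|_1 = 1/\gamma$ together with pigeonhole pins down exactly the quantity ($v_j$ if $j \ne j^*$, or $\min_{i \ne j} v_i$ if $j = j^*$) needed in each sub-case.
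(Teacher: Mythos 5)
Your proof is correct and follows essentially the same route as the paper's: both rest on the threshold (water-filling) characterization $\prod(v)=[v-\lambda\mathbbm{1}_k]_+$ of the $L_2$ projection, split on the support size of the projection, and take $p=y$ except in the degenerate case where all projected mass lands on $y$. Your version is slightly more streamlined — the bound $v_i-p^*_i\le\lambda\le(1-\gamma)/(|S|\gamma)$ handles the paper's two sub-cases of the $|S|\ge 2$ regime at once, and the direct pigeonhole argument in the $j=j^*$ sub-case avoids both the paper's contradiction argument and its separate treatment of $k=2$ — but the underlying ideas are the same.
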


\begin{proof} We first review an important property of the $L_2$ projection onto the simplex. It is known that the solution to the $L_2$ projection $\prod (\cdot)$ onto the simplex is just a threshold operator (see Theorem 2.2 in \cite{chen2011projection}): for any input vector $g$, subtract a constant $\mu$ from each element. If the result is negative, replace it by zero. More formally, $\prod (g) = \left[g - \mu \mathbbm{1}_k\right]_{+}.$ Note that $\mu$ must satisfy the piecewise linear equation $\sum_{i=1}^k \max(0, g_i - \mu) = 1.$ Let $g^{'}$ denote the sorted version of $g$ in decreasing order. Then, define $K$ as the largest integer within $\{1, 2, ..., k\}$ such that $g^{'}_K - \frac{\sum_{j=1}^{K}g^{'}_j - 1}{K} > 0$. One can verify that $\mu = \frac{\sum_{j=1}^Kg^{'}_j - 1}{K}$ is the unique solution to the piece-wise linear equation above. We will use this property of $L_2$ projection extensively in our proof, which we formally begin below. 

Define $\tilde{h} = \prod\left(\frac{h}{\gamma}\right)$. 
Let $t$ be the number of zero entries in the projection $\tilde{h}$. Furthermore, define $h_y = h \cdot y$ and $\tilde{h}_y = \tilde{h} \cdot y$ as the $y$'th index of $h$ and $\prod(\frac{h}{\gamma})$ respectively. We will show that when $t \leq k - 2$, setting $p = y$ achieves the desired bound and when $t = k-1$, there exists some $p \in \Delta_k$ that achieves the desired bound.

We now begin with the case where $t \leq k - 2$. Our goal will be to first show the following lower bound on $\tilde{h}_y$,

$$\tilde{h}_y \geq \frac{(k-t)h_y - 1}{\gamma(k - t)} + \frac{1}{k-t}.$$

Consider the subcase where $\tilde{h}_y = 0$. Then, by the properties of the projection operator above it must have been the case that $\frac{h_y}{\gamma}  \leq \mu,$ where using the definition above $\mu = \frac{\frac{1}{\gamma}\sum_{j=1}^{k-t} h^{'}_j - 1}{k-t}.$ Therefore, $h_y \leq \frac{\sum_{j=1}^{k-t} h^{'}_j - \gamma}{k-t} \leq \frac{1 - \gamma}{k-t}.$ Substituting in, we find

$$\frac{(k-t)h_y - 1}{\gamma(k - t)} + \frac{1}{k-t} \leq 0 = \tilde{h}_y.$$

Now, consider the subcase where $\tilde{h}_y > 0$. Then, $\tilde{h}_y = \frac{h_y}{\gamma} - \mu,$ where again

$$\mu = \frac{\frac{1}{\gamma}\sum_{j=1}^{k-t} h^{'}_j - 1}{k-t} \leq \frac{1-\gamma}{\gamma(k - t)}.$$

Substituting in completes proving the lowerbound 

$$\tilde{h}_y = \frac{h_y}{\gamma} - \mu \geq \frac{h_y}{\gamma} - \frac{1-\gamma}{\gamma(k - t)} = \frac{(k-t)h_y - 1}{\gamma(k - t)} + \frac{1}{k-t}.$$

Now that we have shown, 

$$\tilde{h}_y = \frac{h_y}{\gamma} - \mu \geq \frac{(k-t)h_y - 1}{\gamma(k - t)} + \frac{1}{k-t},$$

we finally show this implies the inequality

$$\left(\frac{2h_y - 1}{\gamma} - 1\right) \leq 2\tilde{h}_y - 2,$$

which follows from plugging $p = y$ into the lemma. We start with 

\begin{align*}
    2\tilde{h}_y - \frac{2h_y -1}{\gamma} - 1 &\geq \frac{2(k-t)h_y - 2}{\gamma(k - t)} + \frac{2}{k-t} - \frac{2h_y -1}{\gamma} - 1\\
    &=  -\frac{2}{\gamma(k-t)} + \frac{1}{\gamma} + \frac{2}{k-t} - 1\\
    &= (\frac{1}{\gamma} - 1) - \frac{2}{k-t}(\frac{1}{\gamma} - 1)\\
    &= (\frac{1}{\gamma} - 1)(1 - \frac{2}{k-t}) \geq 0,
\end{align*}

where the last inequality follows from the assumption that $ t \leq k - 2$. Thus, for the case where $ t \leq k - 2$, we have shown the lemma holds by setting $p = y$.

Now we will consider the case where $t = k - 1$. Again, we will also consider the subcases where $\tilde{h}_y = 0$ and $\tilde{h}_y = 1$. We start with the subcase where $\tilde{h}_y = 0$. Here, we will show that picking $p = y$ is the right choice. Namely,  we will show that the following inequality holds

$$\left(\frac{2h_y - 1}{\gamma} - 1\right) \leq 2\tilde{h}_y - 2.$$

Under the assumption that $\tilde{h}_y = 0$, the right hand side collapses to $-2$. Thus, we need to show that $\frac{2h_y - 1}{\gamma} \leq -1.$ Recall that if $\tilde{h}_y = 0$, then by projection properties. $\frac{h_y}{\gamma}  \leq \mu.$ Again, by definition, 

$$\mu = \frac{\frac{1}{\gamma}\sum_{j=1}^{k-t} h^{'}_j - 1}{k-t} = \max_{j}\frac{h_j}{\gamma} - 1 \leq \frac{1 - h_y}{\gamma} - 1.$$

Therefore we find that, $\frac{h_y}{\gamma} \leq \frac{1 - h_y}{\gamma} - 1,$ from which it is easy to see that $\frac{2h_y - 1}{\gamma} \leq -1,$ which completes this subcase. 

We now move to the subcase where $\tilde{h}_y = 1$. Here, we will show that there exists a $p \in \mathcal{B}_k \setminus \{y\}$ that satisfies the required bound in the lemma. When $\tilde{h}_y = 1$, the right hand side collapses to $0$. Thus, we need to show the existence of $p \in \mathcal{B}_k \setminus \{y\}$ s.t. 

$$\frac{2h_p - 1}{\gamma} + 1 \leq 0,$$

where $h_p$ denotes $h \cdot p$, the value of $h$ at the $p$'th index. We shall prove the bound above via contradiction. Assume that indeed for all $p \in \mathcal{B}_k \setminus \{y\}$,

$$\frac{2h_p - 1}{\gamma} + 1 > 0.$$

This implies that  $h_p > \frac{1-\gamma}{2}$ for all $p \in \mathcal{B}_k \setminus \{y\}$. Observe that when $k \geq 3$, the probability mass over all labels other than $y$ is \textit{strictly} bounded below by $1-\gamma$ and so it must be that $h_y  < \gamma$.  Recall that if $\tilde{h}_y = 1$, then by projection properties, 

$$\mu = \frac{\frac{1}{\gamma}\sum_{j=1}^{k-t} h^{'}_j - 1}{k-t} = \frac{h_y}{\gamma} - 1 < 0,$$

where the last inequality follows from the fact that $h_y$ must have mass strictly less than $\gamma$. However, if $\mu < 0$, then since the solution to the projection operation is $\left[\frac{h}{\gamma} - \mu \mathbbm{1}_k\right]_{+}$ and the entries of $h$ are non-negative, all entries of $\frac{h}{\gamma} - \mu \mathbbm{1}_k$ are strictly positive. This contradicts our original assumption that $t = k - 1$ (implying that all but one index are negative) which completes the proof for $k \geq 3$. 

Now, when $k = 2$, $\mu = \frac{h_y}{\gamma} - 1$, and by projection properties, 

$$\frac{h_p}{\gamma} - \mu = \frac{h_p - h_y}{\gamma} + 1 \leq 0.$$

Noting that $h_y = 1 - h_p$, completes the proof since it implies that for $p \neq y$, $\frac{2h_p - 1}{\gamma} + 1 \leq 0.$
\end{proof}

\begin{lemma}
\label{upperbound_real}
For every $0 < \gamma \leq 1$, $h \in \Delta_k$, and $y \in \mathcal{B}_k$, there exists $p \in [0, 1]$ such that:

\[p\left(\frac{2h\cdot y-1}{\gamma} - 1\right) \leq 2\left(\prod\left(\frac{h}{\gamma}\right)\cdot y - 1\right),\]

where $\prod: \mathbbm{R}^k \rightarrow \Delta_k$ is the $L_2$ projection operator onto the simplex.
\end{lemma}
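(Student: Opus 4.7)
The plan is to reduce to a scalar one-variable problem and then split on the sign of the coefficient of $p$. Let $h_y := h \cdot y \in [0,1]$ and $\tilde h_y := \Pi(h/\gamma) \cdot y \in [0,1]$, so that the desired inequality reads
\[
p\left(\tfrac{2h_y-1}{\gamma}-1\right) \;\le\; 2\tilde h_y - 2.
\]
Since the right-hand side is non-positive and the left-hand side is linear in $p \in [0,1]$, it is optimal to choose $p=1$ when the coefficient $A:=\tfrac{2h_y-1}{\gamma}-1$ is non-positive, and $p=0$ when $A$ is positive. The threshold is $h_y = \tfrac{1+\gamma}{2}$.

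First I would handle Case 2 ($h_y > \tfrac{1+\gamma}{2}$), where I set $p=0$ and must show $\tilde h_y = 1$. Since $h \in \Delta_k$, we have $\max_{j\ne y} h_j \le 1-h_y < \tfrac{1-\gamma}{2} \le h_y - \gamma$. Using the threshold formulation of the $L_2$ projection recalled in the proof of Lemma \ref{upperbound}, this strict gap $h_y - h_j > \gamma$ for every $j\ne y$ forces the projection $\Pi(h/\gamma)$ to concentrate all mass on coordinate $y$, i.e., $\tilde h_y = 1$. Then $A > 0$, so $p=0$ gives LHS $=0 = 2\tilde h_y - 2$, as required.

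Next I would handle Case 1 ($h_y \le \tfrac{1+\gamma}{2}$) by choosing $p=1$. The goal reduces to proving the pointwise bound
\[
\tilde h_y \;\ge\; \tfrac{2h_y + \gamma - 1}{2\gamma}.
\]
Here I would reuse the analysis of the threshold operator from Lemma \ref{upperbound}: letting $t$ denote the number of zero entries of $\tilde h$ and $\mu$ the projection threshold, whenever $\tilde h_y > 0$ one has $\tilde h_y = h_y/\gamma - \mu$ with $\mu \le \tfrac{1-\gamma}{\gamma(k-t)}$, giving $\tilde h_y \ge \tfrac{h_y}{\gamma} + \tfrac{\gamma-1}{\gamma(k-t)}$. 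Minimizing in $k-t$ over the admissible range $k-t \ge 2$ (since the case $k-t = 1$ forces $\tilde h_y \in \{0,1\}$) yields exactly $\tilde h_y \ge \tfrac{2h_y+\gamma-1}{2\gamma}$. The degenerate case $\tilde h_y = 0$ only occurs when some coordinate $j^\ast \ne y$ has $h_{j^\ast} \ge \tfrac{1+\gamma}{2}$ (by the same argument as in Case 2 applied to $j^\ast$); this forces $h_y \le \tfrac{1-\gamma}{2}$, so $\tfrac{2h_y+\gamma-1}{2\gamma} \le 0 = \tilde h_y$ and the bound still holds. The subcase $\tilde h_y = 1$ is immediate since the right-hand side is at most $1$ under $h_y \le \tfrac{1+\gamma}{2}$.

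The main obstacle is the projection bookkeeping in Case 1, where the interaction between the number of surviving coordinates $k-t$ and the projection threshold $\mu$ must be controlled to produce a clean lower bound on $\tilde h_y$; fortunately almost all of that work has already been done inside the proof of Lemma \ref{upperbound}, and I would simply quote the relevant computation rather than repeat it.
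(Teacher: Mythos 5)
Your overall strategy is sound and rests on the same foundation as the paper's, but the paper's own proof is a two-line reduction: it observes that in the three cases of Lemma \ref{upperbound} where $p=y$ was the chosen witness, substituting $p=y$ there yields precisely the scalar inequality $\frac{2h\cdot y-1}{\gamma}-1\leq 2\left(\prod\left(\frac{h}{\gamma}\right)\cdot y-1\right)$ that the choice $p=1$ requires here, while in the one remaining case ($\tilde h_y=1$ with $t=k-1$) the right-hand side is $0$, so $p=0$ suffices. You instead re-partition by the sign of the coefficient of $p$ and re-derive the bound $\tilde h_y\geq\frac{2h_y+\gamma-1}{2\gamma}$ from the threshold characterization of the projection; that is a legitimate and arguably cleaner reorganization (note your partition genuinely differs from the paper's: $\tilde h_y=1$ can occur with $h_y\leq\frac{1+\gamma}{2}$, in which case you take $p=1$ where the paper takes $p=0$; both work). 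Your Case 2 argument, that $h_y-h_j>\gamma$ for all $j\neq y$ forces the projection to concentrate on $y$, is correct. One misstatement needs repair in Case 1: it is not true that $\tilde h_y=0$ \emph{only} occurs when some $h_{j^*}\geq\frac{1+\gamma}{2}$ --- e.g.\ for $k=3$, $\gamma=0.1$, $h=(0.4,0.3,0.3)$ the projection puts all mass on the first coordinate although $0.4<0.55$; the implication in your Case 2 goes only one way. The conclusion you actually need, $h_y\leq\frac{1-\gamma}{2}$ whenever $\tilde h_y=0$, is nonetheless true: if $k-t\geq 2$ it follows from $h_y/\gamma\leq\mu\leq\frac{1-\gamma}{2\gamma}$, and if $k-t=1$ with all mass on $j^*$ the threshold condition gives $h_{j^*}-h_y\geq\gamma$, which combined with $h_y+h_{j^*}\leq 1$ yields $h_y\leq\frac{1-\gamma}{2}$. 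With that patch your argument is complete.
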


\begin{proof}
   Observe that we can consider the same four cases as in the proof for Lemma \ref{upperbound}. Indeed, for three out of the four cases, the optimal choice for Lemma \ref{upperbound} was selecting $p = y$. Under these three cases, we know that by picking $p = y$ the following inequality holds by substituting $p = y$ into Lemma  \ref{upperbound}:
   
   $$\frac{2h\cdot y-1}{\gamma} - 1 \leq 2\left(\prod\left(\frac{h}{\gamma}\right)\cdot y - 1\right).$$
   
   Thus, for this lemma, for the same three cases of Lemma \ref{upperbound}, it suffices to pick $p = 1$. In the last case of Lemma \ref{upperbound}, $\tilde{h}_y = 1$, that is, all the mass after the projection falls on the label $y$. In this case, the right hand side of our inequality collapses to $0$. Thus it suffices to pick $p = 0$ for this lemma to hold.

\end{proof}




\begin{lemma}
\label{equivalence}
For $0 < \gamma \leq 1$, $h \in \Delta_k$, $y \in \mathcal{B}_k$, and all $p \in \Delta_k$:

$$p \cdot \left(\frac{2h - \mathbbm{1}_k}{\gamma} - (2y - \mathbbm{1}_k)\right) = (2p - \mathbbm{1}_k)\cdot (\frac{h}{\gamma} - y)$$

\end{lemma}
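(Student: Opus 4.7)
The plan is to prove the identity by direct expansion of both sides, relying only on the fact that all three vectors $p$, $h$, and $y$ sum to $1$ (since $p, h \in \Delta_k$ and $y \in \mathcal{B}_k \subset \Delta_k$). In particular, $p \cdot \mathbbm{1} = h \cdot \mathbbm{1} = y \cdot \mathbbm{1} = 1$, which will cause several constant terms to collapse nicely.

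First I would distribute the dot product on the left-hand side to obtain
\[
p \cdot \left(\tfrac{2h - \mathbbm{1}}{\gamma} - (2y - \mathbbm{1})\right) = \tfrac{2\,p\cdot h}{\gamma} - \tfrac{p \cdot \mathbbm{1}}{\gamma} - 2\, p \cdot y + p \cdot \mathbbm{1},
\]
and then substitute $p \cdot \mathbbm{1} = 1$ to get $\tfrac{2 p \cdot h}{\gamma} - \tfrac{1}{\gamma} - 2\, p \cdot y + 1$.

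Next I would distribute the right-hand side:
\[
(2p - \mathbbm{1}) \cdot \left(\tfrac{h}{\gamma} - y\right) = \tfrac{2\,p\cdot h}{\gamma} - 2\, p \cdot y - \tfrac{\mathbbm{1} \cdot h}{\gamma} + \mathbbm{1} \cdot y,
\]
and then substitute $\mathbbm{1} \cdot h = 1$ and $\mathbbm{1} \cdot y = 1$ to get $\tfrac{2 p \cdot h}{\gamma} - 2\, p \cdot y - \tfrac{1}{\gamma} + 1$. Since both sides reduce to the same expression, the identity follows.

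There is no real obstacle here: the identity is purely algebraic and holds as soon as the three vectors involved each have coordinate sum $1$. The only thing to be careful about is tracking the $\tfrac{1}{\gamma}$ factors and signs correctly when distributing; no properties of $\gamma \in (0, 1]$ or of the particular basis-vector structure of $y$ beyond $\mathbbm{1} \cdot y = 1$ are actually needed.
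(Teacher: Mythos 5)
Your proof is correct and is essentially the same as the paper's: both are direct algebraic expansions relying only on $p \cdot \mathbbm{1} = h \cdot \mathbbm{1} = y \cdot \mathbbm{1} = 1$. The only cosmetic difference is that you expand both sides and meet in the middle, whereas the paper rewrites the left-hand side step by step into the right-hand side.
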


\begin{proof}
    \begin{align*}
        p \cdot \left(\frac{2h - \mathbbm{1}_k}{\gamma} - (2y - \mathbbm{1}_k)\right) &= p \cdot \left(\frac{2h - \mathbbm{1}_k}{\gamma} \right) - p \cdot \left(2y - \mathbbm{1}_k\right)\\
        &= 2p \cdot \frac{h}{\gamma} - \frac{1}{\gamma} - 2p \cdot y + 1\\
        &= 2p \cdot (\frac{h}{\gamma} - y) - \frac{1}{\gamma} + 1\\
        &= 2p \cdot (\frac{h}{\gamma} - y) - \mathbbm{1}_k \cdot (\frac{h}{\gamma} - y)\\
        &= (2p - \mathbbm{1}_k)\cdot (\frac{h}{\gamma} - y).
    \end{align*}
\end{proof}
\section{Experimental Details}
\label{ExpDet}
Each cell in Table \ref{tab:data} is the average over the "best accuracies" of five independent shuffles of the dataset. For each shuffle, the "best accuracy" was computed as the maximum accuracy over five candidate $\gamma$ values.  For both OCO-based boosting algorithms, $\gamma$ was tuned across $[0.10, 0.30, 0.50, 0.70, 1]$, and for OnlineMBBM, $\gamma$ was tuned across $[0.001, 0.01, 0.05, 0.1, 0.3]$. The range of $\gamma$ values used to tune OnlineMBBM is consistent with those used by \citet{jung2017online}. In addition, in our own experiments, we found that for $\gamma$ values larger than 0.30, OnlineMBBM performed significantly worse. Note that AdaBoost.OLM does have not have a tuning parameter $\gamma$, which is one of its advantages despite its subpar performance. 

For both OCO-based boosting algorithms, we used Online Gradient Descent with learning rate $\eta = \frac{\gamma}{\sqrt{N}}$ as the OCO algorithm. This learning rate is optimal up to constant factors \cite{zinkevich2003online}.All computations were carried out on a Nehalem architecture 10-core 2.27 GHz Intel Xeon E7-4860 processors with 25 GB RAM per core. The total amount of computing time was around 500 hours. The River package \cite{2020river}, used to implement the VeryFastDecisionTree, is licensed under BSD 3-Clause. 

 Table \ref{tab:dataset_summary} provides more information about each of the datasets used. For the Mice dataset, entries with missing data were replaced with the average value of their respective column.

\begin{table}
\caption{Dataset summaries.}
\begin{center}
\begin{tabular}{| c | c | c | c |}
\hline
\textbf{Dataset} & \textbf{\#datapoints} & \textbf{\#covariates} & \textbf{\#classes} \\
\hline

Balance & 625 & 4 & 3 \\
Cars & 1728 & 6  & 4\\ 
LandSat & 6435  & 36 & 6\\ 
Segmentation & 2310 & 19 & 7\\ 
Mice & 1080 & 82  & 8 \\ 
Yeast & 1483 & 8 & 10\\ 
Abalone & 4177 & 8  & 28\\

\hline
\end{tabular}
\end{center}
\label{tab:dataset_summary}
\end{table}

Table \ref{tab:SE} provides the standard error for each accuracy in Table \ref{tab:data}. 

\begin{table}
\caption{Standard errors of accuracies.}
\begin{center}
\begin{tabular}{| c | c | c | c | c |}
\hline
\textbf{Dataset} & \multicolumn{4}{ c |}{\textbf{Standard Errors of Accuracies}}  \\ 
\cline{2-5}
& Agn & Opt & Ada & OCOR \\
\hline

Balance & 0.68 & 1.06 & 1.27 & 0.84 \\
Cars & 0.46 & 0.02  & 0.55  & 0.32\\ 
LandSat & 0.89  & 0.36 & 1.23 & 0.21\\ 
Segmentation & 0.70 & 0.48 & 1.56 & 0.54\\ 
Mice & 0.58 & 0.60  & 2.53 & 0.26 \\ 
Yeast & 0.43 & 1.03 & 1.09 & 1.53\\ 
Abalone & 0.53 & 0.42  & 0.28 & 0.31\\

\hline
\end{tabular}
\end{center}
\label{tab:SE}
\end{table}

\end{document}